
\documentclass[letterpaper, 10 pt, conference]{ieeeconf}  

\IEEEoverridecommandlockouts                              
\overrideIEEEmargins

\usepackage{times}
\usepackage[usenames,dvips]{color}
\usepackage{amsmath}
\usepackage{amssymb}
\usepackage{amsfonts}
\usepackage[font={small}]{caption}
\usepackage{subcaption}
\usepackage{graphicx}
\usepackage{multicol}
\usepackage{multirow}
\usepackage{bm}

\def\vec#1{{\bf #1}}
\def\S{{\mathbb{S}}}
\def\S2{{\mathbb{S}^2}}
\def\S3{{\mathbb{S}^3}}

\def\bingham{{\rm Bingham}}

\usepackage{amsthm}

\theoremstyle{definition}
\newtheorem{claim}{Claim}


\title{\LARGE \bf
Bingham Procrustean Alignment for Object Detection in Clutter
}


\author{Jared Glover and Sanja Popovic
\thanks{Computer Science and Artificial Intelligence Laboratory,
Massachusetts Institute of Technology, Cambridge, MA 02139
\tt\small\{jglov,sanja\}@mit.edu}}

\begin{document}

\maketitle
\thispagestyle{empty}
\pagestyle{empty}

\begin{abstract}

A new system for object detection in cluttered RGB-D images is presented.
Our main contribution is a new method called Bingham Procrustean Alignment (BPA)
to align models with the scene.  BPA uses point correspondences between oriented features to
derive a probability distribution over possible model poses.  The orientation component of
this distribution, conditioned on the position, is shown to be a Bingham distribution.
This result also applies to the classic problem of least-squares alignment of point sets,
when point features are orientation-less, and gives a principled, probabilistic way to measure
pose uncertainty in the rigid alignment problem.  Our detection system leverages BPA to achieve
more reliable object detections in clutter.

\if 0
A new sampling-based method to detect object instances in cluttered RGB-D images is presented.
Potential model placements are refined by growing and aligning sets of point-feature
correspondences between the model and the scene.  The alignment algorithm is able to use
both 2-D image features as well as 3-D geometric features by combining depth with RGB data to
estimate a 3-D pose for every feature.  Each 3-D pose estimate comes with an accompanying
estimate of uncertainty---Gaussian for position, Bingham for orientation.  These uncertainty
models, together with a new result connecting least-squares alignments with the Bingham
distribution, allow us to fuse orientation information from least-squares alignments of
feature sets with information from feature pose estimates.  The result is a detection algorithm
that can find and localize objects using very few features, allowing it to detect objects in
unprecedented levels of clutter and occlusion.  Results are presented on the
Willow Garage/NIST Solutions in Perception data set, and on a new data set of cluttered RGB-D
images.
\fi
\end{abstract}

\section{Introduction}

\if 0
Full, category-level object recognition has long been an elusive goal for the AI community.  Much progress has been made over the years, and accurate recognition systems can now be made for constrained environments, such as for self-driving cars, or factory assembly line robots, even though general-purpose, category-level object recognition remains elusive.  In household robotics, where manipulating everyday objects is the primary goal, the robotics community has recently shifted to a much simpler goal--recognizing known object instances, and estimating their 3-D positions and orientations in space (so that the robot can manipulate them!).  Even before the introduction of the Microsoft Kinect camera in 2010, much progress had already been made towards this goal---for example, in work on spin images in the 1990s [], which used 3-D geometric features, and more recently, using SIFT features on image data [].  Since 2010, progress towards object instance recognition has rapidly accelerated, as nearly every robotics research group in the world now has access to good-quality RGB-D data for the first time in history.  This has given us as object recognition researchers a unique opportunity to finally solve one of the most pressing needs in robotics---the need for reliable object instance recognition and 3-D pose estimation.

But there are many challenges along the way.  Real scenes are full of clutter and ambiguity.  Many common objects do not have the unique local features which we have come to rely upon in object recognition, and even those that do may not have visible unique features from every viewpoint.  Objects can be shiny, or transparent.  Data from the Kinect and related sensors isn't perfect, and their noise process can be complex to model.  A complete system must combine both geometric and color information, must reason about the relationships between objects (e.g., to avoid putting two objects in the same space), and must report confidence in its predictions, including any ambiguities that it detects.  Sometimes this will mean returning several different possible scene interpretations (sets of object detections), and sometimes this will just mean returning a local model of object pose uncertainty.  It must also be fast.  No existing system does all of these things well yet.

In this paper, we will present a system with two main contributions that we hope will bring the field closer to its goal of reliable object instance recognition in clutter.  The first contribution is a novel object alignment algorithm, called Iterative Bingham Procrustean Alignment (IBPA), which uses sparse (but not necessarily unique), oriented 3-D features to very quickly align an object model with an observed scene.  The second contribution is a sequence (i.e., a cascade) of highly discriminative model placement evaluation functions, which combine many different feature types---from positions and surface normals to RGB and depth image edges---with a sophisticated sensor noise model.  The combination of a fast object alignment algorithm together with a trained cascade of complex evaluation functions leads to ((( fingers crossed! ))) state-of-the-art performance on both existing data sets and on a new data set which we generated to contain as much occlusion and clutter as possible, while still maintaining enough information for a human to detect the objects in each image.
\fi

Detecting known, rigid objects in RGB-D images relies on being able to align 3-D object models with an observed scene.
If alignments are inconsistent or inaccurate, detection rates will suffer.  In noisy and cluttered scenes (such as shown
in figure~\ref{fig:scope_samples1}), good alignments must rely on multiple cues, such as 3-D point positions, surface
normals, curvature directions, edges, and image features.  Yet there is no existing alignment method (other than brute
force optimization) that can fuse all of this information together in a meaningful way.

The Bingham distribution\footnote{See the appendix for a brief overview.} has recently been shown to be useful for fusing
orientation information for 3-D object detection~\cite{Glover11}.  In this paper, we derive a surprising result connecting
the Bingham distribution to the classical least-squares alignment problem, which allows our new system to easily fuse
information from both position and orientation information in a principled, Bayesian alignment system which we call
Bingham Procrustean Alignment (BPA).

\subsection{Background}

Rigid alignment 
of two 3-D point sets $X$ and $Y$ is a well-studied
problem---one seeks an optimal (quaternion) rotation $\vec{q}$ and
translation vector $\vec{t}$ to minimize an alignment cost function, such as sum of squared errors between
corresponding points on $X$ and $Y$.  Given known correspondences, $\vec{t}$ and $\vec{q}$ can be found in closed form with Horn's method~\cite{horn1987closed}.
If correspondences are unknown, the alignment cost function can be specified in terms of sum-of-squared distances between nearest-neighbor
points on $X$ and $Y$, and iterative algorithms like ICP (Iterative Closest Point) are guaranteed to reach a local minimum of the cost
function~\cite{besl_icp_1992}.  During each iteration of ICP, Horn's method is used to solve for an optimal $\vec{t}$ and $\vec{q}$
given a current set of correspondences, and then the correspondences are updated using nearest neighbors given the new pose.

\begin{figure}[t!]
  \centering
  \includegraphics[height=1.2in]{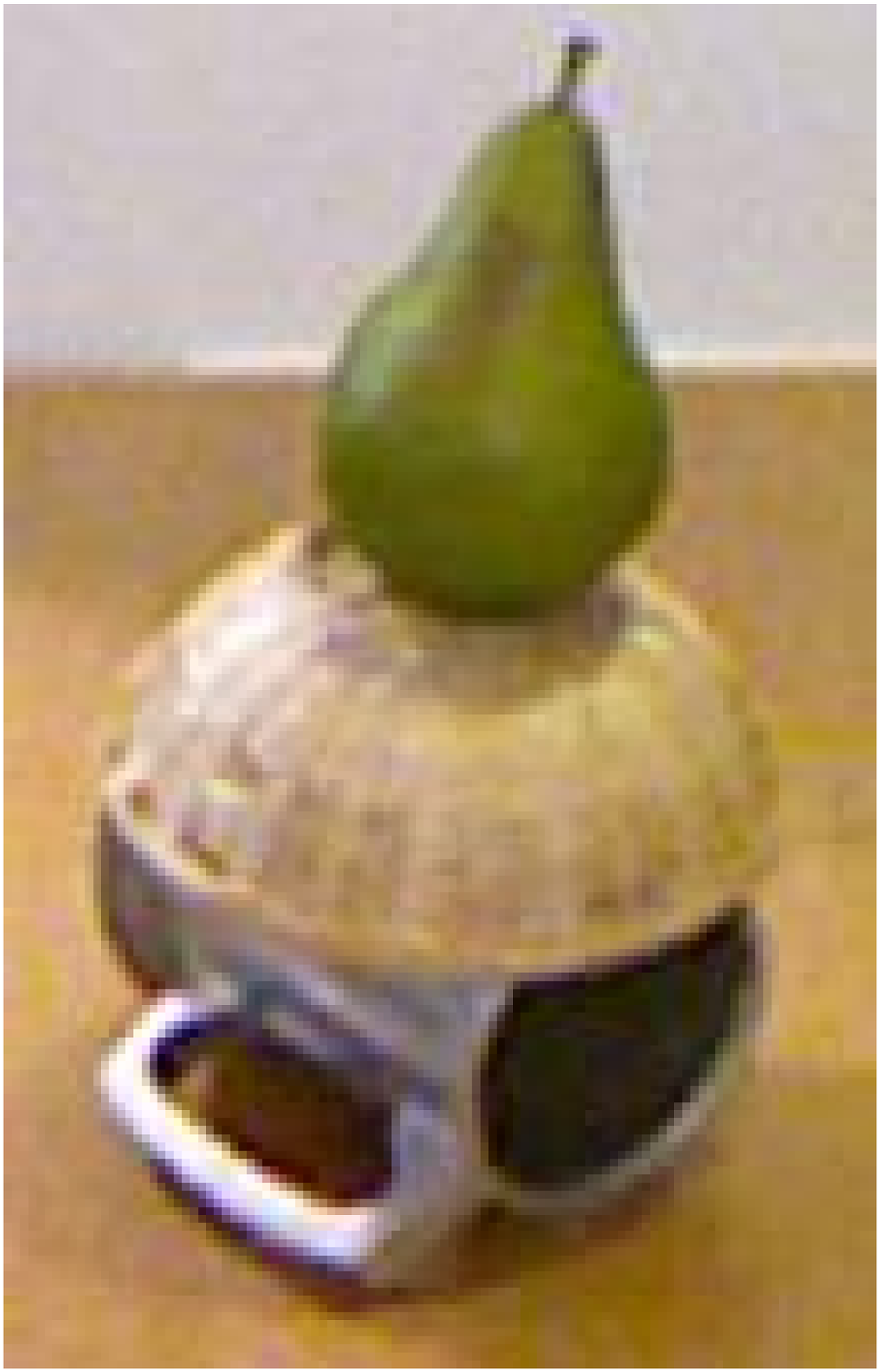}%
  \hfill
  \includegraphics[height=1.2in]{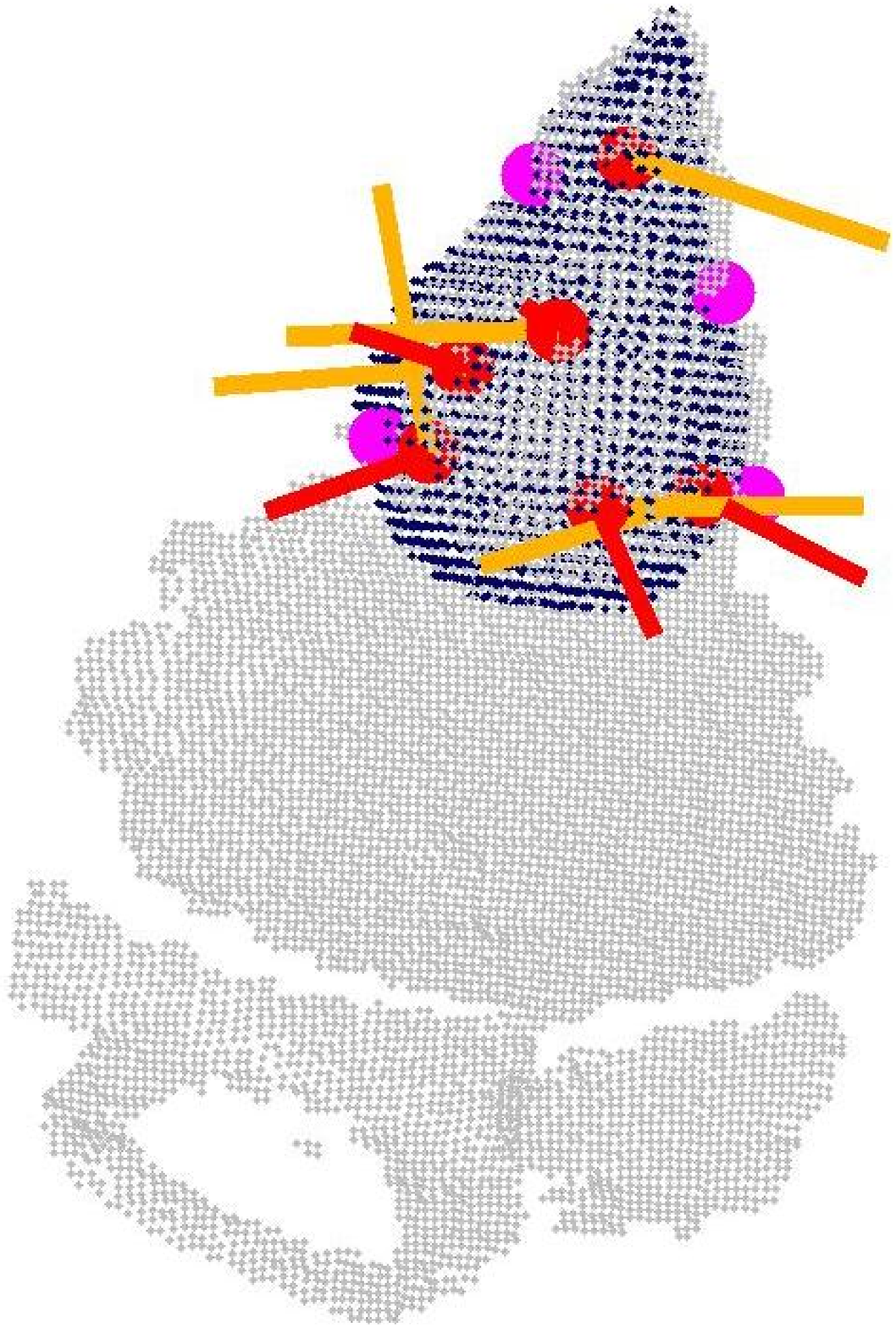}%
  \hfill
  \includegraphics[height=1.2in]{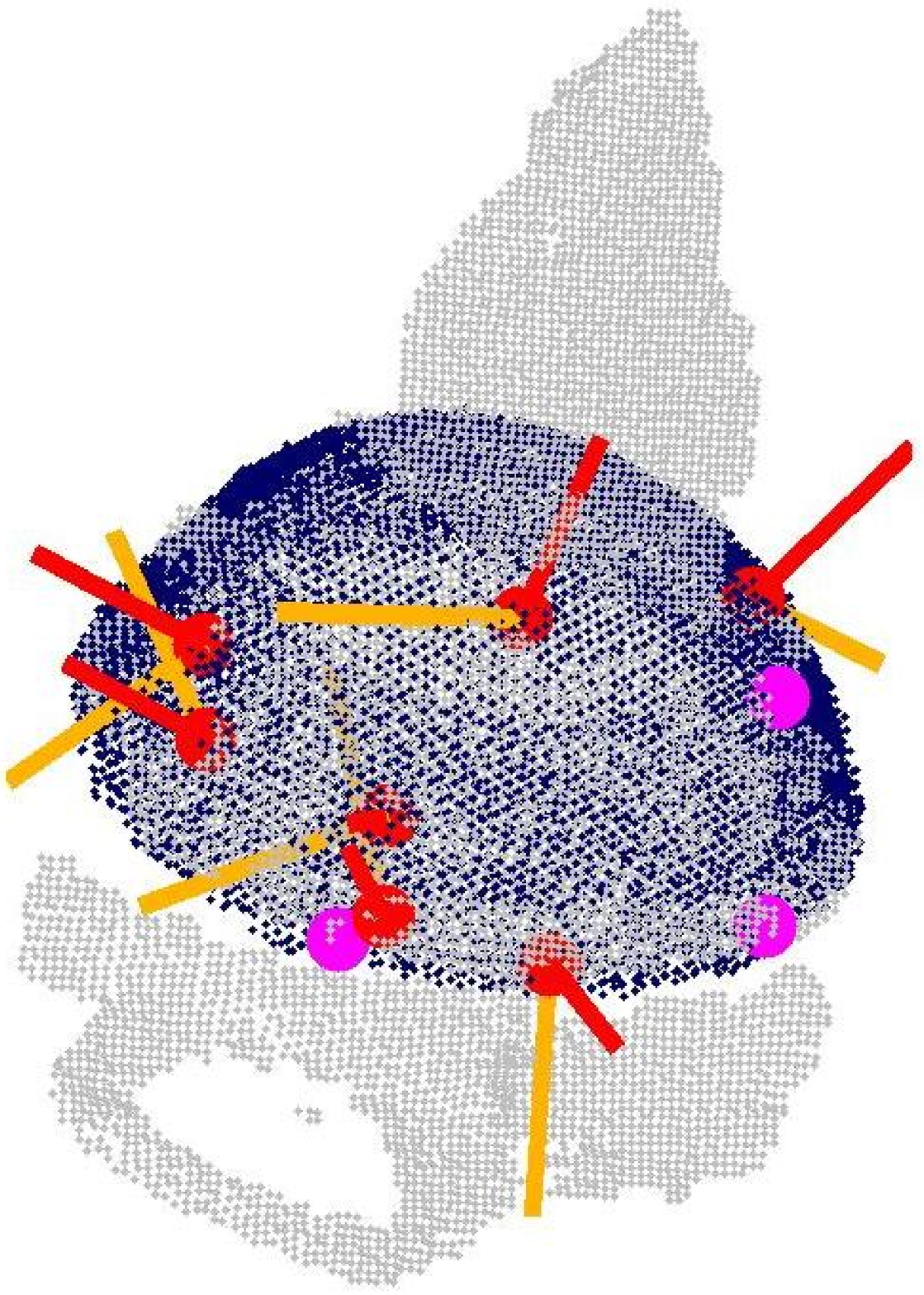}%
  \hfill
  \includegraphics[height=1.2in]{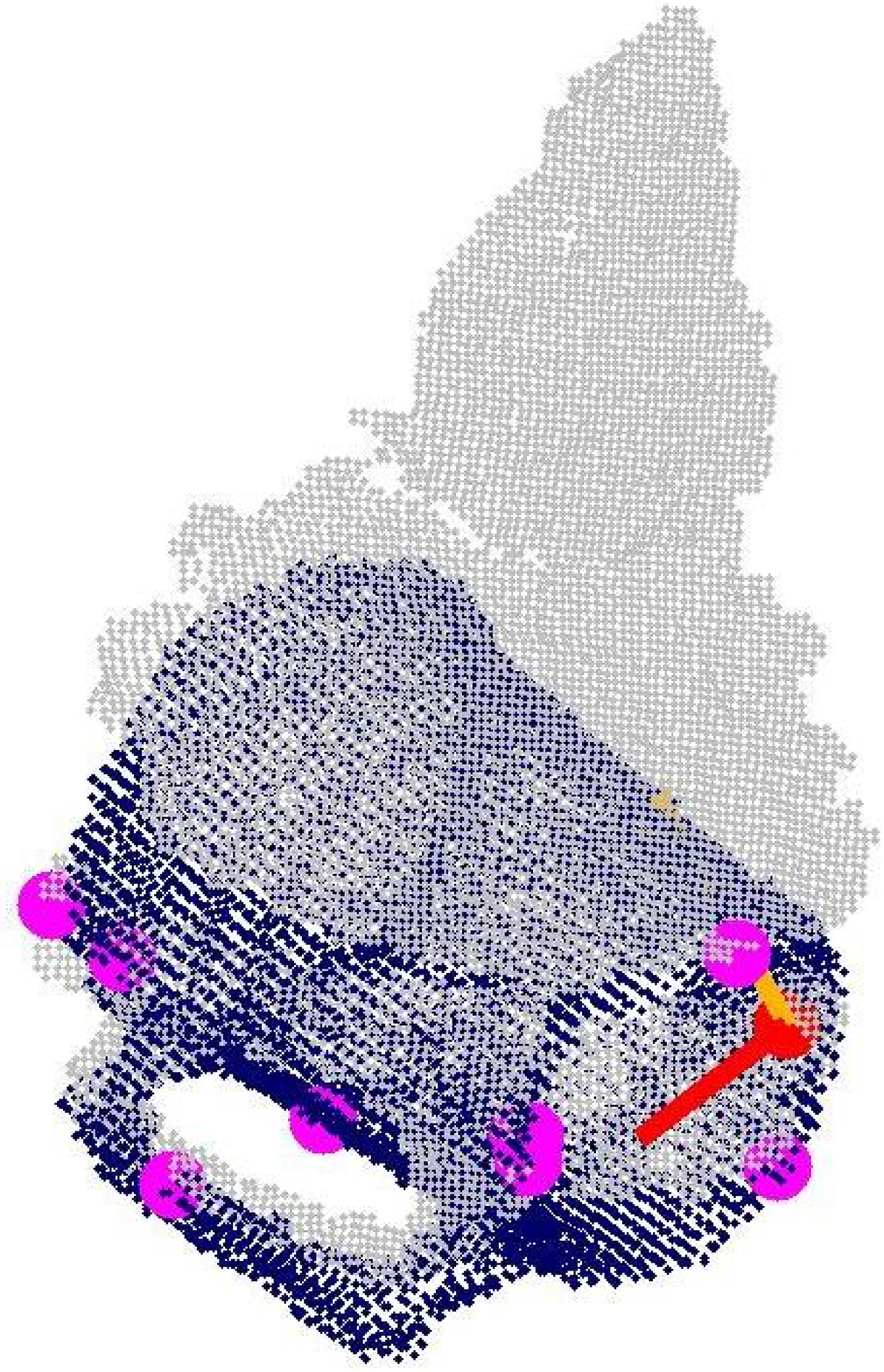}
  \caption{Object detections found with our system, along with the feature
    correspondences that BPA used to align the model.  Surface features are indicated by
    red points, with lines sticking out of them to indicate orientations (red for normals,
    orange for principal curvatures).  Edge features (which are orientation-less) are shown
    by magenta points.}
  \label{fig:scope_samples1}
\end{figure}

ICP can be slow, because it needs to find dense correspondences between the two point sets at
each iteration.  Sub-sampling the point sets can improve speed, but only at the cost of accuracy when the data is noisy.
Another drawback is its sensitivity to outliers---for example when it is applied to a cluttered scene with segmentation error.

Particularly because of the clutter problem, many modern approaches to alignment use sparse point sets, where one only uses points computed at
especially unique keypoints in the scene.  These keypoint features can be computed from either 2-D (image) or 3-D (geometry) information,
and often include not only positions, but also orientations derived from image gradients, surface normals, principal curvatures, etc.
However, these orientations are typically only used in the feature matching and pose clustering stages, and are ignored during the
alignment step.

Another limitation is that the resulting alignments are often based on just a few features, with noisy position measurements, and yet
there is very little work on estimating confidence intervals on the resulting alignments.  This is especially difficult when the features
have different noise models---for example, a feature found on a flat surface will have a good estimate of its surface normal, but a high variance
principal curvature direction, while a feature on an object edge may have a noisy normal, but precise principal curvature.  Ideally, we would
like to have a posterior distribution over the space of possible alignments, given the data, and we would like that distribution to include
information from feature positions and orientation measurements, given varying noise models.

As we will see in the next section, a full joint distribution on $\vec{t}$ and $\vec{q}$ is difficult to obtain.  However, in the original least-squares formulation,
it is possible to solve for the optimal $\vec{t}^*$ independently of $\vec{q}^*$, simply by taking $\vec{t}^*$ to be the translation which aligns
the centroids of $X$ and $Y$.  Given a fixed $\vec{t}^*$, solving for the optimal $\vec{q}^*$ then becomes tractable.
In a Bayesian analysis of the least-squares alignment problem, we seek a full distribution on $\vec{q}$ given $\vec{t}$, not just
the optimal value, $\vec{q}^*$.  That way we can assess the confidence of our orientation estimates, and fuse $p(\vec{q} | \vec{t})$
with other sources of orientation information, such as from surface normals.

Remarkably, given the common assumption of independent, isotropic Gaussian noise on position measurements (which is implicit in the
classical least-squares formulation), we can show that $p(\vec{q} | \vec{t})$ is a Bingham distribution.  This result makes it
easy to combine the least-squares distribution on $\vec{q} | \vec{t}$ with other Bingham distributions from feature orientations (or prior distributions),
since the Bingham is a common distribution for encoding uncertainty on 3-D rotations represented as unit quaternions~\cite{bingham_antipodally_1974, Glover11, antone2001robust}.


The mode of the least-squares Bingham distribution on $\vec{q} | \vec{t}$ will be exactly the same as the optimal
orientation $\vec{q}^*$ from Horn's method.  When other sources of orientation information are available, they may bias the distribution
away from $\vec{q}^*$.  Thus, it is important that the concentration (inverse variance) parameters of the Bingham distributions are accurately
estimated for each source of orientation information, so that this bias is proportional
to confidence in the measurements. (See the appendix for an example.)

We use our new alignment method, BPA, to build an object detection system for known, rigid objects in cluttered RGB-D images.
Our system combines information from surface and edge feature correspondences to improve
object alignments in cluttered scenes (as shown in figure~\ref{fig:scope_samples1}), and acheives
state-of-the-art recognition performance on both an existing Kinect data set~\cite{aldoma2012global}, and on a new data set
containing far more clutter and pose variability than any existing data set\footnote{Most existing data
sets for 3-D cluttered object detection have very limited object pose variability (most of the objects are upright),
and objects are often easily separable and supported by the same flat surface.}.

\begin{figure}[t]
  \centering
  \includegraphics[width=.45\textwidth]{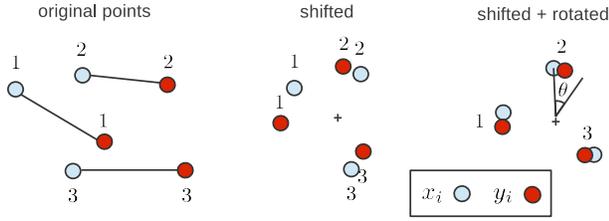}
  \caption{Rigid alignment of two point sets.}
  \label{fig:alignment_of_point_sets}
\end{figure}

\section{Bingham Procrustean Alignment}

Given two 3-D point sets $X$ and $Y$ in one-to-one correspondence, we seek a distribution over the set
of rigid transformations of $X$, parameterized by a (quaternion) rotation $\vec{q}$ and a translation
vector $\vec{t}$.  Assuming independent Gaussian noise models on deviations between corresponding
points on $Y$ and (transformed) $X$, the conditional distribution $p(\vec{q} | \vec{t},X,Y)$ is
proportional to  $p(X,Y|\vec{q},\vec{t}) p(\vec{q} | \vec{t})$, where
\begin{align}
p(X,Y|\vec{q},\vec{t}) &= \prod_i p(\vec{x_i},\vec{y_i}|\vec{q},\vec{t}) \\
&= \prod_i N(Q (\vec{x_i} + \vec{t}) - \vec{y_i} ; \vec{0}, \Sigma_i)  \label{eq:measurement_model}
\end{align}
given that $Q$ is $\vec{q}$'s rotation matrix, and covariances $\Sigma_i$.


Given isotropic noise models\footnote{This is the implicit assumption in the least-squares formulation.}
on point deviations (so that $\Sigma_i$ is a scalar times the identity matrix),
$p(\vec{x_i},\vec{y_i}|\vec{q},\vec{t})$ reduces to a 1-D Gaussian PDF on the distance
between $\vec{y_i}$ and $Q (\vec{x_i} + \vec{t})$, yielding
\begin{align*}
p(\vec{x_i},\vec{y_i}|\vec{q},\vec{t}) &= N(\| Q (\vec{x_i} + \vec{t}) - \vec{y_i} \| ; 0, \sigma_i) \\
&= N(d_i ; 0, \sigma_i)
\end{align*}
where $d_i$ depends on $\vec{q}$ and $\vec{t}$.

Now consider the triangle formed by the origin (center of rotation), $Q(\vec{x_i}+\vec{t})$ and $\vec{y_i}$,
as shown on the left of figure~\ref{fig:two_points_triangle}.  By the law of cosines, the squared-distance between
$Q(\vec{x_i}+\vec{t})$, and $\vec{y_i}$ is $d^2 = a^2 + b^2 - ab\cos(\theta)$, which only depends on
$\vec{q}$ via the angle $\theta$ between the vectors $Q(\vec{x_i}+\vec{t})$ and $\vec{y_i}$.
(We drop the $i$-subscripts on $d$, $a$, $b$, and $\theta$ for brevity.)  We can thus replace $p(\vec{x_i},\vec{y_i}|\vec{q},\vec{t})$ with
\begin{equation}
p(\vec{x_i},\vec{y_i}|\theta,\vec{t}) = \frac{1}{Z} \exp \left\{ \frac{ab\cos(\theta)}{\sigma^2} \right\} \label{eq:vonmises}
\end{equation}
which has the form of a Von-Mises distribution on $\theta$.

\begin{figure}[h]
  \centering
  \includegraphics[width=.45\textwidth]{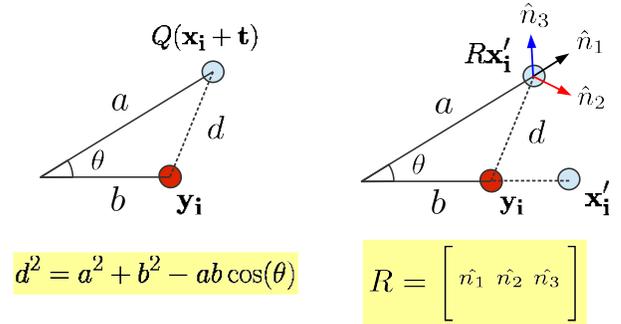}
  \caption{Distance between corresponding points as a function of orientation.}
\label{fig:two_points_triangle}
\end{figure}

Next, we need to demonstrate how $\theta$ depends on $\vec{q}$.  Without loss of generality, assume that $\vec{y_i}$ points along the axis $(1,0,0)$.
When this is not the case, the Bingham distribution over $\vec{q}$ which we derive below can be post-rotated by any quaternion which takes $(1,0,0)$
to $\vec{y_i} / \| \vec{y_i} \|$.

Clearly, there will be a family of $\vec{q}$'s which rotate
$\vec{x_i}+\vec{t}$ to form an angle of $\theta$ with $\vec{y_i}$, since we can compose $\vec{q}$ with any rotation about $\vec{x_i}+\vec{t}$
and the resulting angle with $\vec{y_i}$ will still be $\theta$.  To demonstrate what this family is, we first let $\vec{s}$ be a unit quaternion which
rotates $\vec{x_i}+\vec{t}$ onto $\vec{y_i}$'s axis, and let $\vec{x'_i} = S(\vec{x_i}+\vec{t})$, where $S$ is $\vec{s}$'s rotation matrix.
Then, let $\vec{r}$ (with rotation matrix $R$) be a quaternion that rotates $\vec{x'_i}$ to $Q(\vec{x_i}+\vec{t})$, so that
$\vec{q} = \vec{r} \circ \vec{s}$.  Because $\vec{y_i}$ and $\vec{x'_i}$ point along the axis $(1,0,0)$, the first column of $R$, $\hat{n_1}$,
will point in the direction of $Q(\vec{x_i}+\vec{t})$, and form an angle of $\theta$ with $\vec{y_i}$, as shown on the right side of
figure~\ref{fig:two_points_triangle}.  Thus, $\hat{n_1} \cdot (1,0,0) = \hat{n_{11}} = \cos \theta$.

The rotation matrix of quaternion $\vec{r} = (r_1,r_2,r_3,r_4)$ is
\begin{equation*}
  R = \biggl[
    \begin{smallmatrix} r_1^2+r_2^2-r_3^2-r_4^2 &\;\;& 2r_2r_3-2r_1r_4 &\;\;& 2r_2r_4+2r_1r_3 \\
      2r_2r_3+2r_1r_4 && r_1^2-r_2^2+r_3^2-r_4^2 && 2r_3r_4-2r_1r_2 \\
      2r_2r_4-2r_1r_3 && 2r_3r_4+2r_1r_2 && r_1^2-r_2^2-r_3^2+r_4^2
    \end{smallmatrix} \biggr]
\end{equation*}
Therefore, $\cos \theta = \hat{n_{11}} = r_1^2+r_2^2-r_3^2-r_4^2 = 1 - 2r_3^2 - 2r_4^2$.

We can now make the following claim about $p(\vec{x_i},\vec{y_i} | \vec{q},\vec{t})$:

\begin{claim}
Given that $\vec{y_i}$ lies along the $(1,0,0)$ axis, 
then the probability density $p(\vec{x_i},\vec{y_i} | \vec{q},\vec{t})$ is proportional
to a Bingham density\footnote{See the appendix for an overview of the Bingham distribution.} on $\vec{q}$ with parameters
\[ \Lambda = ( -\frac{2ab}{\sigma^2}, -\frac{2ab}{\sigma^2}, 0 ) \;\;\;\; \textrm{and} \;\;\;\;
V = \biggl[ \begin{smallmatrix} 0&0&0 \\ 0&0&1 \\ 1&0&0 \\ 0&1&0  \end{smallmatrix} \biggr] \circ \vec{s} = W \circ \vec{s} \;\;\;, \]
where ``$\circ$'' indicates column-wise quaternion multiplication.
\vspace{.01in}
\label{claim:BPA1}
\end{claim}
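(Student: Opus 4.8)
The plan is to read off the claim almost directly from the Von-Mises form~\eqref{eq:vonmises} together with the already-established identity $\cos\theta = 1 - 2r_3^2 - 2r_4^2$: first recognize the exponential as a Bingham density in the auxiliary quaternion $\vec r$, then transport it along the linear change of variables $\vec r \mapsto \vec q$. To begin, note that $a = \|Q(\vec{x_i}+\vec{t})\| = \|\vec{x_i}+\vec{t}\|$ and $b = \|\vec{y_i}\|$ do not depend on $\vec q$ (rotations preserve norms), so $ab/\sigma^2$ is a constant as far as $\vec q$ is concerned. Substituting $\cos\theta = 1 - 2r_3^2 - 2r_4^2$ into~\eqref{eq:vonmises} and absorbing the factor $\exp\{ab/\sigma^2\}$ into the normalizer gives
\[
p(\vec{x_i},\vec{y_i}\,|\,\vec{q},\vec{t}) \;\propto\; \exp\!\left\{ -\tfrac{2ab}{\sigma^2}\,r_3^2 \;-\; \tfrac{2ab}{\sigma^2}\,r_4^2 \right\}.
\]

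Next I would write $r_3 = \vec{e_3}^{\top}\vec{r}$ and $r_4 = \vec{e_4}^{\top}\vec{r}$ for the standard basis vectors $\vec{e_3},\vec{e_4}$ of $\mathbb{R}^4$, and append a vacuous term $0\cdot(\vec{e_2}^{\top}\vec{r})^2$. The right-hand side is then exactly a Bingham density in $\vec r$ with concentrations $\Lambda = (-\tfrac{2ab}{\sigma^2},-\tfrac{2ab}{\sigma^2},0)$ and direction matrix $W$ whose columns are $\vec{e_3},\vec{e_4},\vec{e_2}$ (the implicit mode being $\vec{e_1}$, i.e.\ $\vec r$ equal to the identity rotation, which maximizes $\cos\theta$ — a useful sanity check). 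The key fact to invoke here is that the Bingham normalizing constant depends only on $\Lambda$, not on the orthonormal directions, so the proportionality statement is unaffected by the particular choice of direction vectors.

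Then I would push the density through the map $\vec r \mapsto \vec q$. From $\vec q = \vec r\circ\vec s$ with $\vec s$ fixed we obtain $\vec r = \vec q\circ\vec s^{-1}$, and right quaternion multiplication by the fixed unit quaternion $\vec s^{-1}$ is a linear, orthogonal map on $\mathbb{R}^4$: $\vec r = M\vec q$ with $M^{\top} = M^{-1}$ the matrix of right multiplication by $\vec s$. Hence $(\vec{w_j}^{\top}\vec r)^2 = \big((M^{\top}\vec{w_j})^{\top}\vec q\big)^2$, so the same function, now viewed as a density in $\vec q$, is Bingham with the same $\Lambda$ and direction vectors $M^{\top}\vec{w_j}$. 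Identifying $M^{\top}\vec{w_j}$ with the column-wise quaternion product $\vec{w_j}\circ\vec s$ gives $V = W\circ\vec s$, and once more the normalizer is untouched since it depends only on $\Lambda$. This proves the claim; the general $\vec{y_i}$ case is handled, as noted in the text, by an additional left composition that is again an orthogonal change of variables.

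The main obstacle will be the quaternion bookkeeping in that last step. One must be careful that the paper's convention $\vec q = \vec r\circ\vec s$ (equivalently $Q = RS$) makes $\vec r$ depend on $\vec q$ through \emph{right} multiplication by $\vec s^{-1}$, so that the transpose/inverse of that orthogonal matrix is right multiplication by $\vec s$ — precisely the operation ``$\circ\,\vec s$'' appearing in the claim. Getting the side (left versus right) or the sense ($\vec s$ versus $\vec s^{-1}$) wrong would yield a plausible-looking but incorrect $V$. A secondary point worth checking carefully is the orthogonality of the quaternion-multiplication matrix $M$, since that is what guarantees the length-preservation (hence normalizer-invariance) that makes every ``up to proportionality'' step above legitimate.
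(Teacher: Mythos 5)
Your proposal is correct and follows essentially the same route as the paper's proof: both hinge on the substitution $\cos\theta = 1 - 2r_3^2 - 2r_4^2$ together with the identity $(\vec{w_j}\circ\vec{s})^T\vec{q} = \vec{w_j}^T(\vec{q}\circ\vec{s}^{-1}) = \vec{w_j}^T\vec{r}$, which is exactly your orthogonal change-of-variables step; you merely run the chain of equalities from the Von-Mises form to the Bingham density rather than the reverse. Your added care about the orthogonality of right quaternion multiplication and the normalizer depending only on $\Lambda$ makes explicit what the paper leaves implicit, but it is the same argument.
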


\begin{proof}
The Bingham density in claim~\ref{claim:BPA1} is given by
\begin{align}
p(\vec{q} | \Lambda, V) &= \frac{1}{F} \exp \sum_{j=1}^3 \lambda_j ((\vec{w_j} \circ \vec{s})^T \vec{q})^2 \\
&= \frac{1}{F} \exp \left\{ -\frac{2ab}{\sigma^2}r_3^2 -\frac{2ab}{\sigma^2}r_4^2 \right\} \\
&= \frac{1}{F'} \exp \left\{ \frac{ab \cos \theta}{\sigma^2} \right\} \label{eq:BPA_proof}
\end{align}
since $(\vec{w_j} \circ \vec{s})^T \vec{q} = \vec{w_j}^T (\vec{q} \circ \vec{s^{-1}}) = \vec{w_j}^T \vec{r}$,
and $\cos \theta = 1 - 2r_3^2 - 2r_4^2$.  Since (\ref{eq:BPA_proof}) is proportional to (\ref{eq:vonmises}), we conclude that $p(\vec{q} | \Lambda, V) \propto p(\vec{x_i},\vec{y_i} | \vec{q},\vec{t})$,
as claimed.

\end{proof}

\begin{claim}
Let $\vec{s'}$ be a quaternion that rotates $(1,0,0)$ onto the axis of $\vec{y_i}$ (for arbitrary $\vec{y_i}$).
Then the probability density $p(\vec{x_i},\vec{y_i} | \vec{q},\vec{t})$ is proportional
to a Bingham density on $\vec{q}$ with parameters
\[ \Lambda = ( -\frac{2ab}{\sigma^2}, -\frac{2ab}{\sigma^2}, 0 ) \;\;\;\; \textrm{and} \;\;\;\;
V = \vec{s'} \circ \biggl[ \begin{smallmatrix} 0&0&0 \\ 0&0&1 \\ 1&0&0 \\ 0&1&0  \end{smallmatrix} \biggr] \circ \vec{s} \;\;\;, \]
where ``$\circ$'' indicates column-wise quaternion multiplication.
\vspace{.01in}
\label{claim:BPA2}
\end{claim}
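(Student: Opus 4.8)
The plan is to reduce Claim~\ref{claim:BPA2} to Claim~\ref{claim:BPA1} by rotating into a frame in which $\vec{y_i}$ is axis-aligned. Let $S'$ be the rotation matrix of $\vec{s'}$, so that $S'(1,0,0) = \vec{y_i}/\|\vec{y_i}\|$, and put $\vec{y_i}' = (S')^{-1}\vec{y_i} = \|\vec{y_i}\|\,(1,0,0)$, which lies along the $(1,0,0)$ axis. First I would recall from (\ref{eq:vonmises}) that $p(\vec{x_i},\vec{y_i} | \vec{q},\vec{t})$ depends on $\vec{q}$ only through the angle $\theta$ between $Q(\vec{x_i}+\vec{t})$ and $\vec{y_i}$. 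Since rotations preserve angles, $\theta$ is equally the angle between $(S')^{-1}Q(\vec{x_i}+\vec{t})$ and $\vec{y_i}'$; so, writing $\vec{q}'' = \vec{s'}^{-1}\circ\vec{q}$ (whose rotation matrix is $Q'' = (S')^{-1}Q$), the quantity $p(\vec{x_i},\vec{y_i} | \vec{q},\vec{t})$, viewed as a function of $\vec{q}''$, is exactly the density treated in Claim~\ref{claim:BPA1} with $\vec{y_i}$ replaced by the axis-aligned vector $\vec{y_i}'$ and with $\vec{s}$ the quaternion rotating $\vec{x_i}+\vec{t}$ onto the $(1,0,0)$ axis.

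Applying Claim~\ref{claim:BPA1} then yields $p(\vec{x_i},\vec{y_i} | \vec{q},\vec{t}) \propto p(\vec{q}'' | \Lambda, W\circ\vec{s})$, and it only remains to re-express this Bingham density in terms of the original variable $\vec{q}$. Left-multiplication by a unit quaternion is an orthogonal linear map of $\mathbb{R}^4$ whose inverse is left-multiplication by its conjugate --- the same orthogonality fact, applied on the opposite side, that produces the identity $(\vec{w_j}\circ\vec{s})^T\vec{q} = \vec{w_j}^T\vec{r}$ used in the proof of Claim~\ref{claim:BPA1}. Hence, for each $j$,
\[ (\vec{w_j}\circ\vec{s})^T\vec{q}'' \;=\; (\vec{w_j}\circ\vec{s})^T(\vec{s'}^{-1}\circ\vec{q}) \;=\; (\vec{s'}\circ\vec{w_j}\circ\vec{s})^T\vec{q}, \]
so substituting into $p(\vec{q}'' | \Lambda, W\circ\vec{s}) = \frac{1}{F}\exp\sum_{j=1}^3 \lambda_j((\vec{w_j}\circ\vec{s})^T\vec{q}'')^2$ converts it to a Bingham density on $\vec{q}$ with the same $\Lambda$ and with $V = \vec{s'}\circ W\circ\vec{s}$; the normalizing constant is unchanged because it depends only on $\Lambda$. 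Finally, since $\vec{q}\mapsto\vec{s'}^{-1}\circ\vec{q}$ is a measure-preserving bijection of $\mathbb{S}^3$, no probability mass is lost in switching between the $\vec{q}''$ and $\vec{q}$ parametrizations, and the claim follows.

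I expect the only genuine difficulty to be bookkeeping: one has to keep straight that the frame quaternion $\vec{s'}$ must act on $\vec{q}$ by \emph{left} multiplication (so that $Q$ is conjugated on the side that sends $\vec{y_i}$ to the $(1,0,0)$ axis) and that it therefore enters $V$ on the left of $W$, in contrast to $\vec{s}$, which already multiplied $W$ on the right in Claim~\ref{claim:BPA1}. Beyond that, the argument is a direct reuse of Claim~\ref{claim:BPA1} together with the orthogonality of quaternion left-multiplication.
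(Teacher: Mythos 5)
Your argument is correct and follows exactly the route the paper intends: the paper merely states that the Claim~1 distribution ``must simply be post-rotated by $\vec{s'}$'' and leaves the proof to the reader, and your reduction via $\vec{q}'' = \vec{s'}^{-1}\circ\vec{q}$, together with the adjoint identity $(\vec{w_j}\circ\vec{s})^T(\vec{s'}^{-1}\circ\vec{q}) = (\vec{s'}\circ\vec{w_j}\circ\vec{s})^T\vec{q}$, supplies precisely the omitted details, including why $\vec{s'}$ enters $V$ on the left. No gaps.
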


As explained above, the distribution on $\vec{q}$ from claim~\ref{claim:BPA1}
must simply be post-rotated by $\vec{s'}$ when $\vec{y_i}$ is not aligned with the $(1,0,0)$ axis.
The proof is left to the reader.  Putting it all together, we find that
\begin{align}
p(\vec{q} | \vec{t},X,Y) &\propto \prod_i \bingham(\vec{q}; \Lambda_i, V_i) \cdot p(\vec{q} | \vec{t}) \\
&= \bingham(\vec{q}; \tilde{\Lambda}, \tilde{V}) \cdot p(\vec{q} | \vec{t}) \label{eq:BPA_LS}
\end{align}
where $\Lambda_i$ and $V_i$ are taken from claim~\ref{claim:BPA2}, and where $\tilde{\Lambda}$ and $\tilde{V}$
are computed from the formula for multiplication of Bingham PDFs, which is given in the appendix.

Equation~\ref{eq:BPA_LS} tells us that, in order to update a prior on $\vec{q}$ given $\vec{t}$
after data points $X$ and $Y$ are observed, one must simply multiply the prior by an appropriate Bingham
term.  Therefore, assuming a Bingham prior over $\vec{q}$ given $\vec{t}$ (which includes the uniform distribution),
the conditional posterior, $p(\vec{q} | \vec{t},X,Y)$ is the PDF of a Bingham distribution.

To demonstrate this fact,
we relied only upon the assumption of independent isotropic Gaussian noise on position measurements, which is exactly the same
assumption made implicitly in the least-squares formulation of the optimal alignment problem.  This illustrates
a deep and hitherto unknown connection between least-squares alignment and the Bingham distribution,
and paves the way for the fusion of orientation and position measurements in a wide variety of applications.

\subsection{Incorporating Orientation Measurements}

Now that we have shown how the orientation information from the least-squares alignment of two point sets $X$ and $Y$
is encoded as a Bingham distribution, it becomes trivial to incorporate independent orientation measurements
at some or all of the points, provided that the orientation noise model is Bingham.  Given orientation
measurements $(O_X,O_Y)$,
\begin{align*}
p(\vec{q} | \vec{t},X,&Y,O_X,O_Y) \\
&\propto p(X,Y,O_X,O_Y | \vec{q},\vec{t}) \cdot p(\vec{q} | \vec{t}) \\
&= p(X,Y | \vec{q},\vec{t}) \cdot p(O_X,O_Y | \vec{q},\vec{t}) \cdot p(\vec{q} | \vec{t}) \;\;.
\end{align*}
Similarly as in equation~\ref{eq:BPA_LS}, $p(O_X,O_Y | \vec{q},\vec{t})$ is the product of
Bingham distributions from corresponding orientation measurements in $(O_X,O_Y)$, and so
the entire posterior $p(\vec{q} | \vec{t},X,Y,O_X,O_Y)$ is Bingham (provided as before that
the prior $p(\vec{q} | \vec{t})$ is Bingham).

\subsection{The Alignment Algorithm}

To incorporate our Bayesian model into an iterative ICP-like alignment algorithm, one could
solve for the \textit{maximum a posteriori} (MAP) position and orientation by maximizing
$p(\vec{q},\vec{t} | X,Y,\ldots)$ with respect to $\vec{q}$ and $\vec{t}$.  However, for probabilistic
completeness, it is often more desirable to draw samples from this posterior distribution.

The joint posterior distribution $p(\vec{q},\vec{t} | Z)$---where $Z$ contains
all the measurements ($X,Y,O_X,O_Y,\ldots$)---can be broken up into
$p(\vec{q} | \vec{t},Z) p(\vec{t} | Z)$.  Unfortunately, writing down a closed-form
distribution for $p(\vec{t} | Z)$ is difficult.  But sampling from the joint distribution is
easy with an importance sampler, by first sampling $\vec{t}$ from a proposal distribution---for
example, a Gaussian centered on the optimal least-squares translation (that aligns the centroids
of $X$ and $Y$)---then sampling $\vec{q}$ from $p(\vec{q} | \vec{t},Z)$, and then weighting the
samples by the ratio of the true posterior (from equation~\ref{eq:measurement_model}) and the proposal
distribution (e.g., Gaussian times Bingham).

We call this sampling algorithm Bingham Procrustean Alignment (BPA). It takes as input a set of (possibly
oriented) features in one-to-one correspondence, and returns samples from the distribution over possible alignments.
In section~\ref{sec:scope}, we will show how BPA can be incorporated into an iterative alignment algorithm
that re-computes feature correspondences at each step and uses BPA to propose a new alignment given the correspondences.

\section{Building Noise-Aware 3-D Object Models}

\begin{figure}[t]
  \centering
  \includegraphics[width=.4\textwidth]{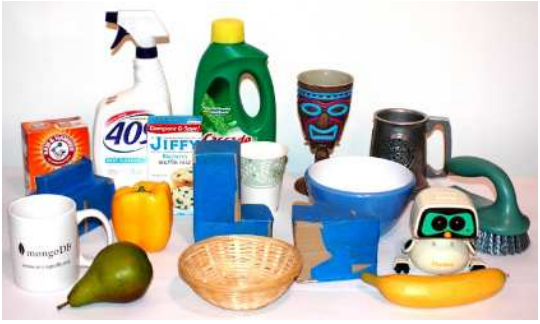}
  \caption{The 18 objects in our new \textit{Clutter} data set.}
\label{fig:kinect_scans}
\end{figure}

Our first step in building a system to detect known, rigid objects---such as the ones in figure~\ref{fig:kinect_scans}---is to
build complete 3-D models of each object.  However, the end goal of model building is not just to estimate an object's
geometry correctly.  Rather, we seek to predict what an RGB-D sensor would see, from every possible viewing angle of the object.
To generate such a predictive model, we will estimate both the most likely observations from each viewing angle, and also
the degree of noise predicted in those measurements.  That way, our detection system will realize that depth measurements
near object boundaries, on reflective surfaces, or on surfaces at a high oblique angle with respect to the camera, are less
reliable than front-on measurements of non-reflective, interior surface points.

In our model-building system, we place each object on a servo-controlled turntable in 2-3 resting positions and
collect RGB-D images from a stationary Kinect sensor at $10^\circ$ turntable increments, for a total of 60-90 views.
We then find the turntable plane in the depth images (using RANSAC), and separate object point clouds (on top of the turntable) from
the background.  Next we align each set of 30 scans (taken of the object in a single resting position) by optimizing for
the 2-D position of the turntable's center of rotation, with respect to an alignment cost function that measures the
sum-of-squared nearest-neighbor distances from each object scan to every other scan.  We then use another optimization
to solve for the 6-dof translation + rotation that aligns the 2-3 sets of scans together into one, global frame of reference.

After the object scans are aligned, we compute their surface normals, principal curvatures, and FPFH features~\cite{rusu_fpfh_2009},
and we use the the ratio of principal curvatures to estimate the (Bingham) uncertainty on the quaternion orientation
defined by normals and principal curvature directions at each point\footnote{The idea is to capture the orientation uncertainty
on the principal curvature direction by measuring the ``flatness'' of the observed surface patch; see the appendix for details.}.
We then use ray-tracing to build a 3-D occupancy grid model, where in addition to the typical probability of
occupancy, we also store each 3-D grid cell's mean position and normal, and variance on the normals in that cell\footnote{
In fact, we store two ``view-buckets'' per cell, each containing an occupancy probability, a position, a normal, and a
normal variance, since on thin objects like cups and bowls, there may be points on two different surfaces which fall in
the same grid cell.}.  We then threshold the occupancy grid at an occupancy probability of $0.5$, and remove interior
cells (which cannot be seen from any viewing angle) to obtain a full model point cloud, with associated normals and
normal variance estimates.  We also compute a distance transform of this model point cloud, by computing the distance
from the center of each cell in the occupancy grid to the nearest model point (or zero if the cell contains a model point).

Next, for a fixed set of 66 viewing angles across the view-sphere, we estimate range edges---points on the model where there
is a depth discontinuity in the predicted range image seen from that view angle.  We also store the minimum distance
from each model point to a range edge for each of the 66 viewing angles.  Using these view-dependent edge distances, along
with the angles between surface normals and viewpoints, we fit sigmoid models across the whole data set to estimate the
expected noise on range measurements and normal estimates as functions of (1) edge distance, and (2) surface angle, as shown in
figure~\ref{fig:noise_models}.

\begin{figure}[]
  \centering
  \includegraphics[width=.35\textwidth]{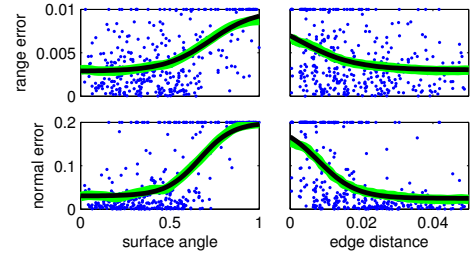}
  \caption{Our noise models predict range and normal errors (standard deviations) as functions of
  surface angle and edge distance (both with respect to the viewpoint).}
\label{fig:noise_models}
\end{figure}

\section{Learning Discriminative Feature Models for Detection}

Similarly to other recent object detection systems, our system computes a set of feature model placement score functions,
in order to evaluate how well a given model placement hypothesis fits the scene according to different features, such as
depth measurements, surface normals, edge locations, etc.  In our early experiments with object detection using the
generative object models in the previous section, the system was prone to make mis-classification errors, because some objects
scored consistently higher on certain feature scores (presumably due to training set bias).  Because of this problem,
we trained discriminative, logistic regression models on each of the score components using the turntable scans with
true model placements as positive training examples and a combination of correct object / wrong pose and wrong object / aligned
pose as negative examples.  Alignments of wrong objects were found by running the full object detection system (from the next
section) with the wrong object on the turntable scans.  By adding an (independent) discriminative layer to each of the
feature score types, we were able to boost the classification accuracy of our system considerably.

\begin{figure*}[t!]
  \includegraphics[width=.99\textwidth]{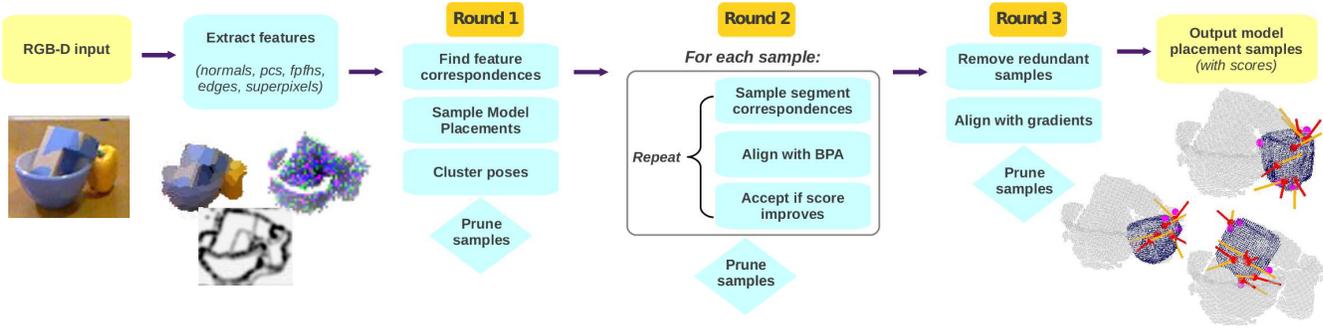}
  \caption{Single object detection pipeline.}
  \label{fig:flow_chart}
\end{figure*}

\section{Detecting Single Objects in Clutter} \label{sec:scope}

The first stages of our object detection pipeline are very similar to many other state-of-the-art systems
for 3-D object detection, with the exception that we rely more heavily on edge information.
We are given as input an RGB-D image, such as from a Kinect.  If environmental information is available,
the image may be pre-processed by another routine to crop the image to an area of interest, and to label
background pixels (e.g., belonging to a supporting surface).

As illustrated in figure~\ref{fig:flow_chart}, our algorithm starts by estimating a dense set of surface normals on the
3-D point cloud derived from the RGB-D image.
From these surface normals, it estimates principal curvatures and FPFH features.  In addition, it finds and labels three
types of edges: range edges, image edges, and curvature edges---points in
the RGB-D image where there is a depth discontinuity, an image intensity discontinuity\footnote{We use the Canny
edge detector to find image edges.}, or high negative curvature.  This edge information is converted into
an edge image, which is formed from a spatially-blurred, weighted average of the three edge pixel masks.
Intuitively, this edge image is intended to capture the relative likelihood that each point in the image is part
of an object boundary.  Then, the algorithm uses k-means to over-segment the point cloud based on positions,
normals, and spatially-blurred colors (in CIELAB space) into a set of 3-D super-pixels.

\begin{figure}[h]
  \includegraphics[width=.11\textwidth]{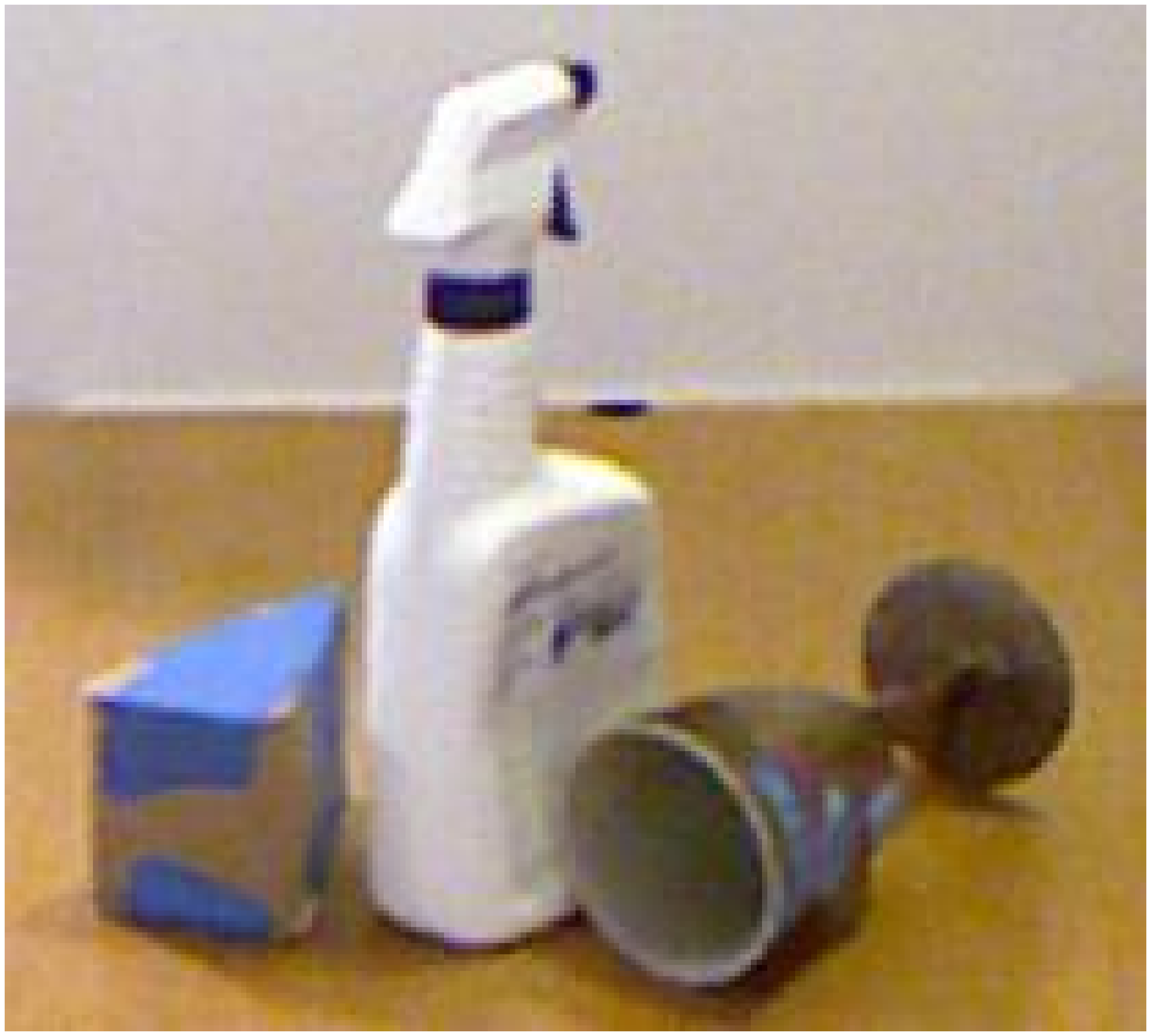}%
  \hfill
  \includegraphics[width=.125\textwidth]{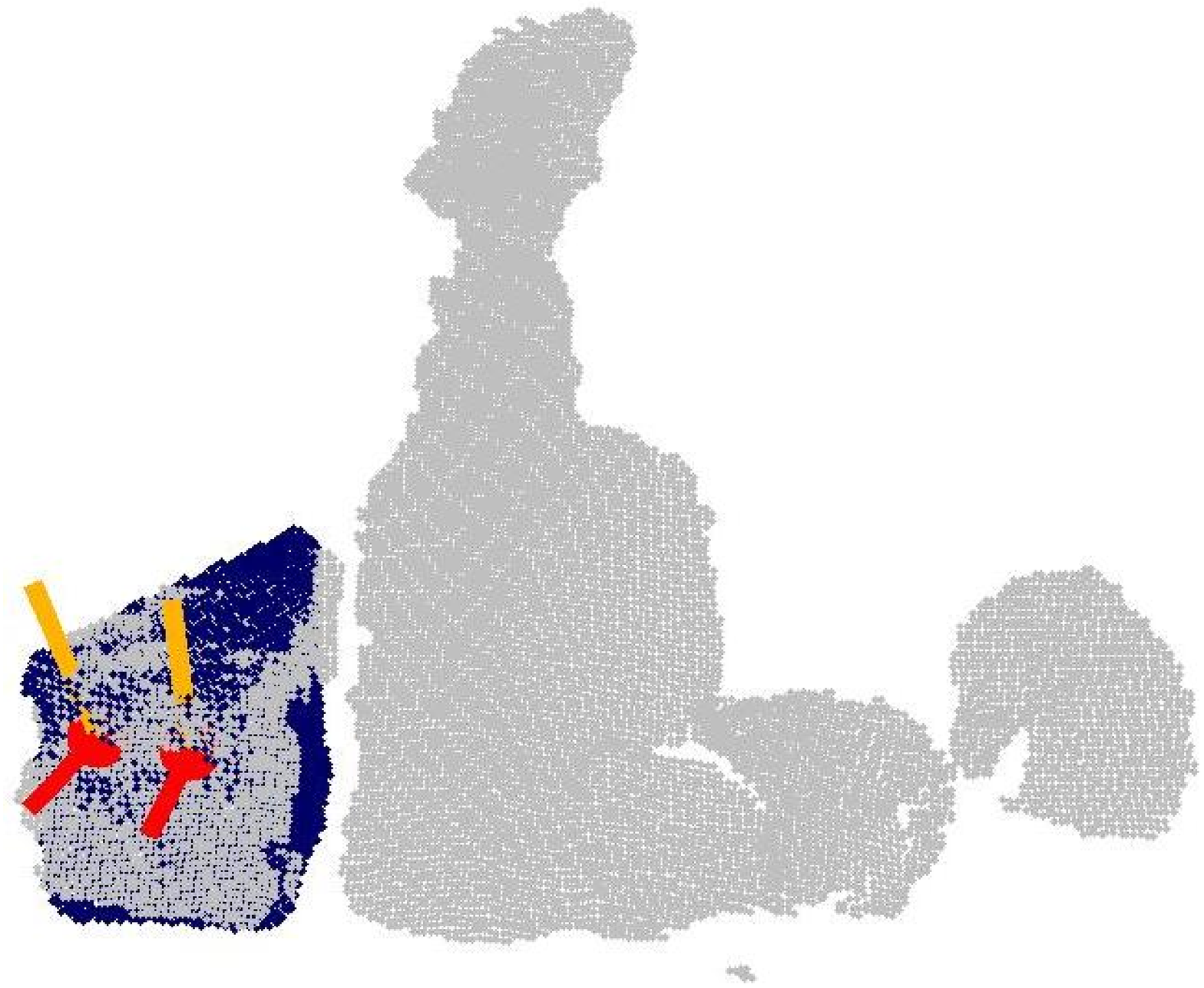}
  \includegraphics[width=.125\textwidth]{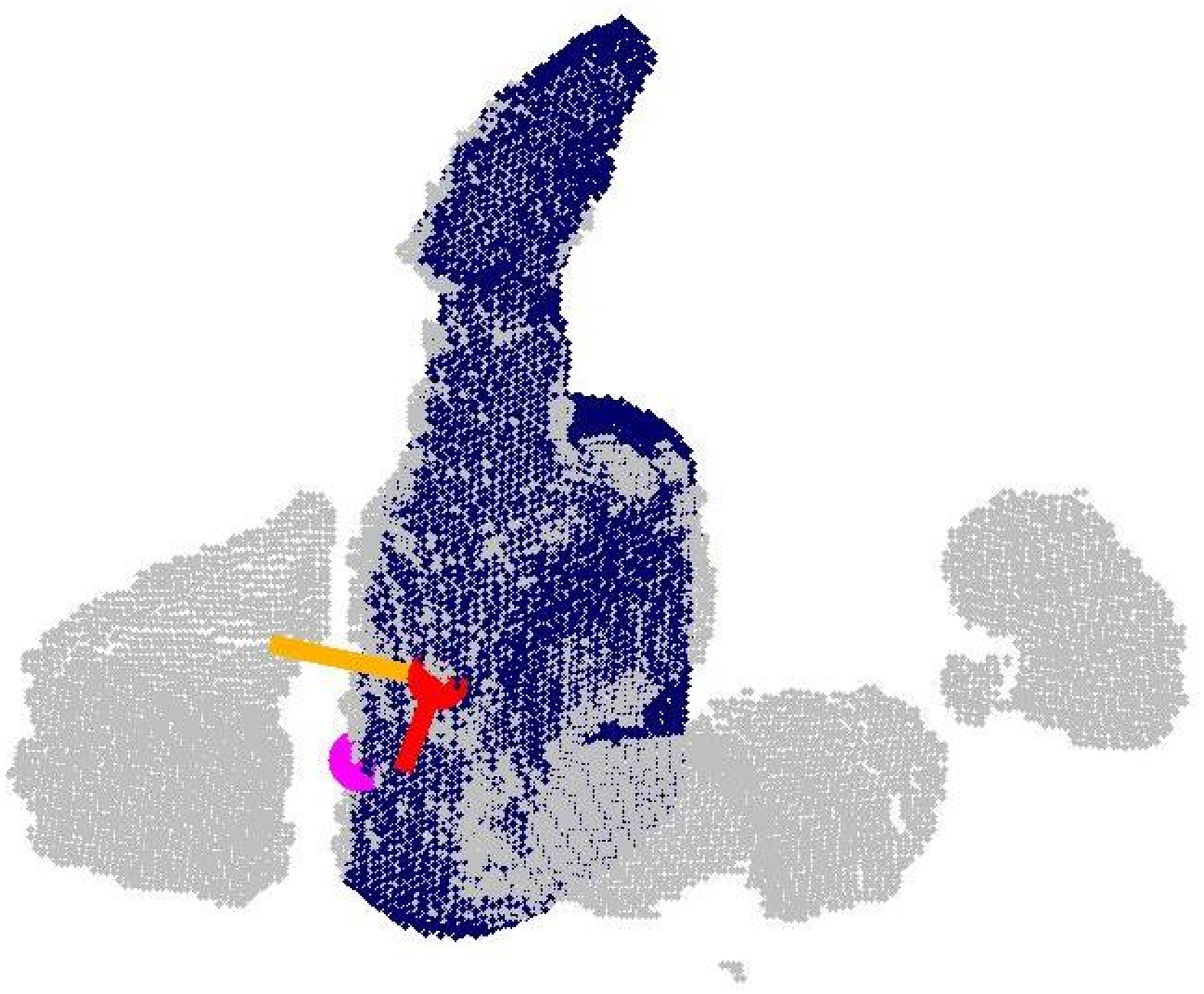}%
  \includegraphics[width=.125\textwidth]{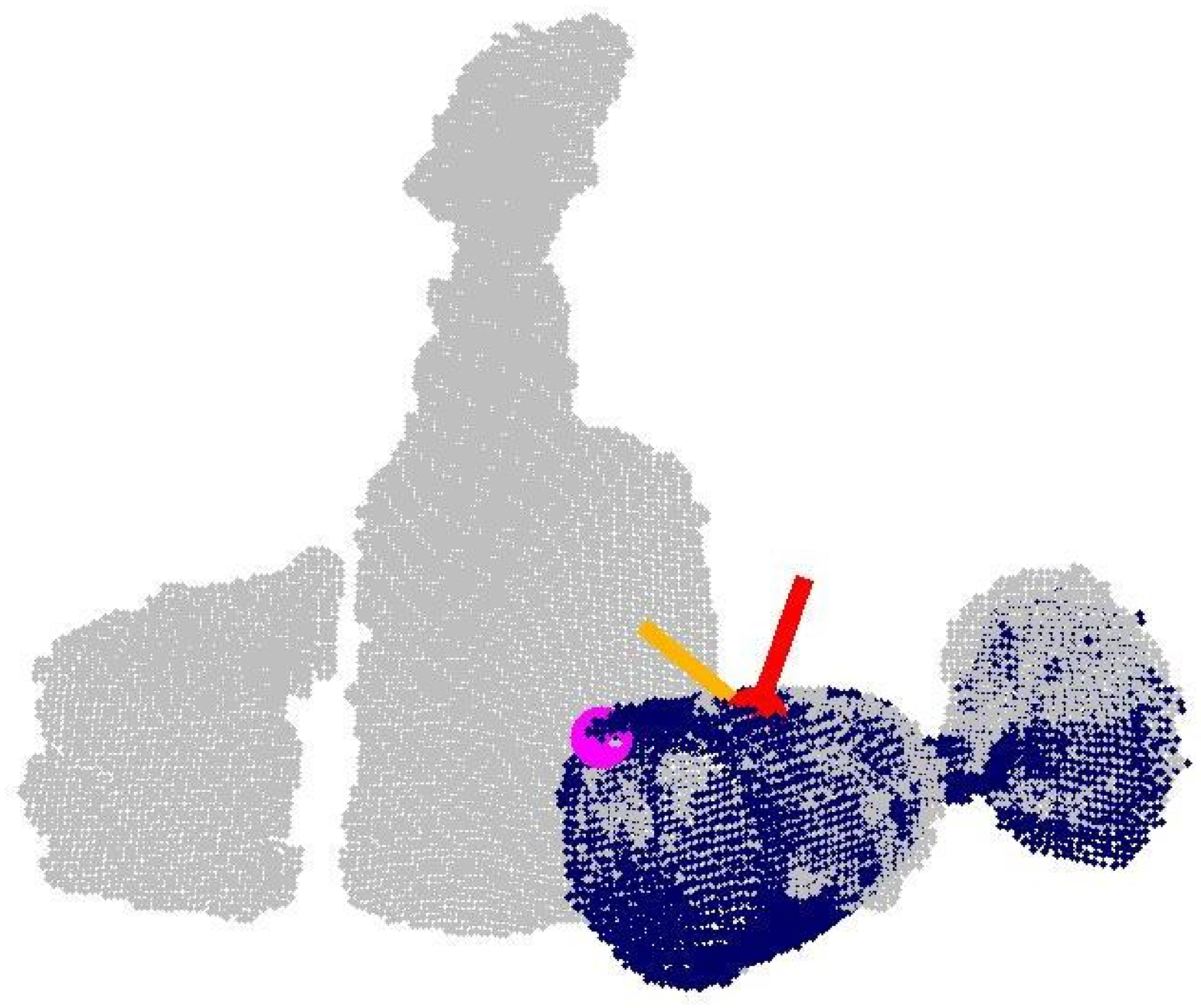}
  \caption{Examples of objects correctly aligned by BPA with only two correspondences.}
  \label{fig:two_correspondences}
\end{figure}

Next, the algorithm samples possible oriented feature correspondences from the scene to the model\footnote{We currently use
only FPFH correspondences in the first stage of detection as we did not find the addition of other feature
types, such as SIFT~\cite{lowe1999object} or SHOT~\cite{tombari2010unique}, to make any difference in our
detection rates.}.  Then, for each correspondence, a candidate object pose is sampled using BPA.
Given a set of sampled model poses from single correspondences, we then reject samples for which more than $20\%$ of a
subset of $500$ randomly-selected model points project into free space---places where the difference between observed range image
depth and predicted model depth is above $5cm$.  Next, we run a pose clustering stage, where we group correspondences
together whose sampled object poses are within $2.5cm$ and $\pi/16$ radians of one another.
After pose clustering, we reject any sample with less than two correspondences, then re-sample object poses with BPA.

At this stage, we have a set of possible model placement hypotheses, with at least two features correspondences each.
Because BPA uses additional orientation information, two correspondences is often all it takes to lock down a very
precise estimate of an object's pose when the correspondences are correct (Figure~\ref{fig:two_correspondences}).

We proceed with a second round of model placement validation and rejection, this time using a scoring function that includes
(1) range and normal differences, which are computed by projecting a new subset of $500$ randomly-selected model points
into the observed range image, (2) visibility---the ratio of model points in the subset that are unoccluded,
(3) edge likelihoods, computed by projecting the model's edge points from the closest stored viewpoint into the observed
edge image, and (4) edge visibility---the ratio of edge points that are unoccluded.  Each of the feature score components
is computed as a truncated (so as not to over-penalize outliers), average log-likelihood of observed features given model
feature distributions.  For score components (1) and (3), we weight the average log-likelihood by visibility probabilities,
which are equal to 1 if predicted depth $<$ observed depth, and $N(\Delta \textrm{depth} ; 0, \sigma_{vis}) / N(0;0,1)$
otherwise\footnote{We use $\sigma_{vis} = 1cm$ in all of our experiments.}.

After rejecting low-scoring samples in round 2, we then refine alignments by repeating the following three steps:
\begin{enumerate}
\item Assign observed super-pixel segments to the model.
\item Align model to the segments with BPA.
\item Accept the new alignment if the round 2 model placement score has improved.
\end{enumerate}
In step (1), we sample a set of assigned segments according to the probability that each segment belongs
to the model, which we compute as the ratio of segment points (sampled uniformly from the segment)
that are within $1cm$ in position and $\pi/16$ radians in normal orientation from the closest model
point.  In step (2), we randomly extract a subset of 10 segment points from the set of assigned segments,
find nearest neighbor correspondences from the keypoints to the model using the model distance transform,
and then use BPA to align the model to the 10 segment points.
Segment points are of two types---surface points and edge points.  We only assign segment edge points
to model edge points (as predicted from the given viewpoint), and surface points to surface points.
Figures~\ref{fig:scope_samples1} and~\ref{fig:scope_samples2} show examples of object alignments found
after segment alignment, where red points (with red normal vectors and orange principal curvature
vectors sticking out of them) indicate surface point correspondences, and magenta points (with no
orientations) are the edge point correspondences\footnote{In future work, we plan to incorporate
edge orientations as well.}.

After round 2 alignments, the system removes redundant samples (with the same or similar poses),
and then rejects low scoring samples using the scores found at the end of the segment alignment
process.  Then, it performs a final, gradient-based alignment, which
optimizes the model poses with a local hill-climbing search to directly maximize model placement
scores.  Since this alignment step is by far the slowest, it is critical that the system has
performed as much alignment with BPA and has rejected as many low-scoring samples as possible,
to reduce the computational burden.

Finally, the system performs a third round of model placement evaluation, then sorts the pose samples
by score and returns them.  This third round of scoring includes several additional feature score
components:
\begin{itemize}
\item Random walk score---starting from an observed point corresponding to the model,
take a random walk in the edge image (to stay within predicted object boundaries),
then measure the distance from the new observed point to the closest model point.
\item Occlusion edge score---evaluate how well model occlusion edges (where the model surface changes
from visible to occluded) fits the observed edge image.
\item FPFH score---computes how well observed and model FPFH features match.
\item Segment score---computes distances from segment points to nearest model points.
\item Segment affinity score---measures how consistent the set of assigned segments
is with respect to predicted object boundaries (as measured by the observed edge image, and by
differences in segment positions and normals).
\end{itemize}

\section{Detecting Multiple Objects in Clutter}

To detect multiple objects in a scene, we run the individual object detector from the
previous section to obtain the 50 best model placements for each model, along with their
individual scores.  Then, following Aldoma et. al~\cite{aldoma2012global}, we use simulated
annealing to optimize the subset of model placements (out of $50 \times N$ for $N$ models)
according to a multi-object-placement score, which we compute as a weighted sum of the following
score components: (1) the average of single object scores, weighted by the number of observed points each object
explains, (2) the ratio of explained / total observed points, and (3) a small penalty for the total number of detected objects.
We also keep track of the top $100$ multi-object-placement samples found during optimization, so 
we can return a set of possible scene interpretations to the user (in the spirit of interpretation tree
methods~\cite{grimson1987localizing}).  This is particularly useful for robot vision systems because they can use
tracking, prior knowledge, or other sensory input (like touch) to provide additional validation of model placements,
and we don't want detections from a single RGB-D image to filter out possible model placements prematurely.


\begin{figure*}[t!]
    \includegraphics[height=.65in]{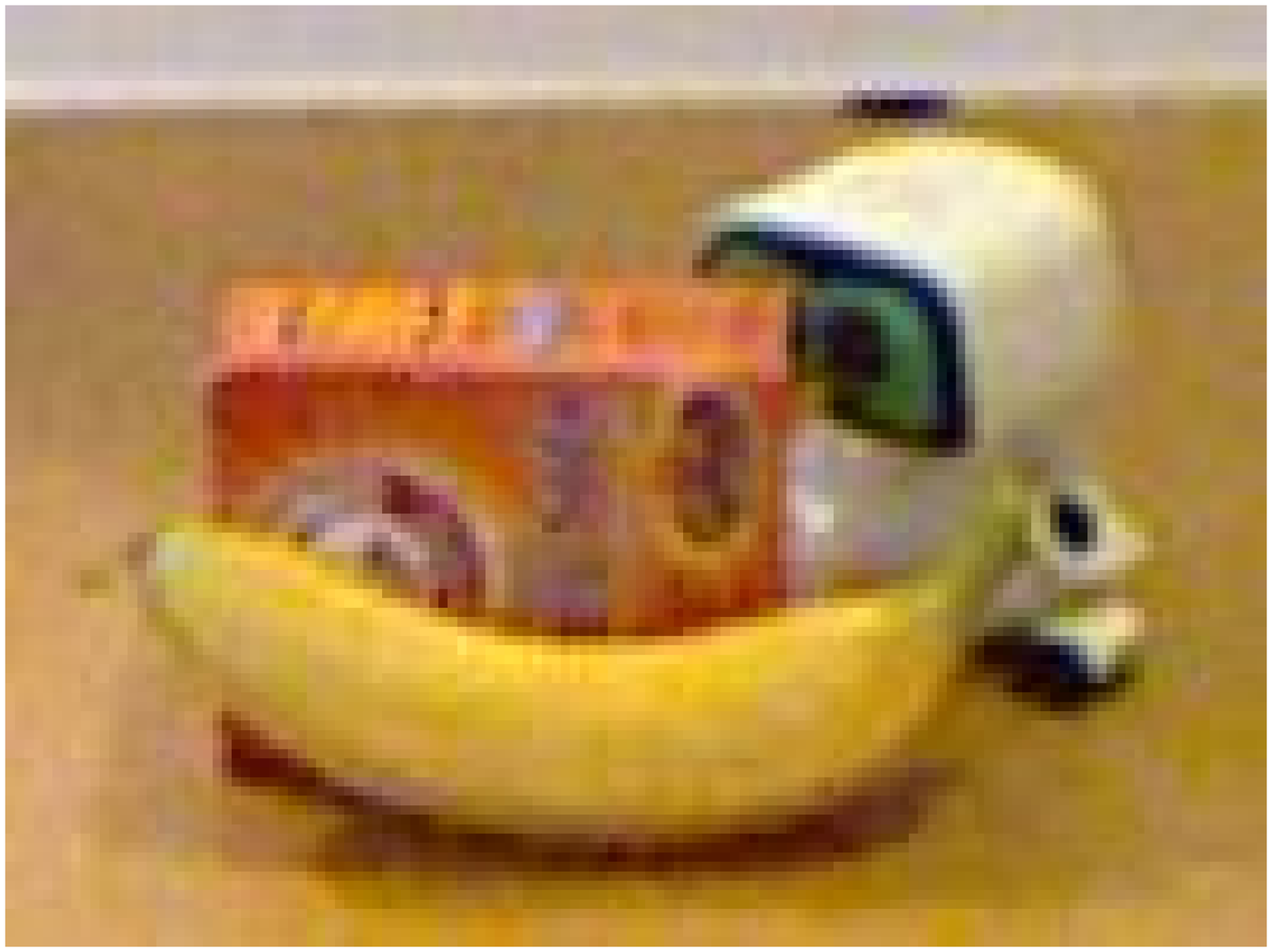}%
    \includegraphics[height=.65in]{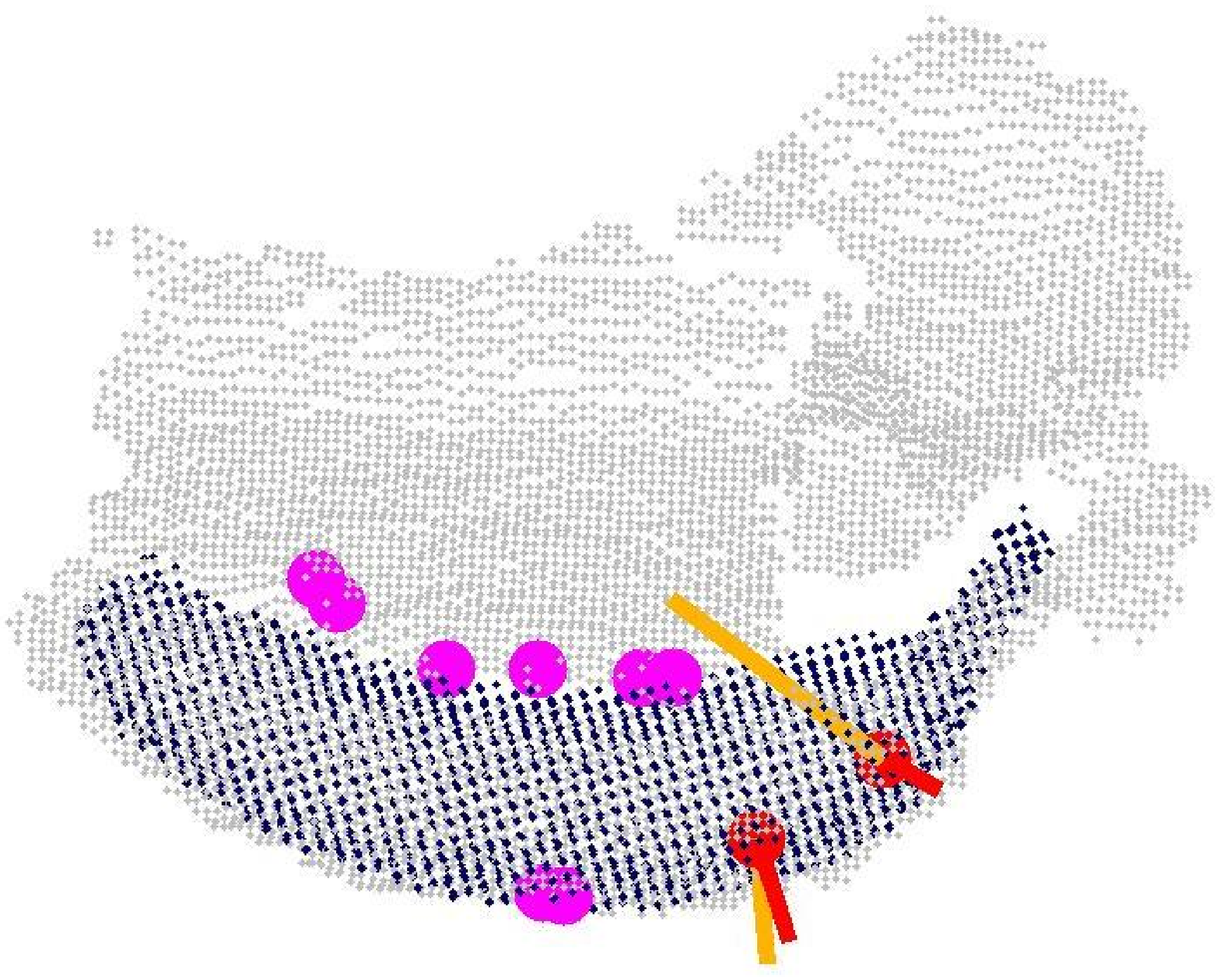}%
    \includegraphics[height=.65in]{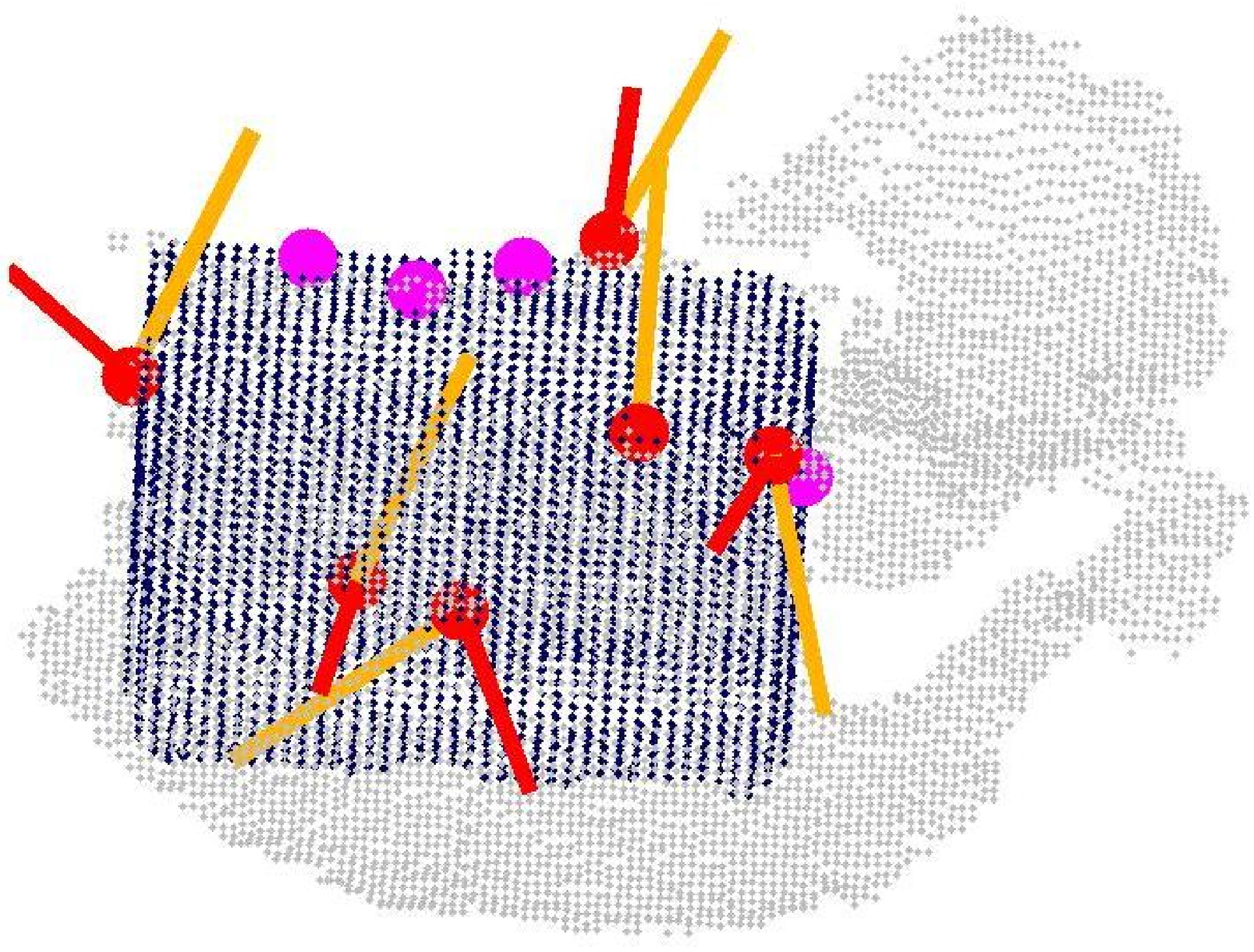}%
    \includegraphics[height=.65in]{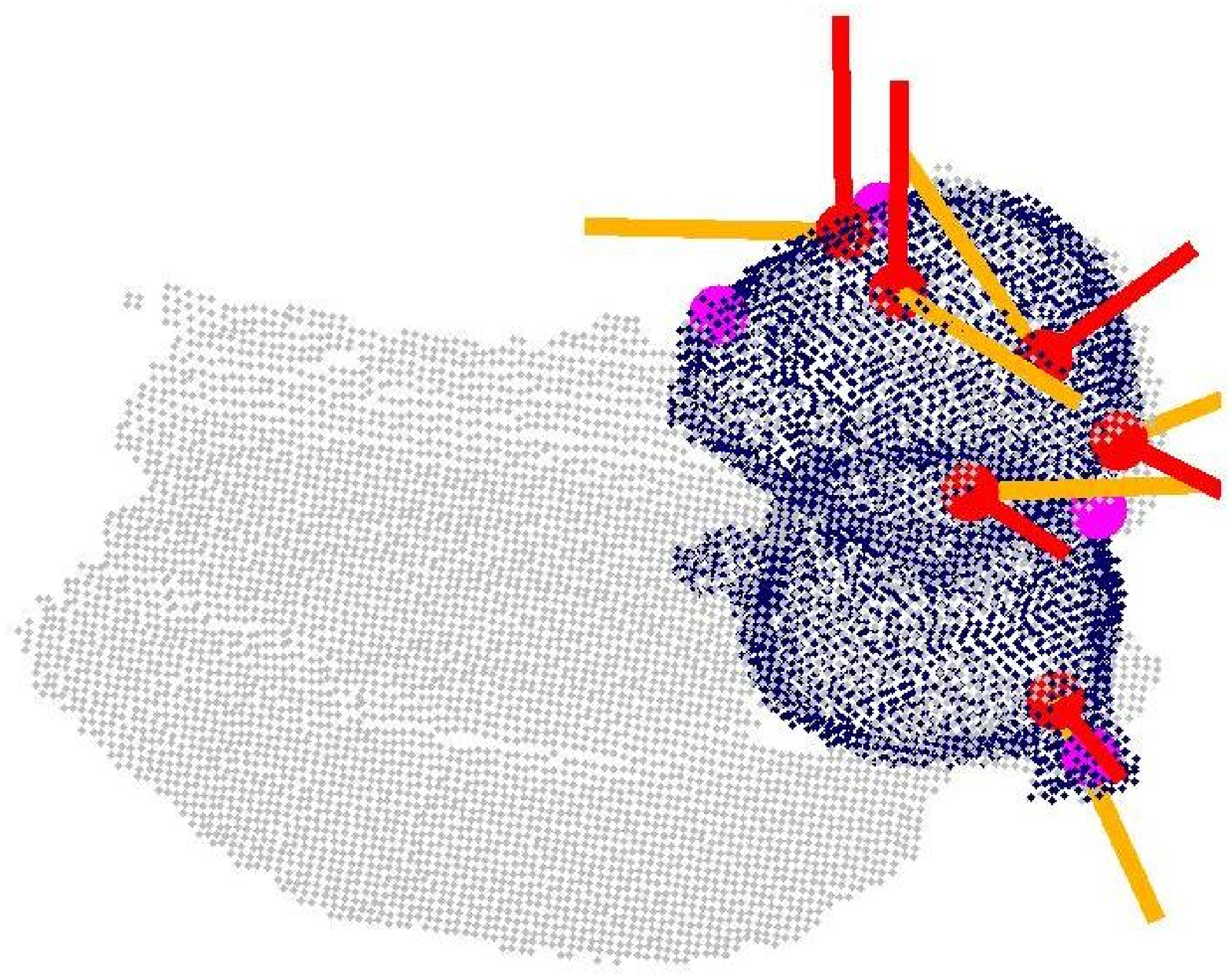}%
  \hfill
    \includegraphics[height=.65in]{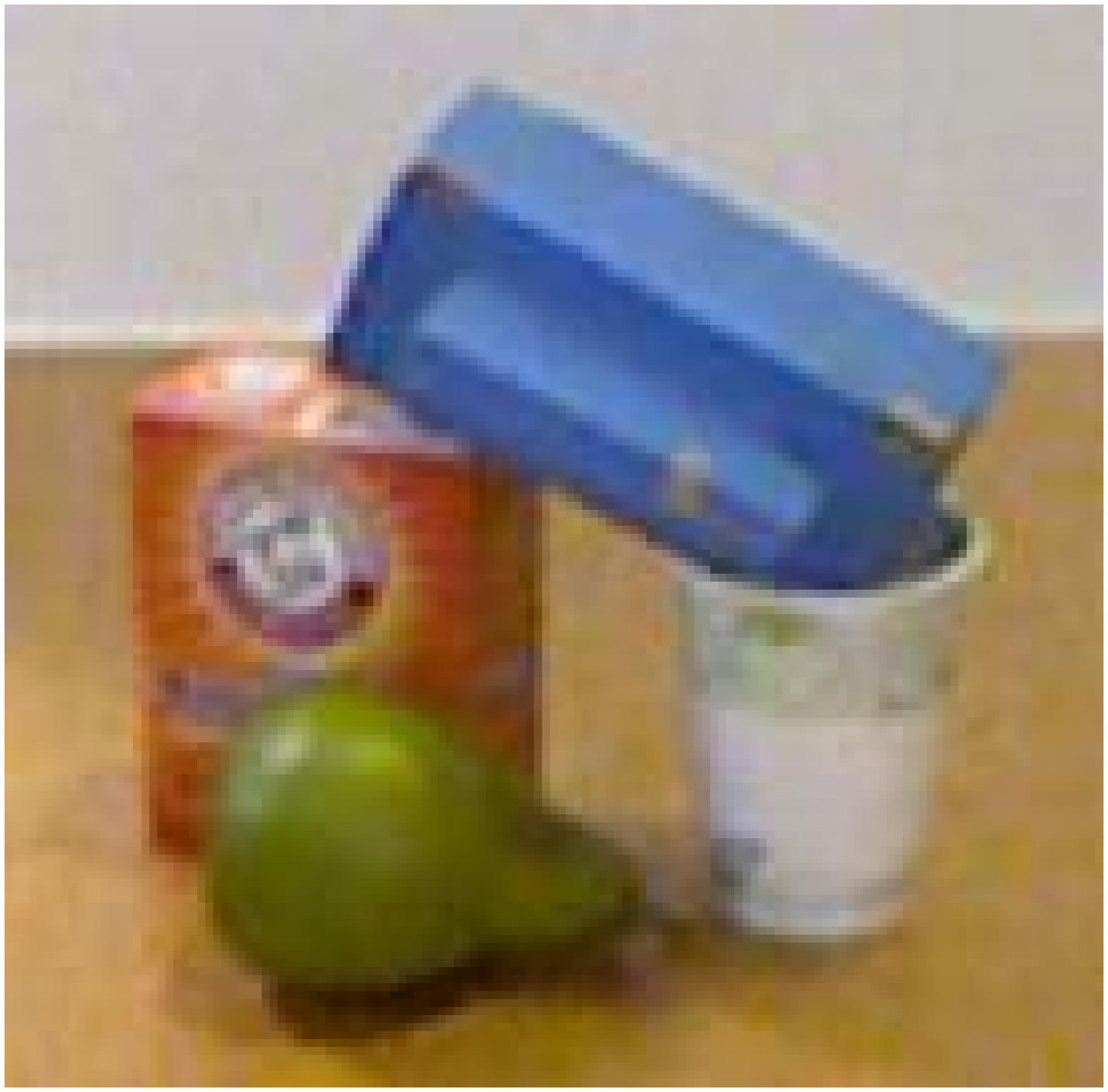}%
    \includegraphics[height=.65in]{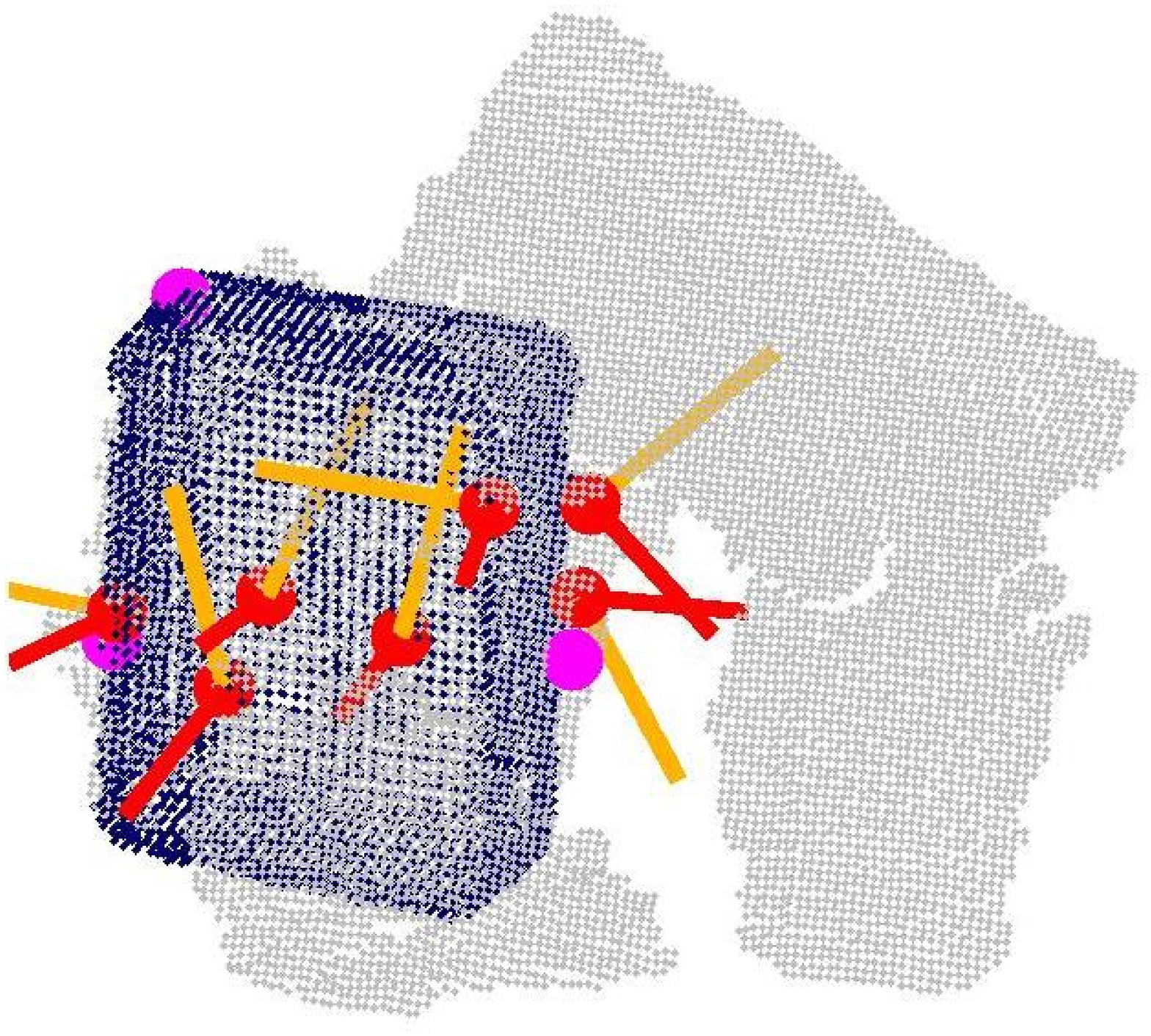}%
    \includegraphics[height=.65in]{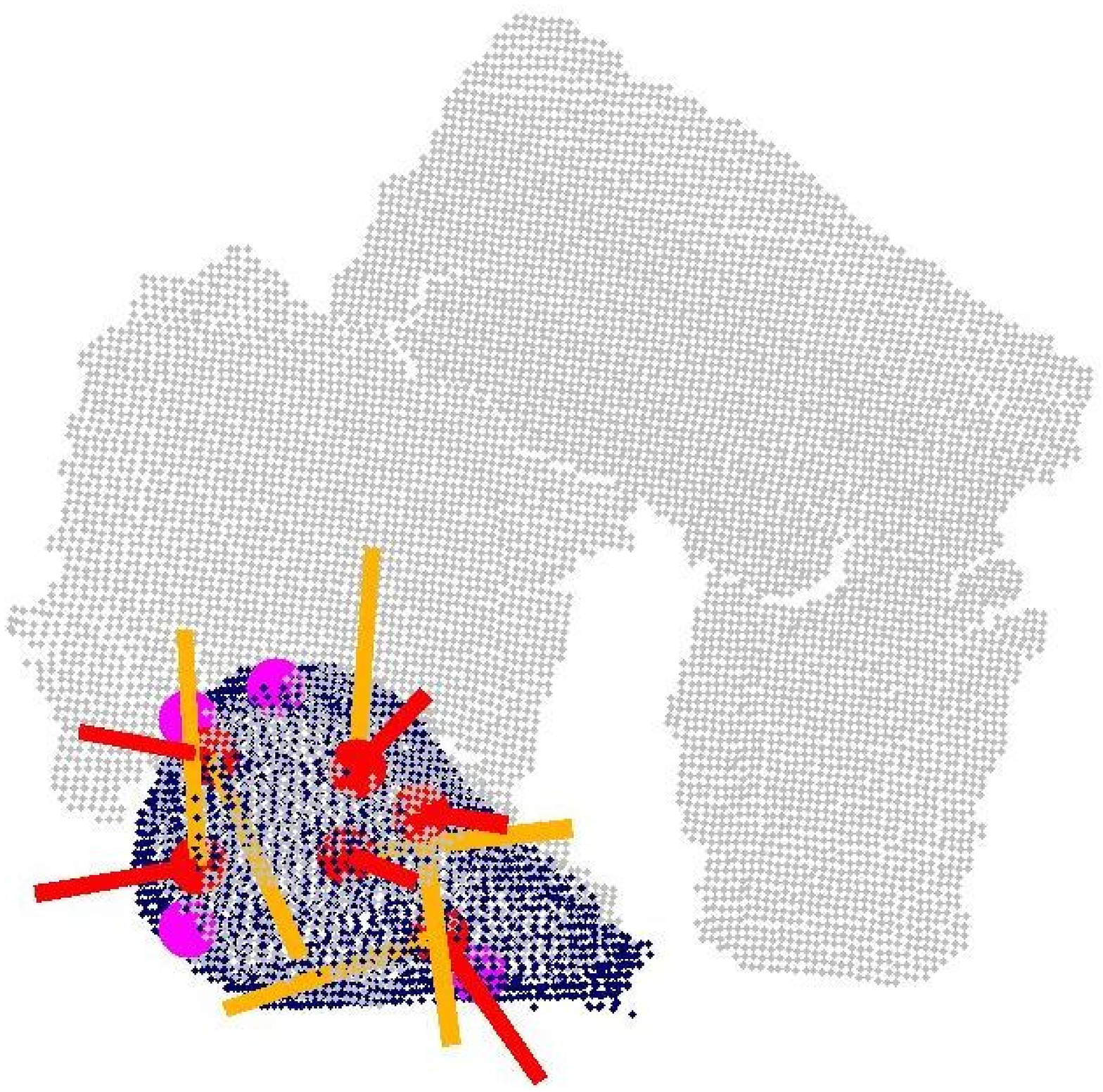}%
    \includegraphics[height=.65in]{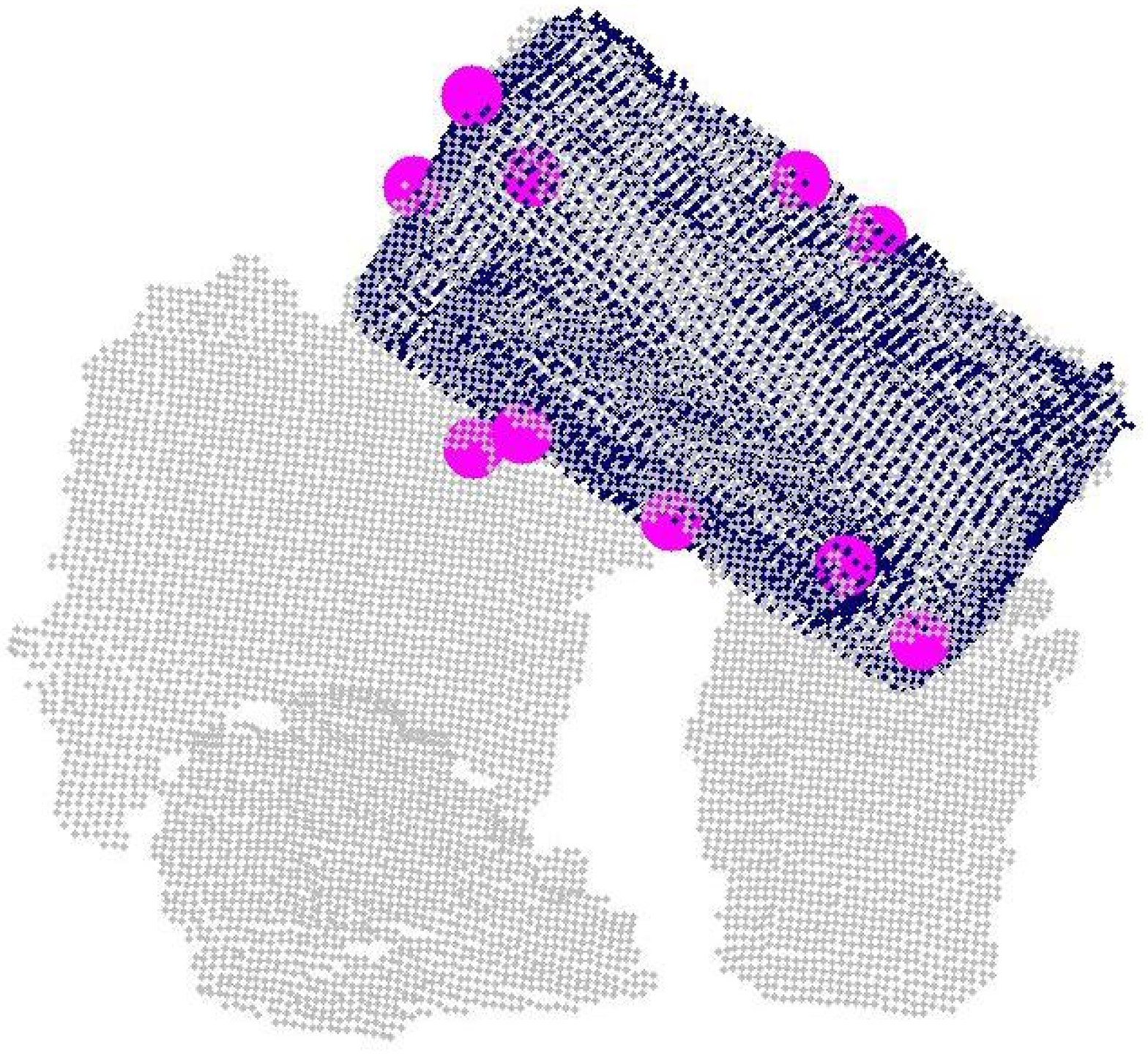}%
    \includegraphics[height=.65in]{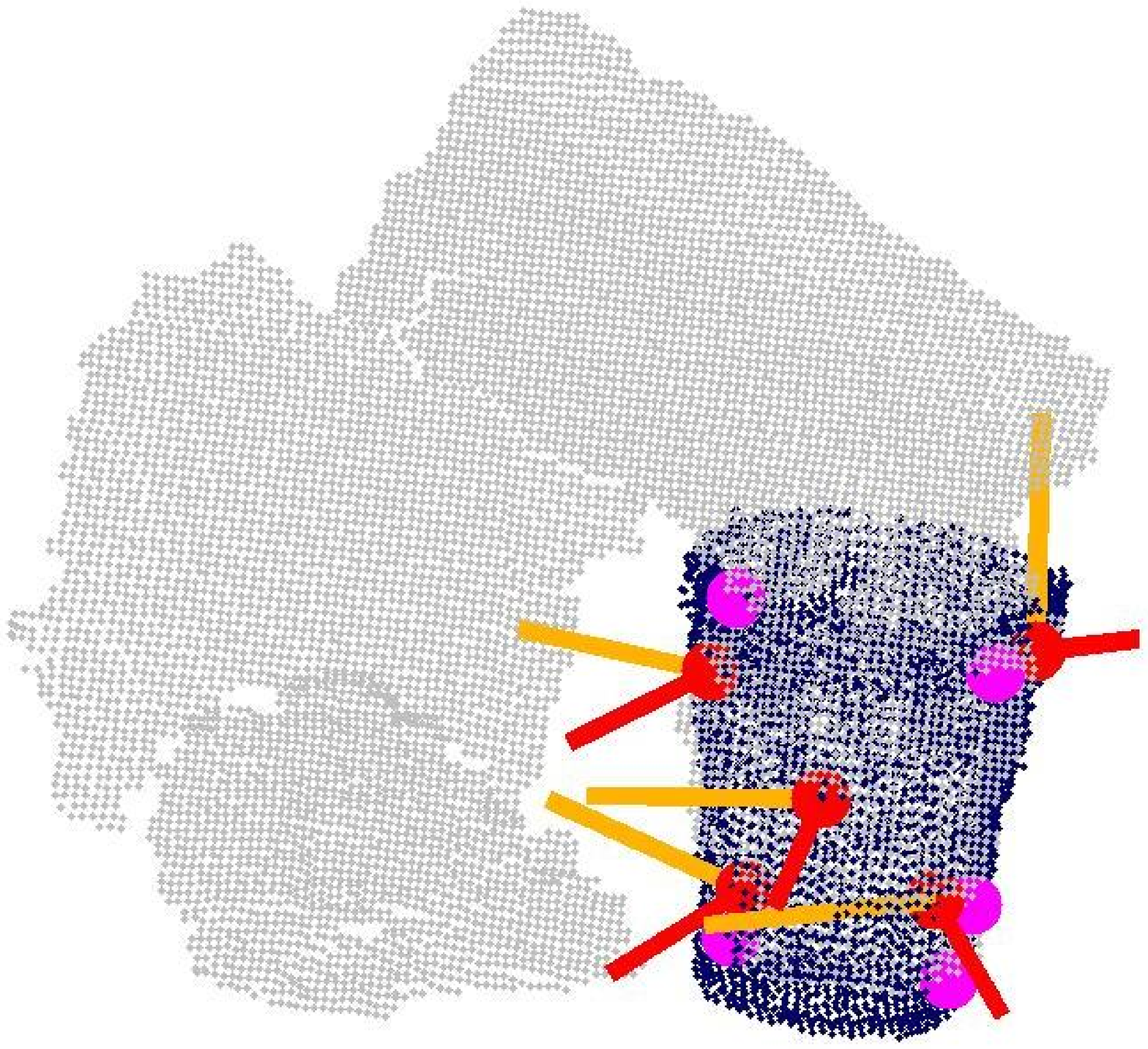} \\ \\
    \includegraphics[height=.65in]{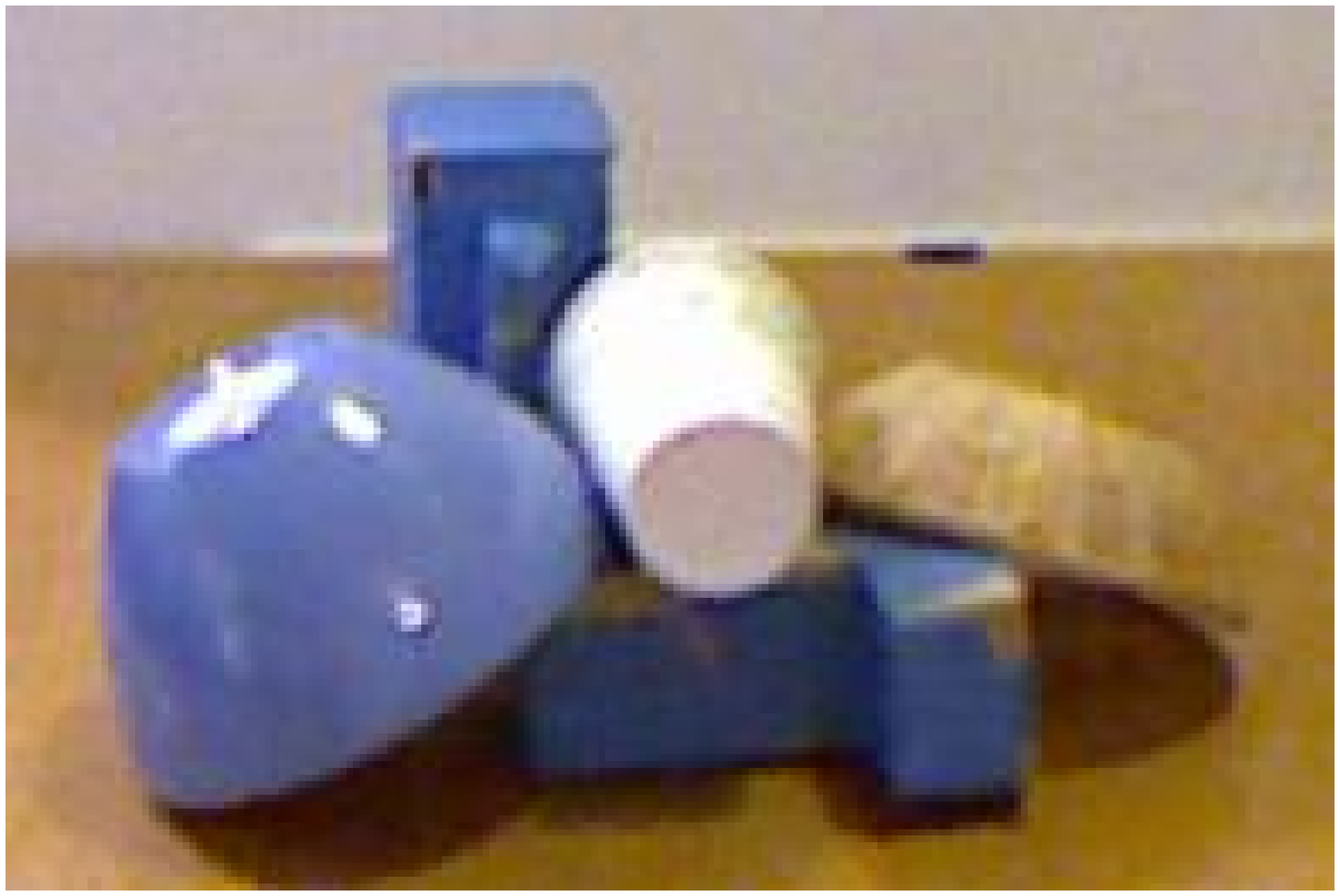}%
    \includegraphics[height=.65in]{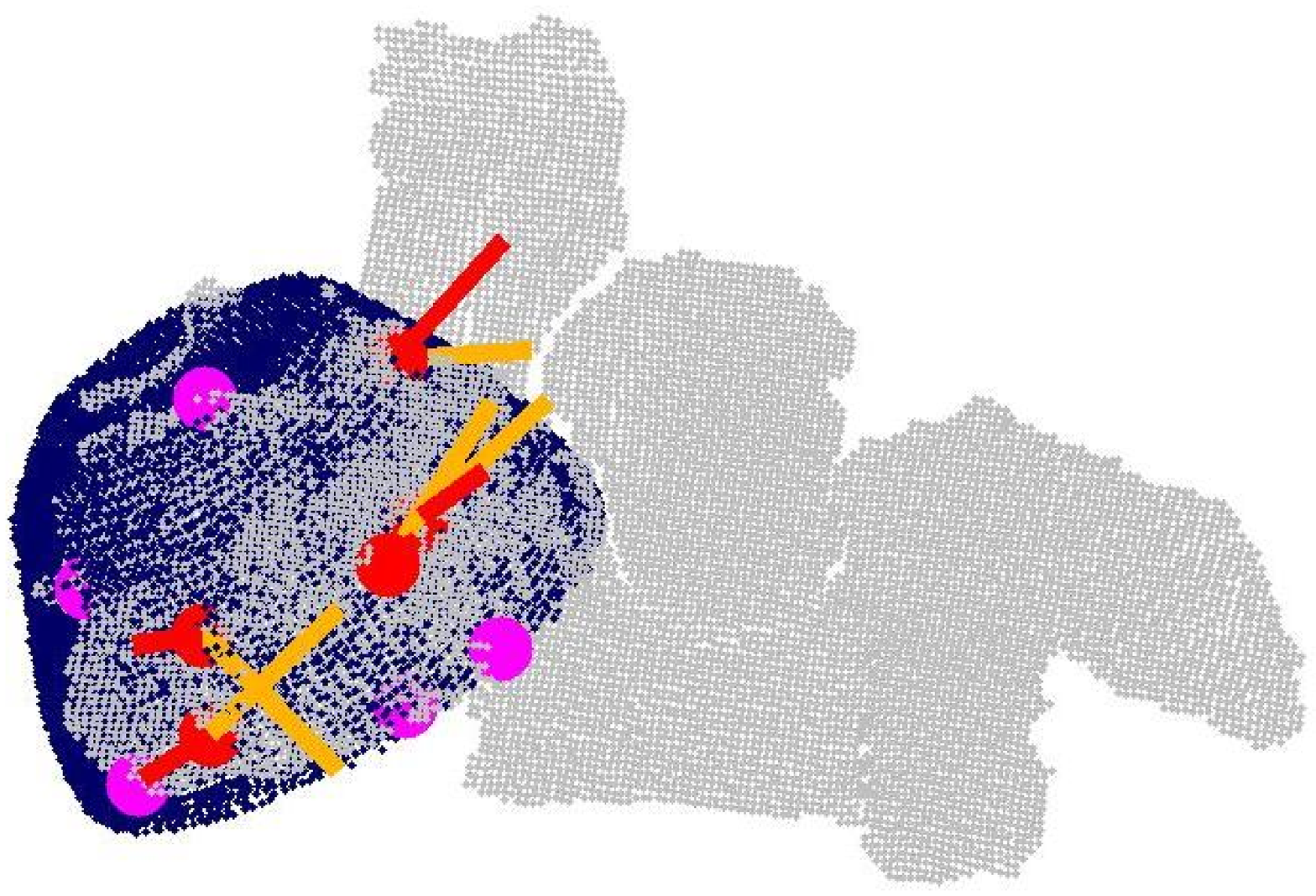}%
    \includegraphics[height=.65in]{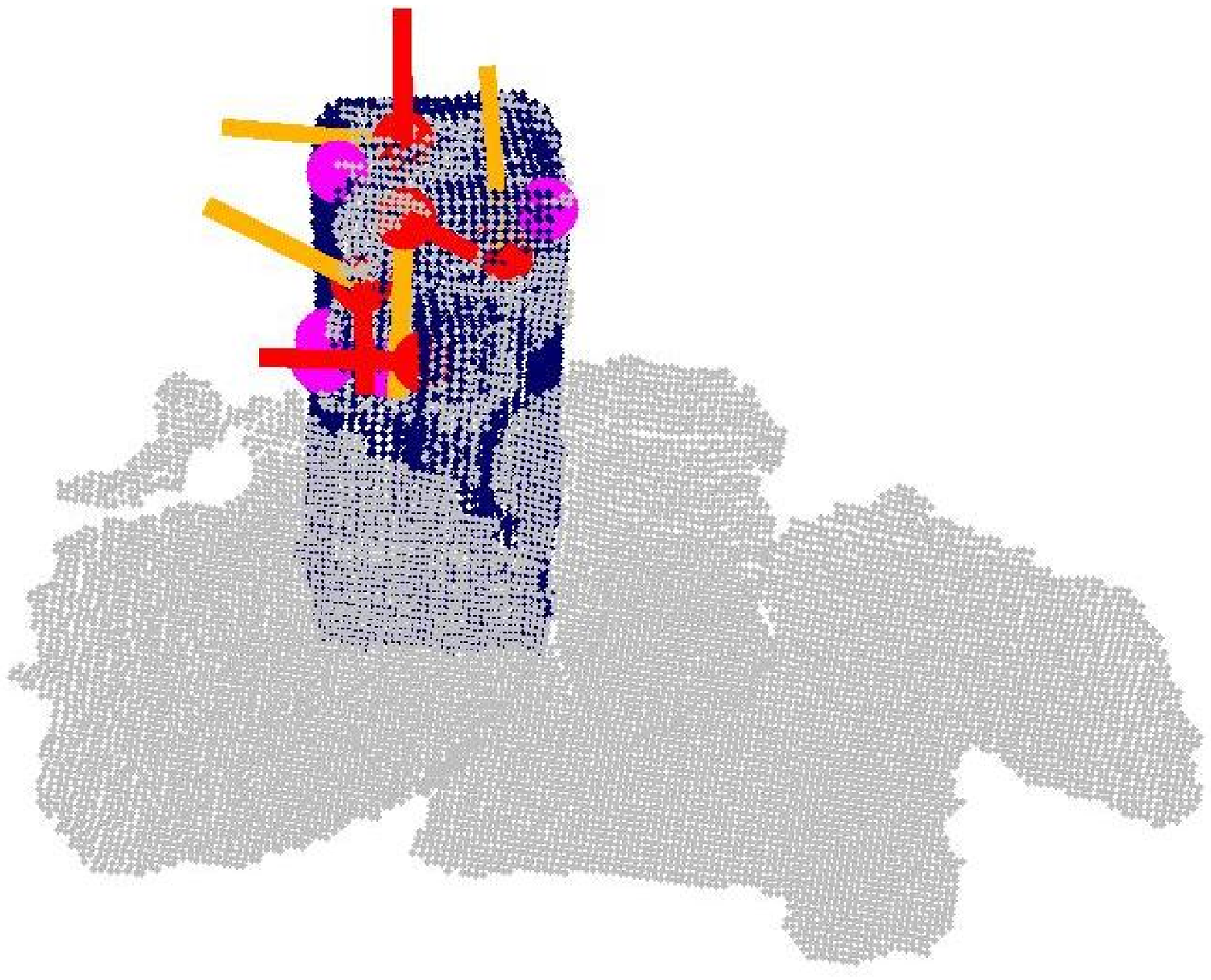}%
    \includegraphics[height=.65in]{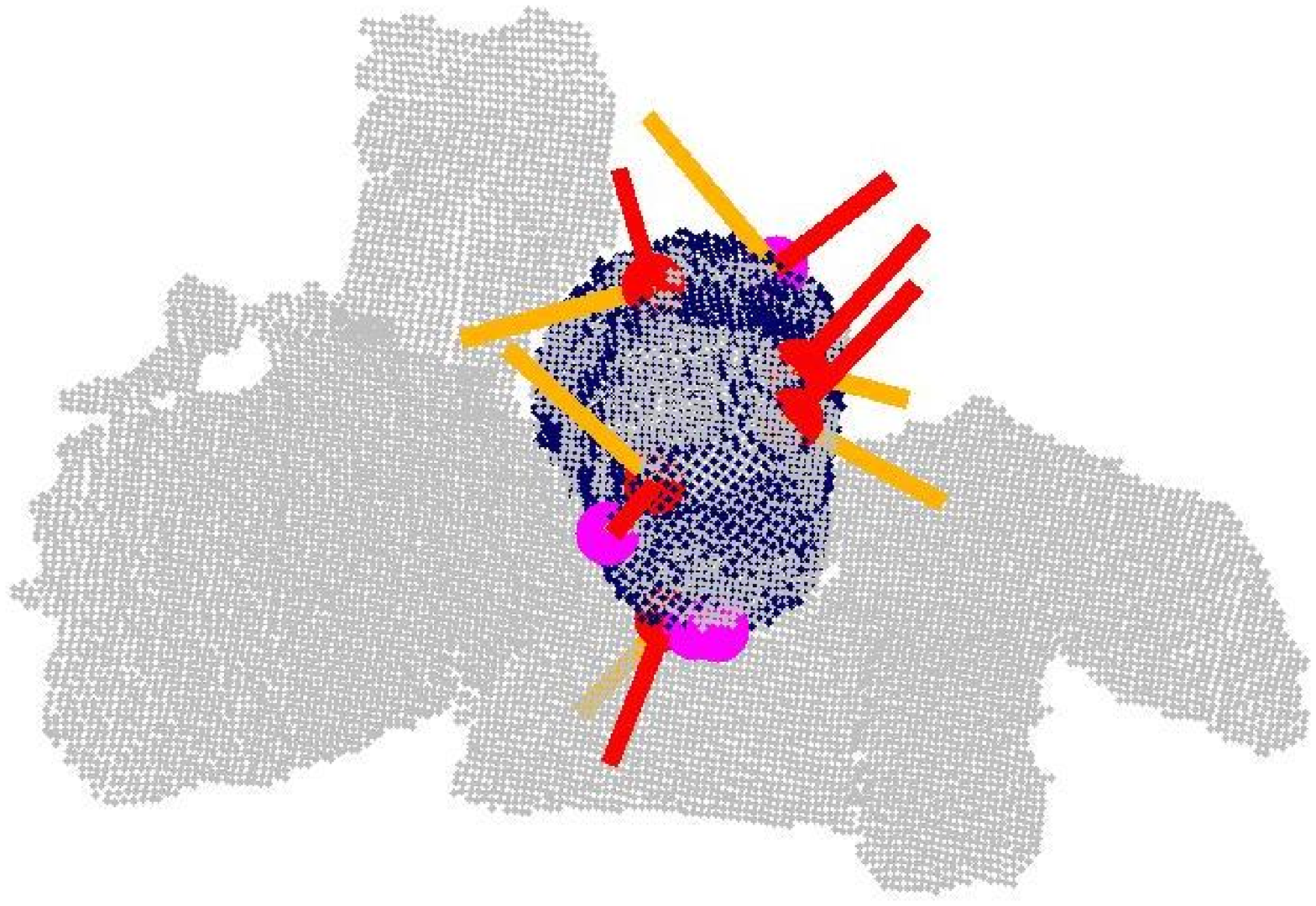}%
    \includegraphics[height=.65in]{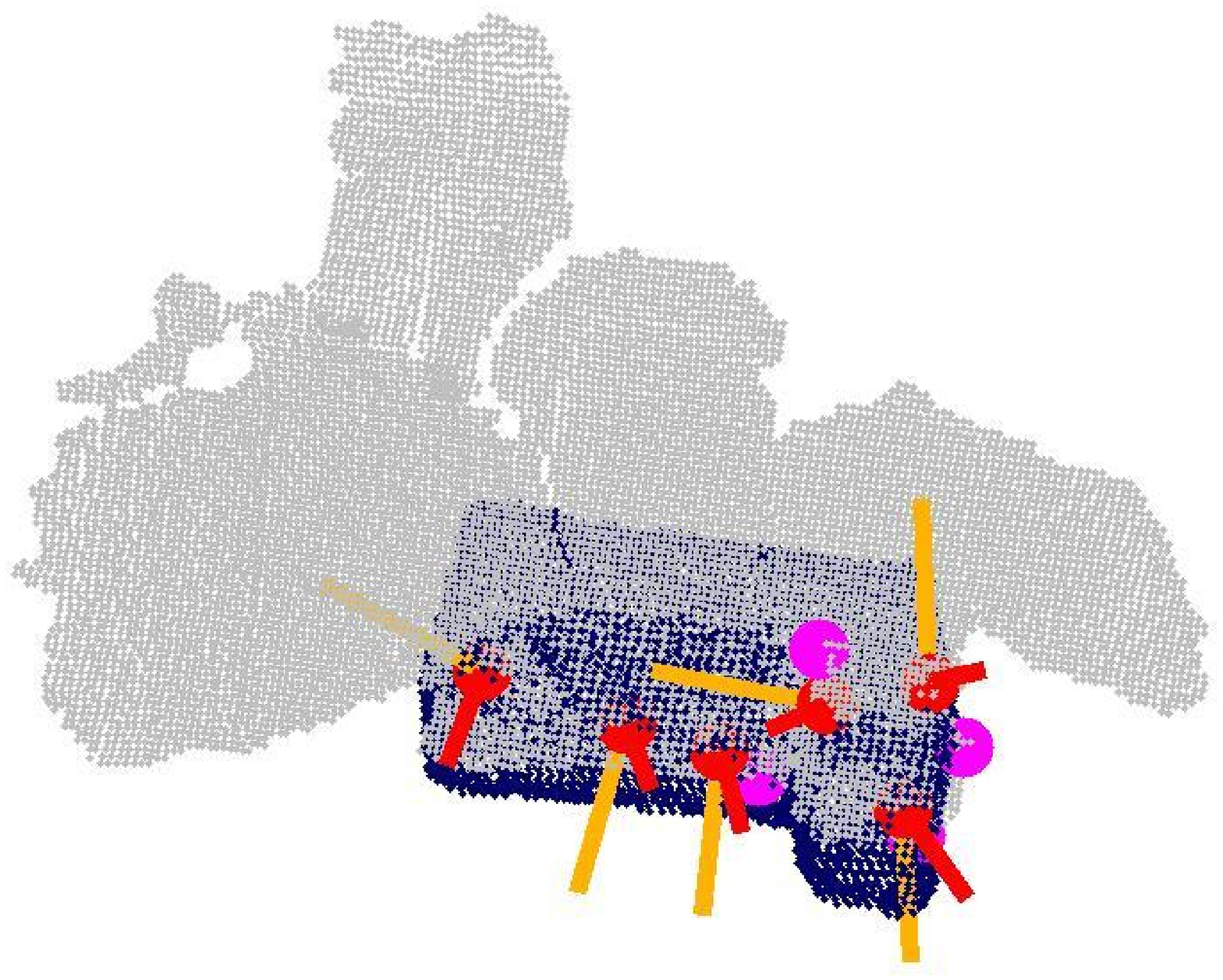}%
    \includegraphics[height=.65in]{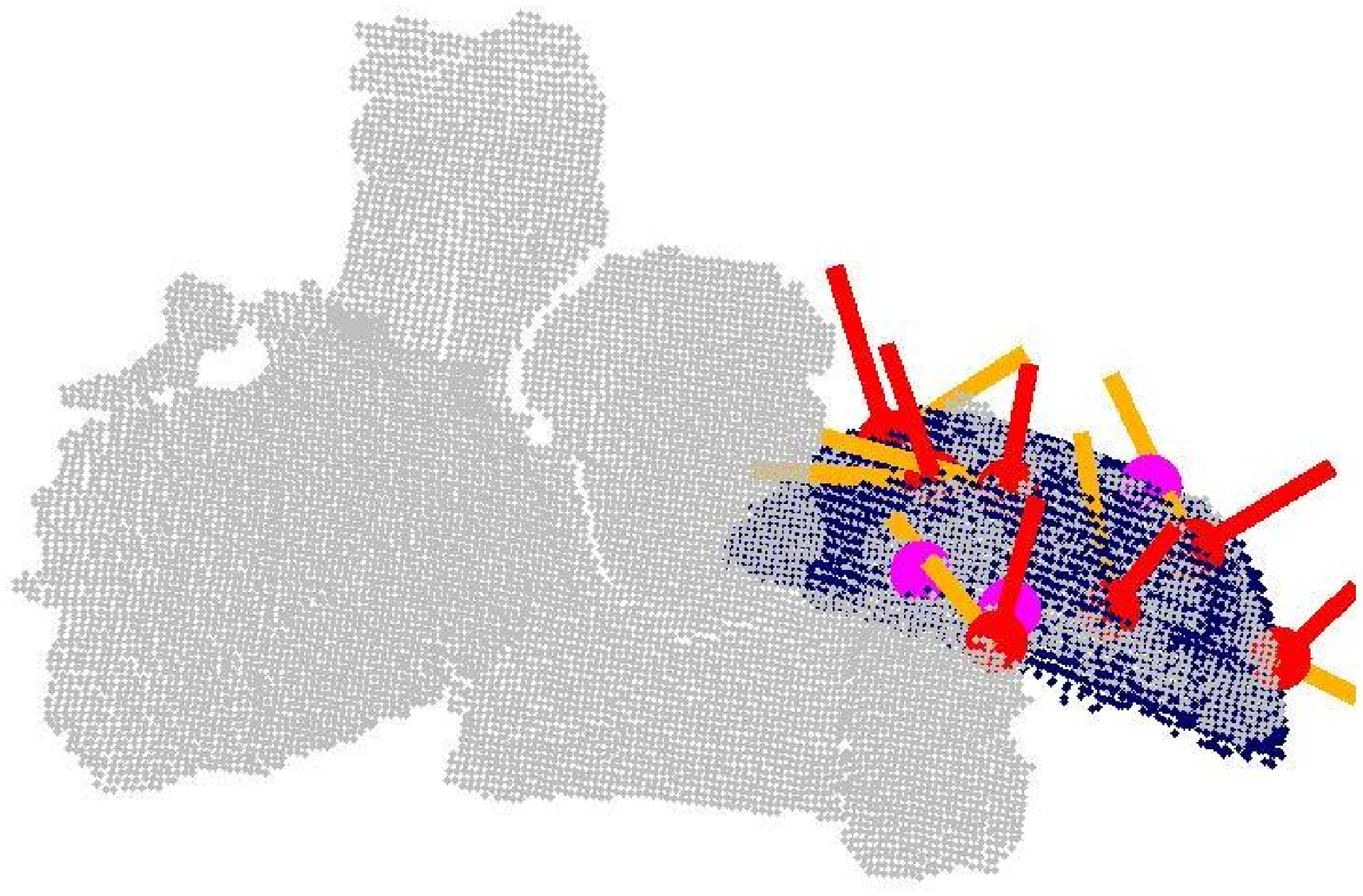} \\ \\
    \includegraphics[height=.65in]{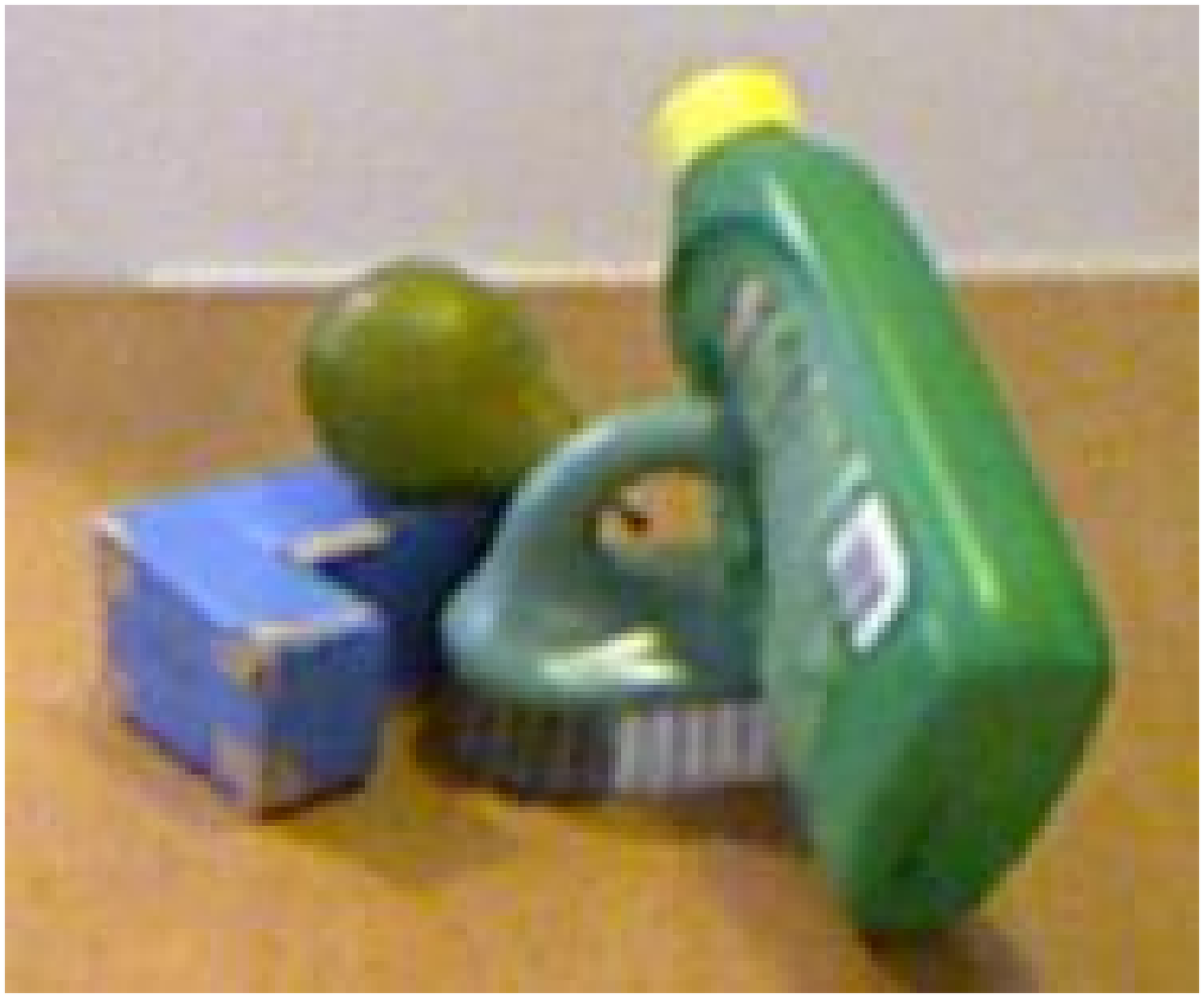}%
    \includegraphics[height=.65in]{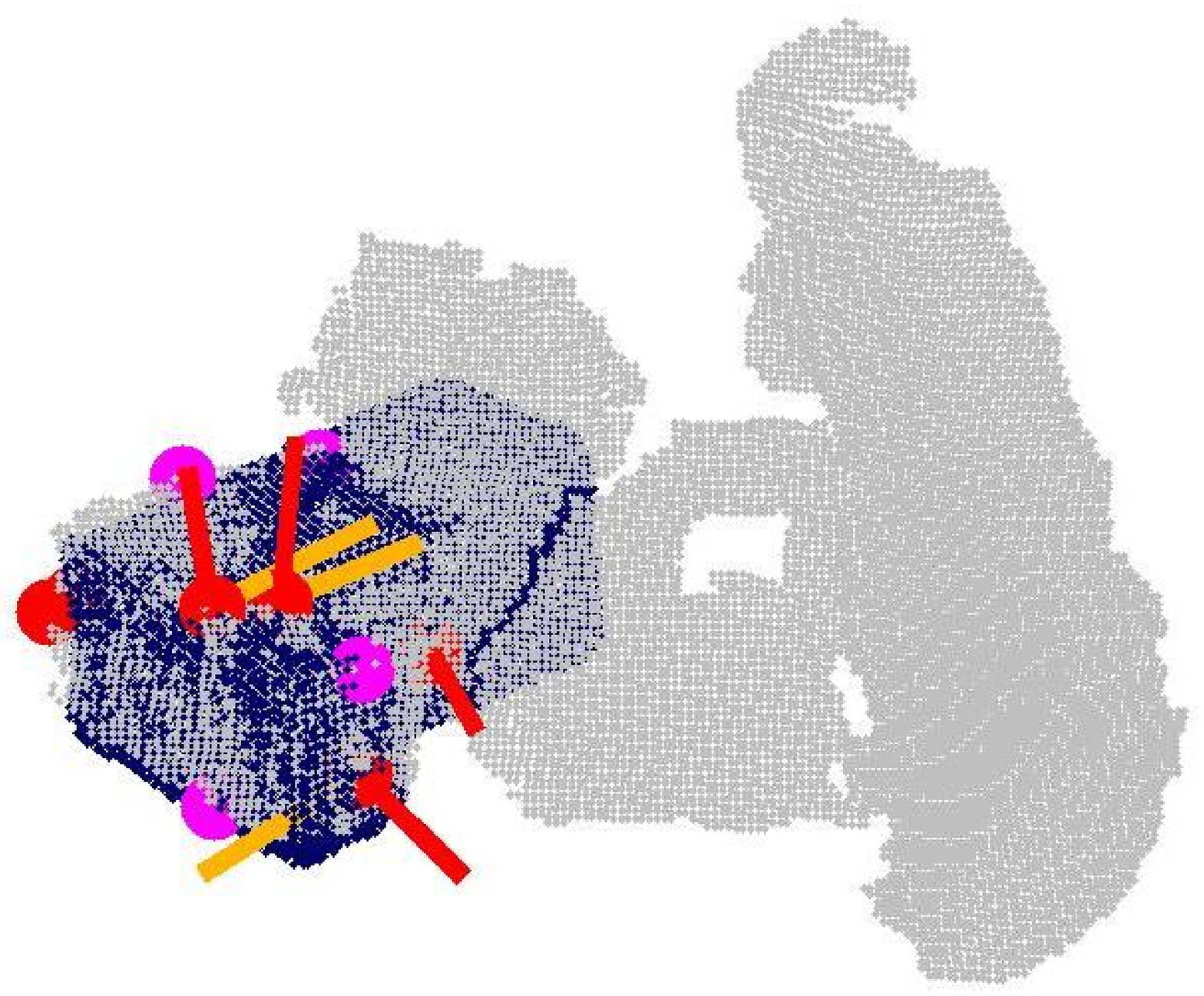}%
    \includegraphics[height=.65in]{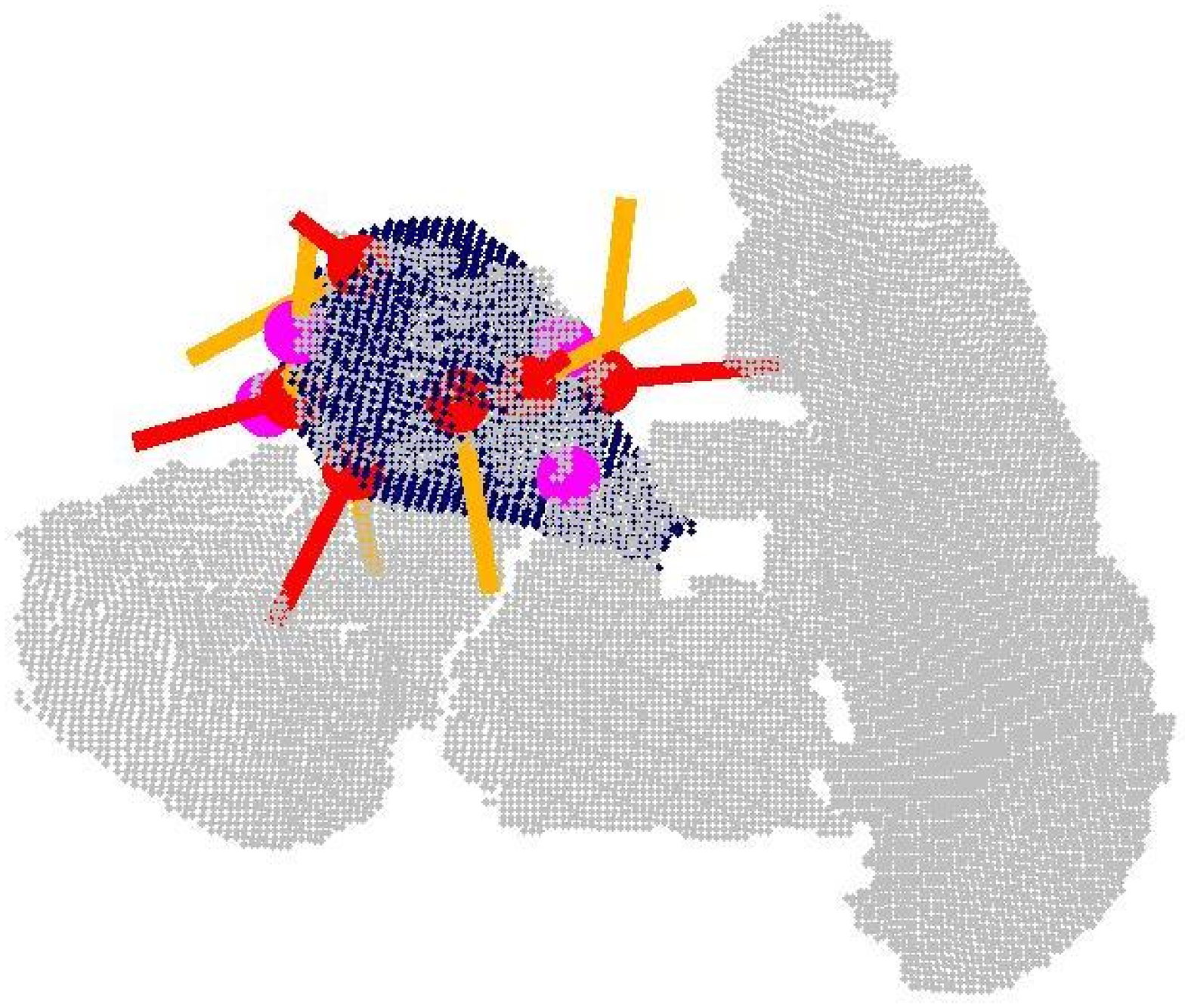}%
    \includegraphics[height=.65in]{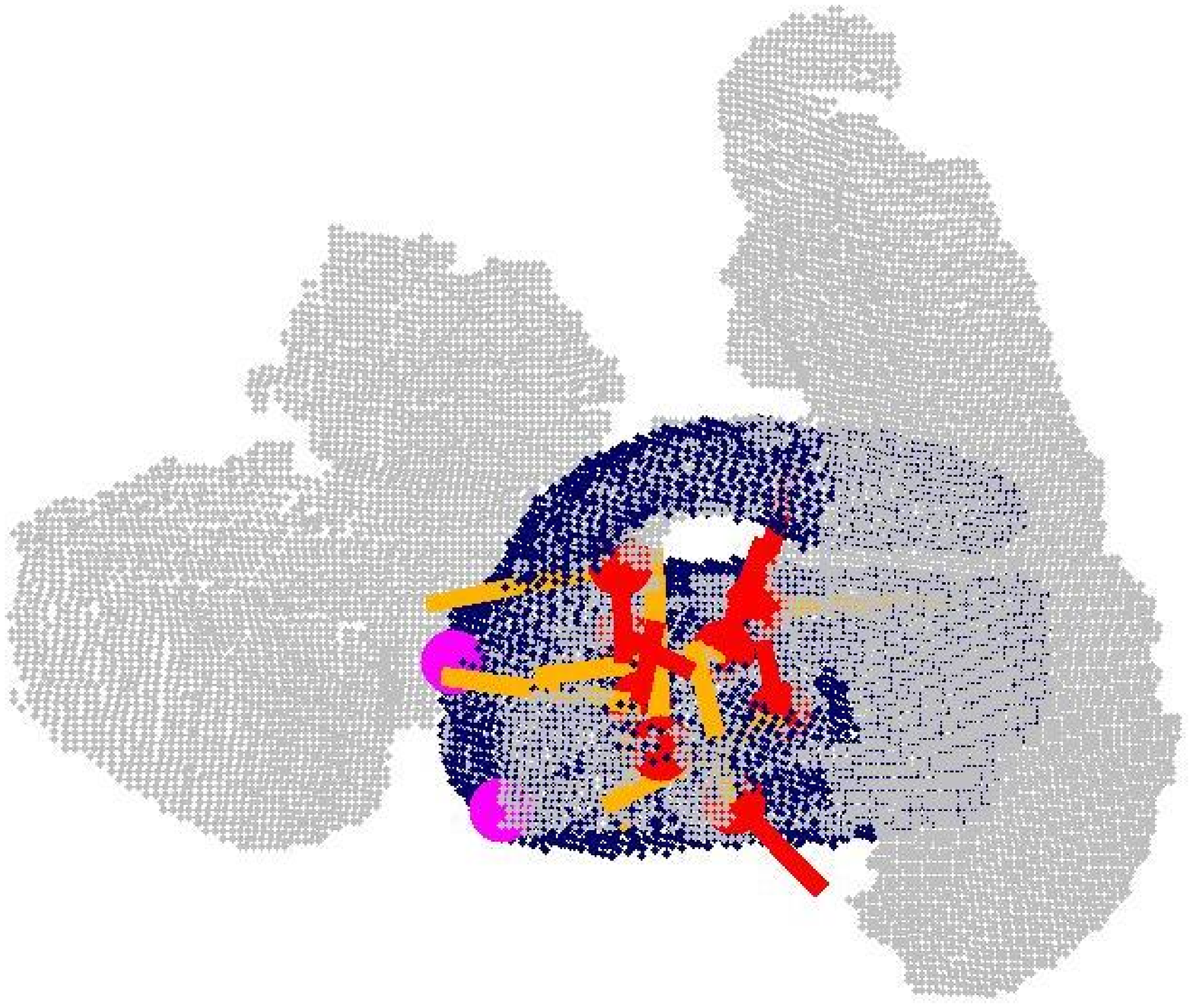}%
    \includegraphics[height=.65in]{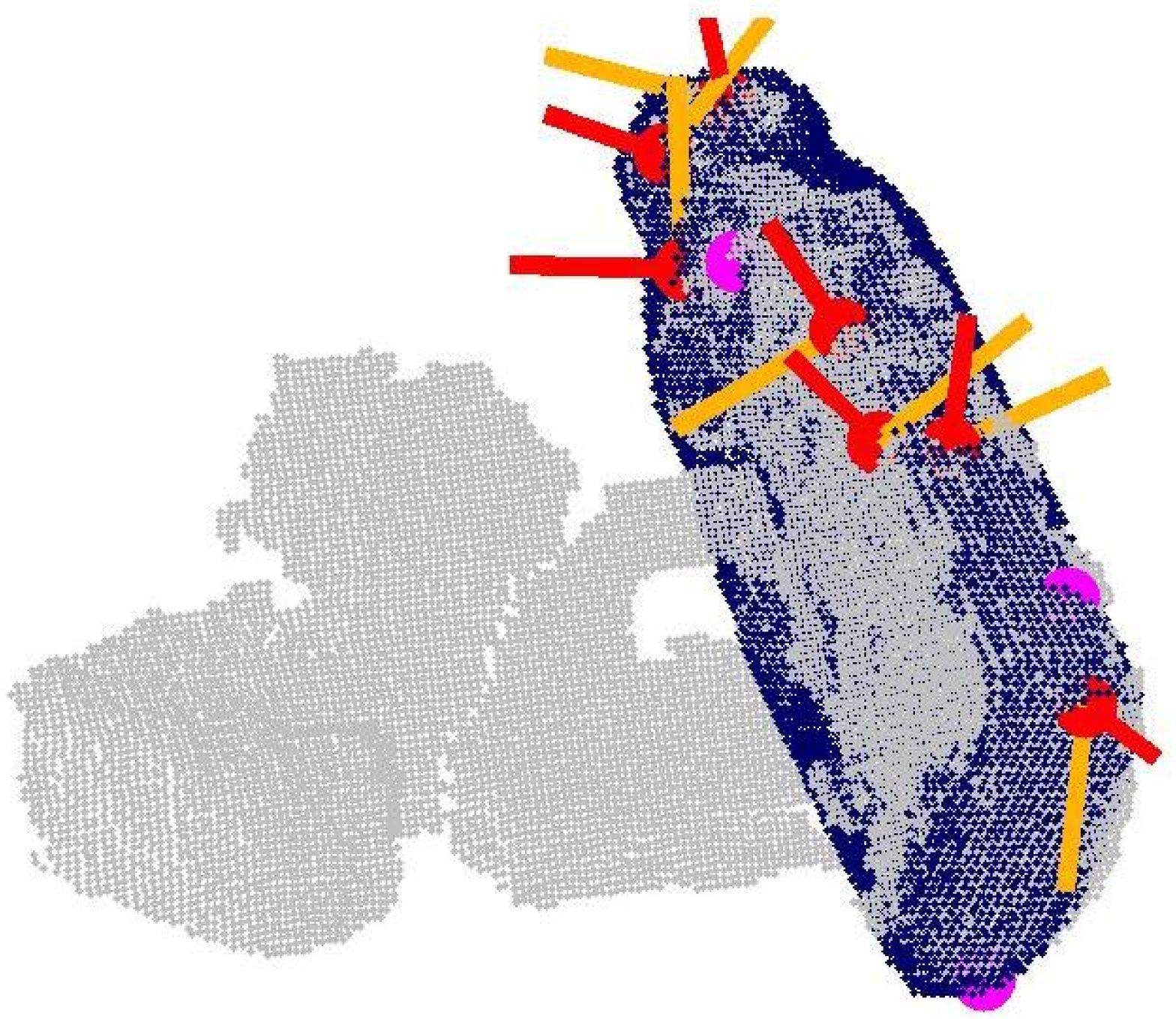}
    \caption{Object detections found with our system, along with the feature
      correspondences that BPA used to align the model.  Surface features are indicated by
      red points, with lines sticking out of them to indicate orientations (red for normals,
      orange for principal curvatures).  Edge features (which are orientation-less) are shown
      by magenta points.}
  \label{fig:scope_samples2}
\end{figure*}

\begin{figure}[t!]
  \centering
  \includegraphics[width=.049\textwidth]{figs/sope_samples/cloud_3_1.ps}%
  \includegraphics[width=.049\textwidth]{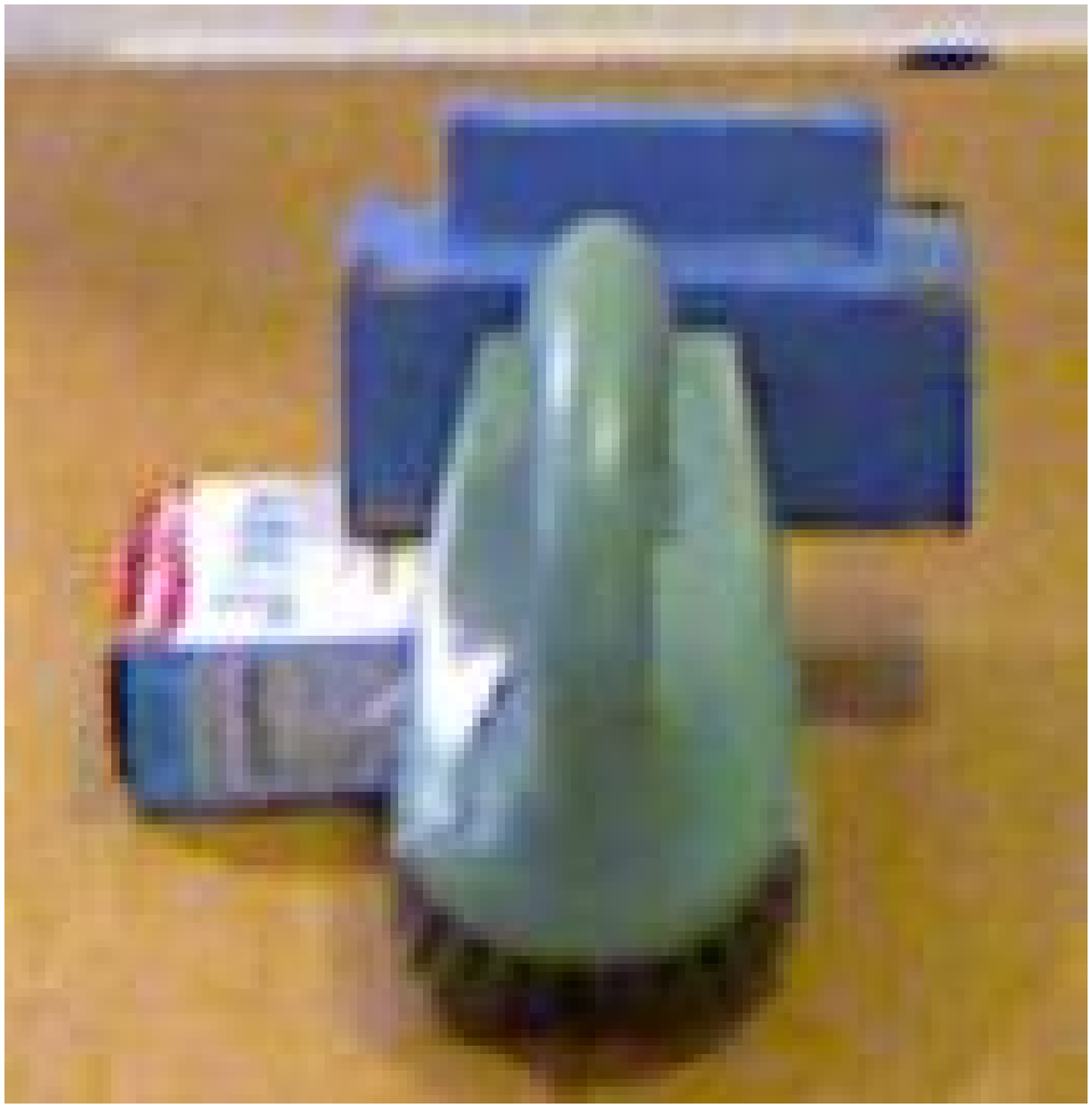}%
  \includegraphics[width=.049\textwidth]{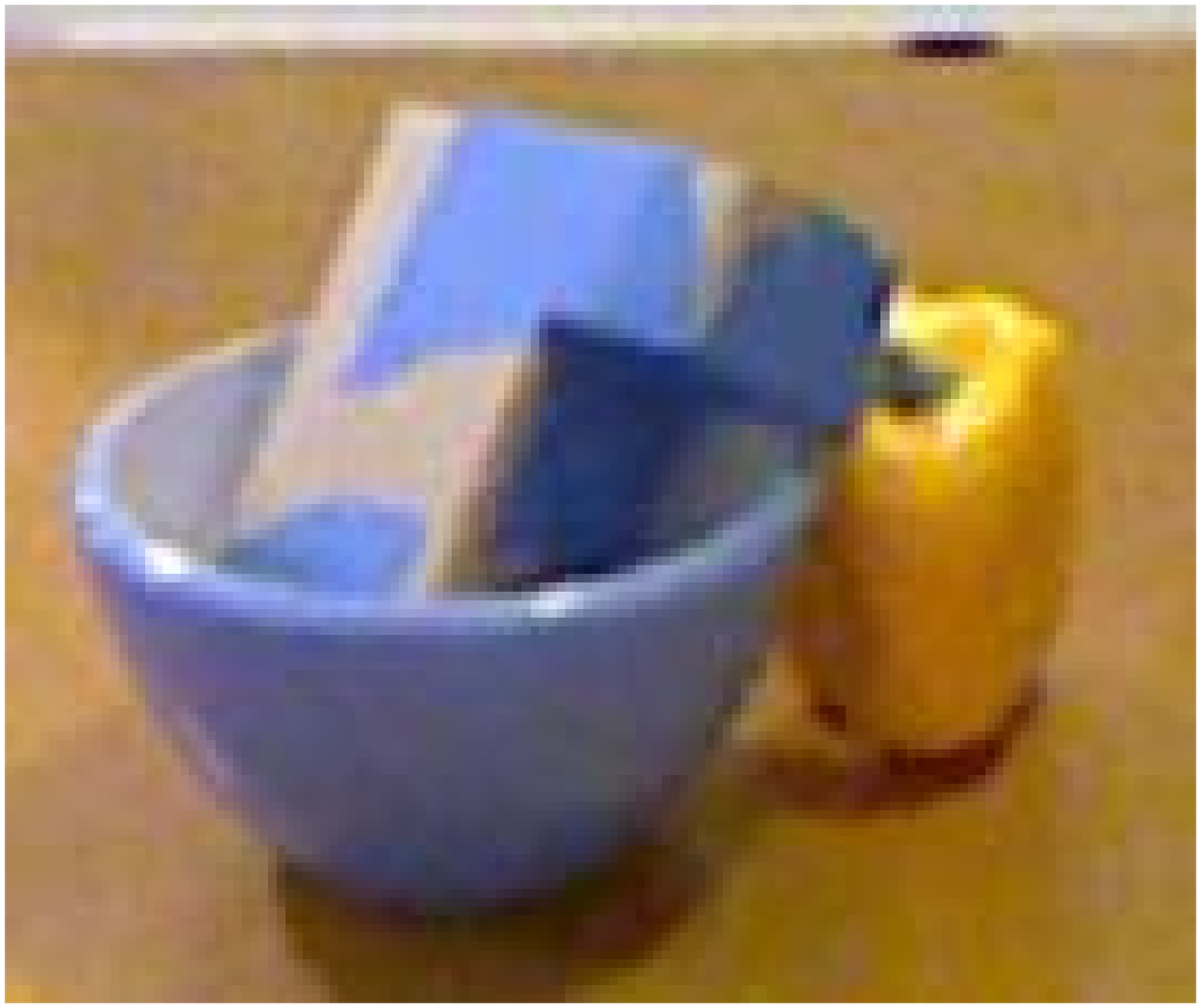}%
  \includegraphics[width=.035\textwidth]{figs/sope_samples/cloud_3_4.ps}%
  \includegraphics[width=.049\textwidth]{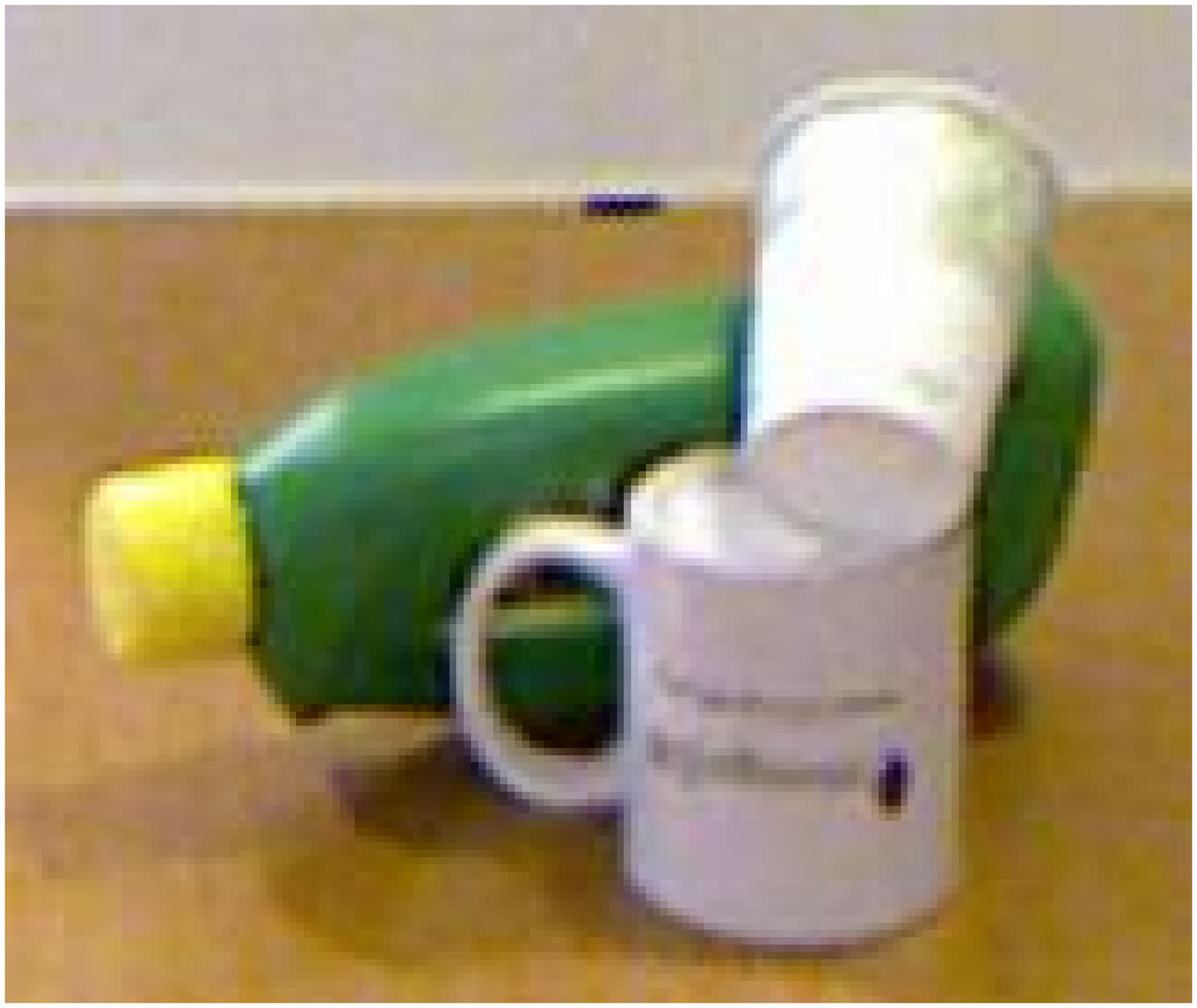}%
  \includegraphics[width=.049\textwidth]{figs/sope_samples/cloud_3_6.ps}%
  \includegraphics[width=.049\textwidth]{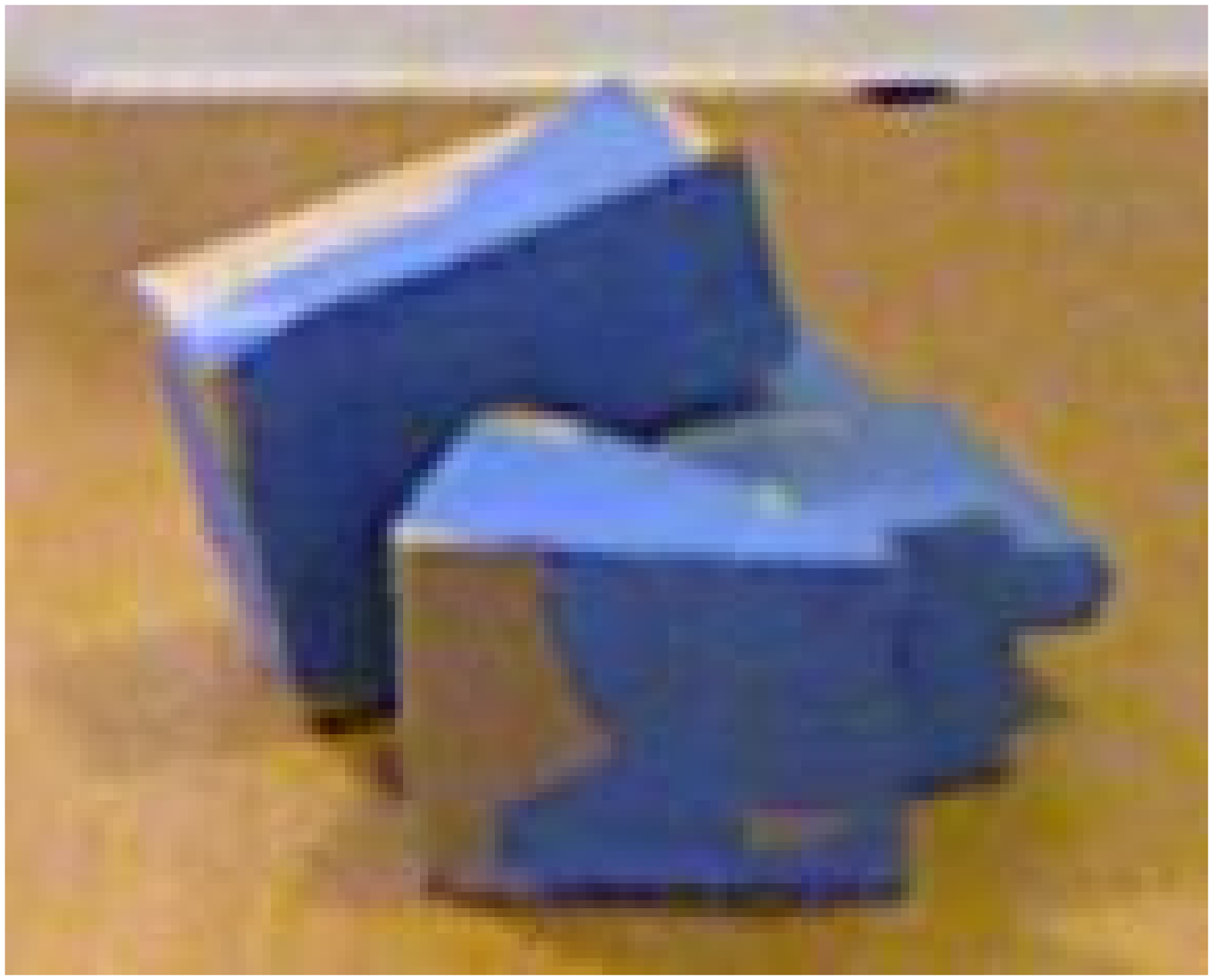}%
  \includegraphics[width=.049\textwidth]{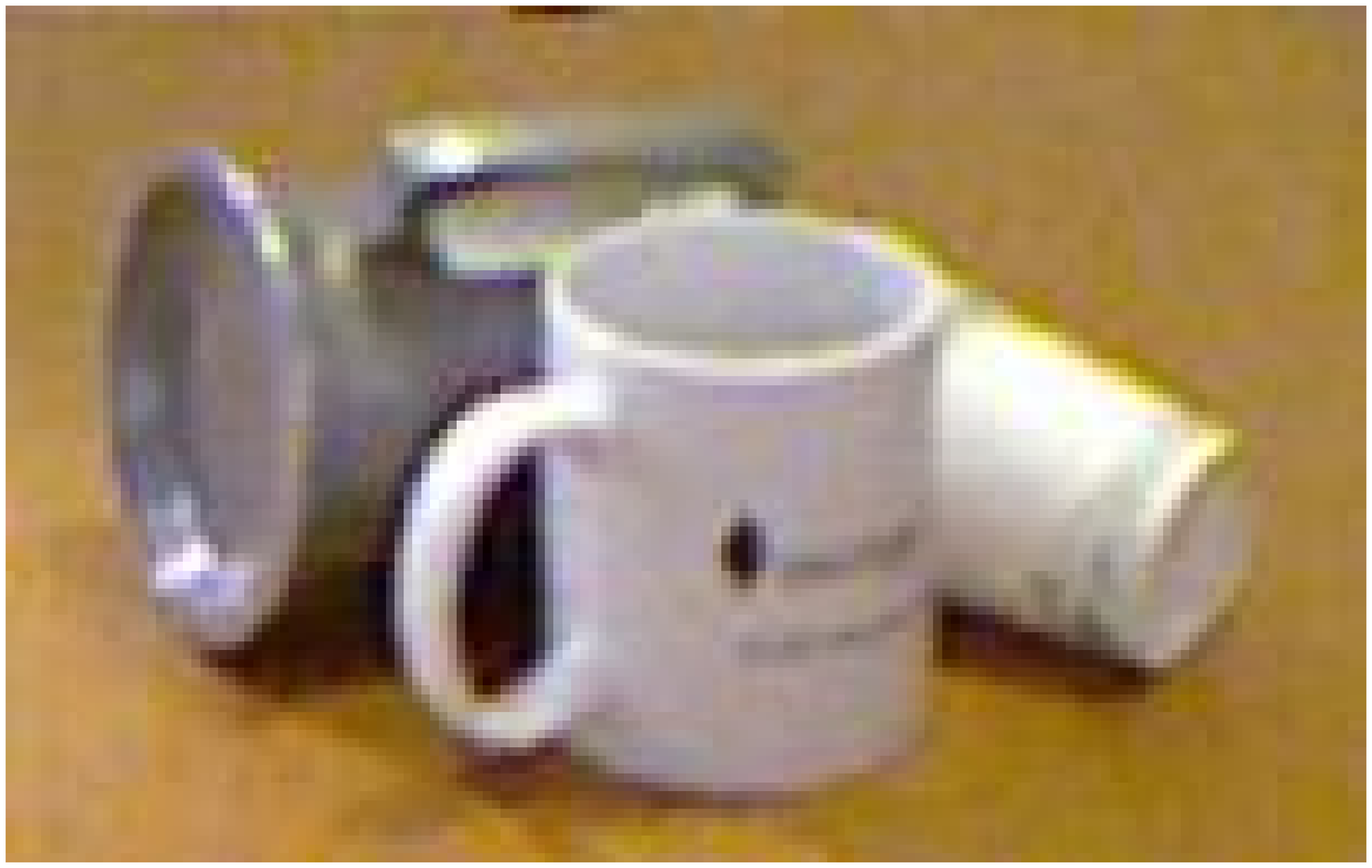}%
  \includegraphics[width=.049\textwidth]{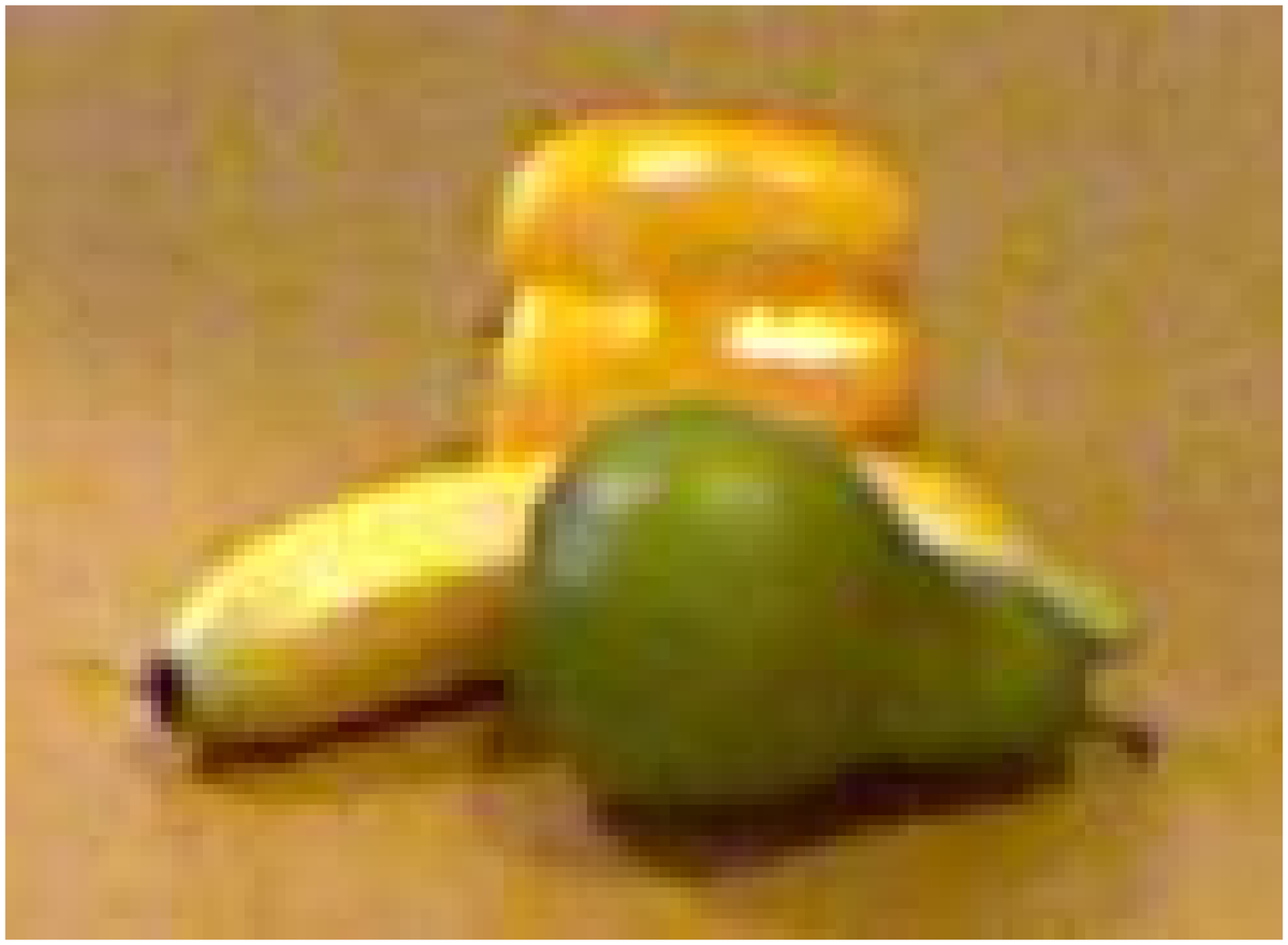}%
  \includegraphics[width=.049\textwidth]{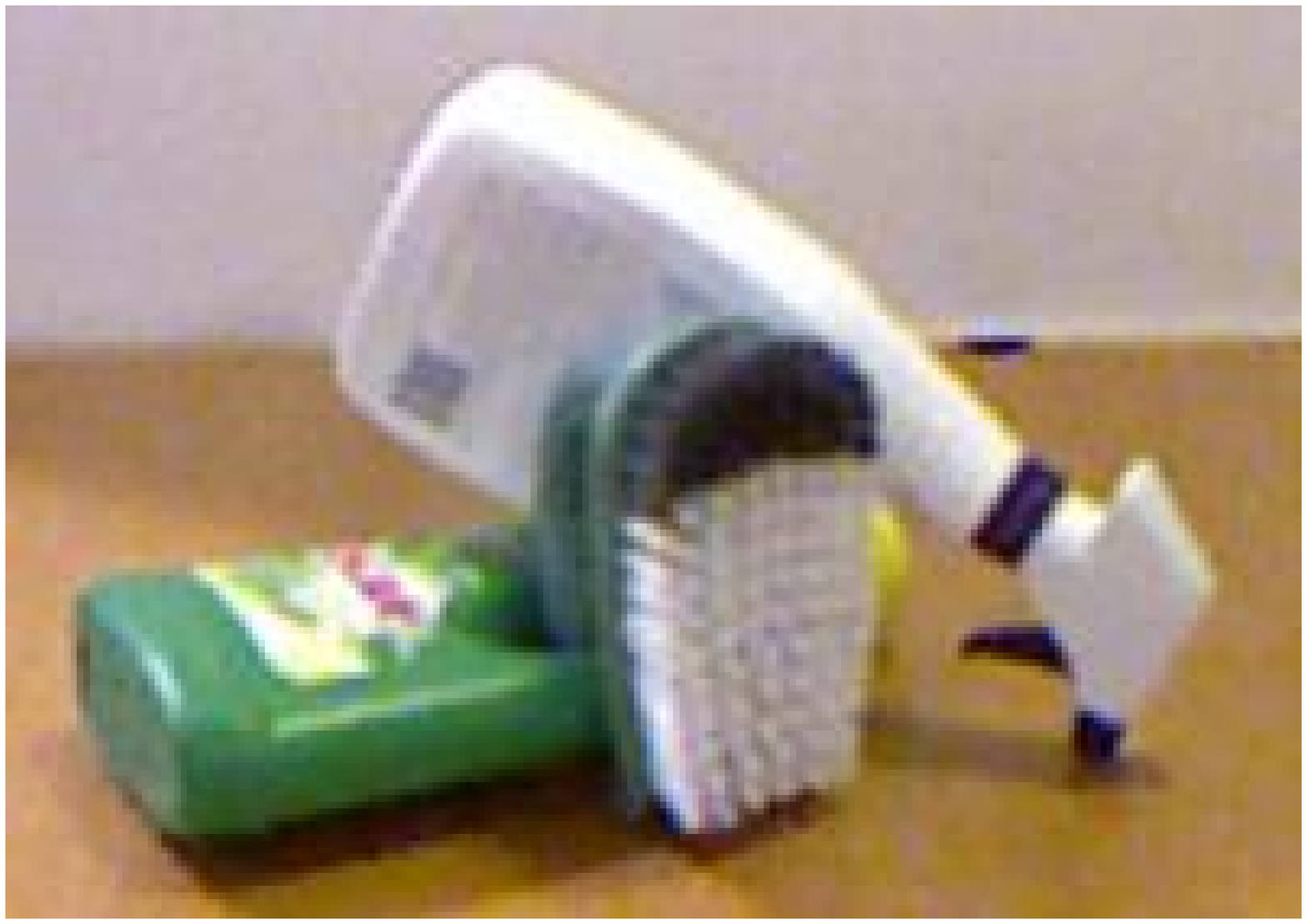}
  \\
  \includegraphics[width=.049\textwidth]{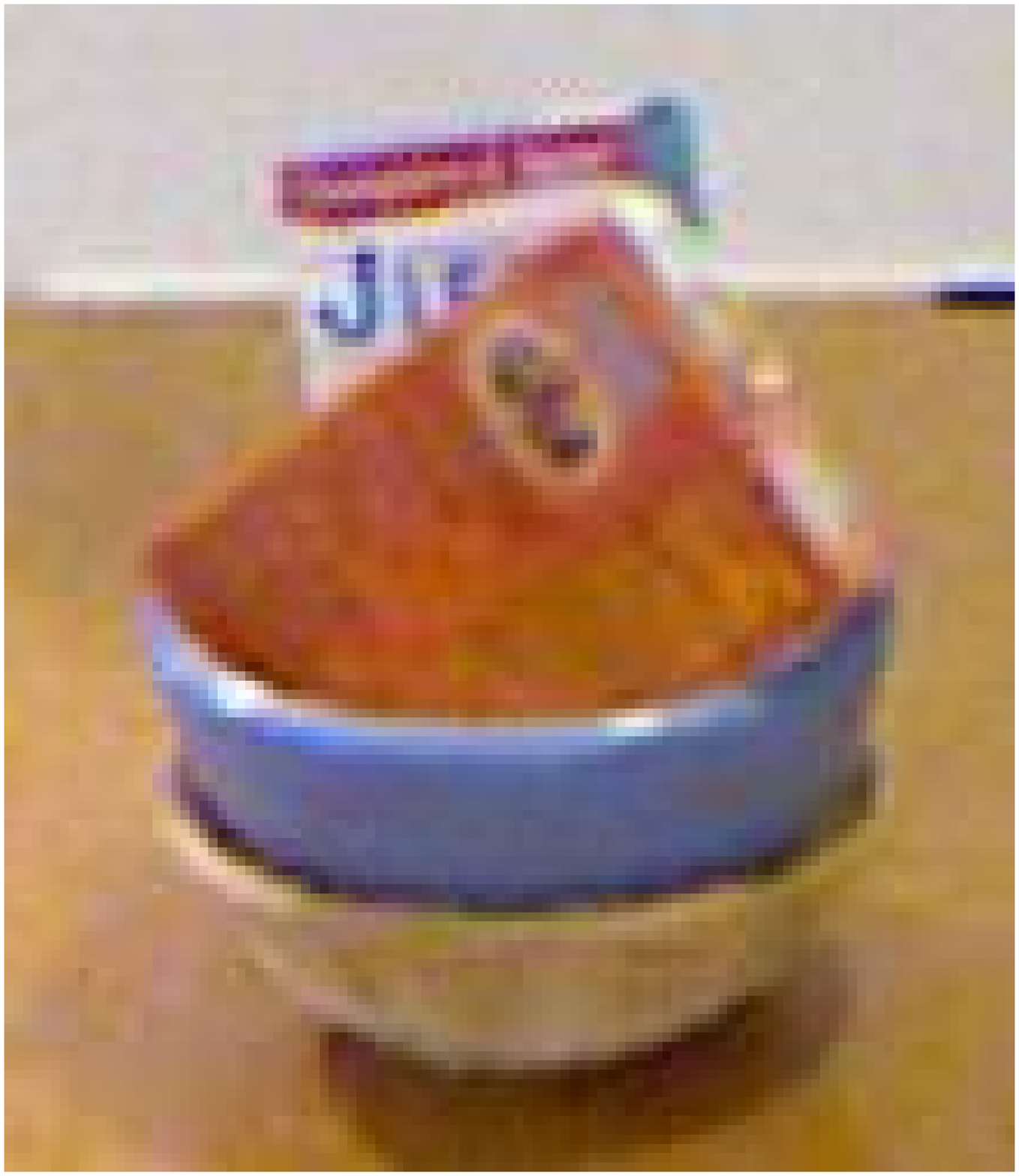}%
  \includegraphics[width=.06\textwidth]{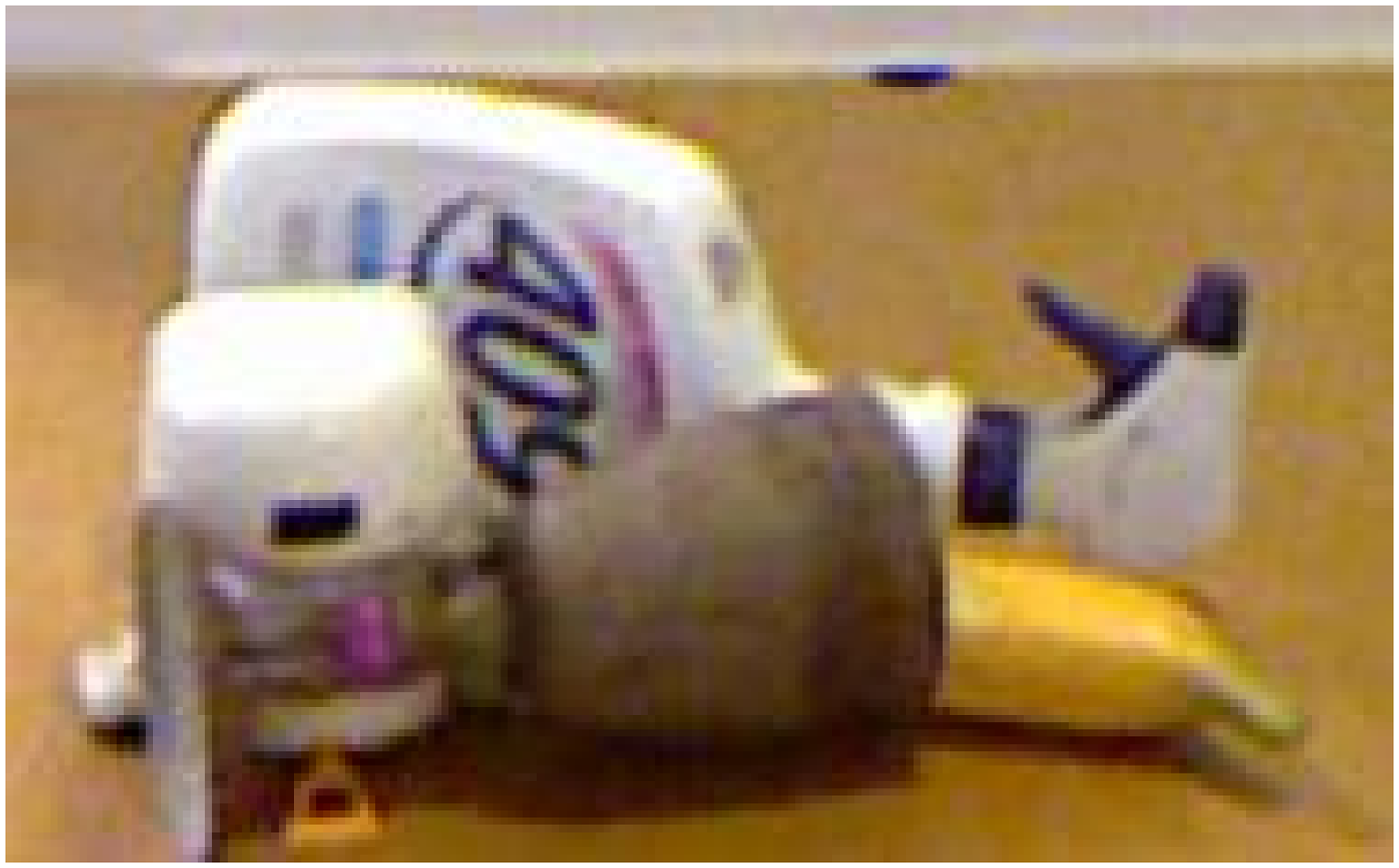}%
  \includegraphics[width=.049\textwidth]{figs/sope_samples/cloud_4_3.ps}%
  \includegraphics[width=.025\textwidth]{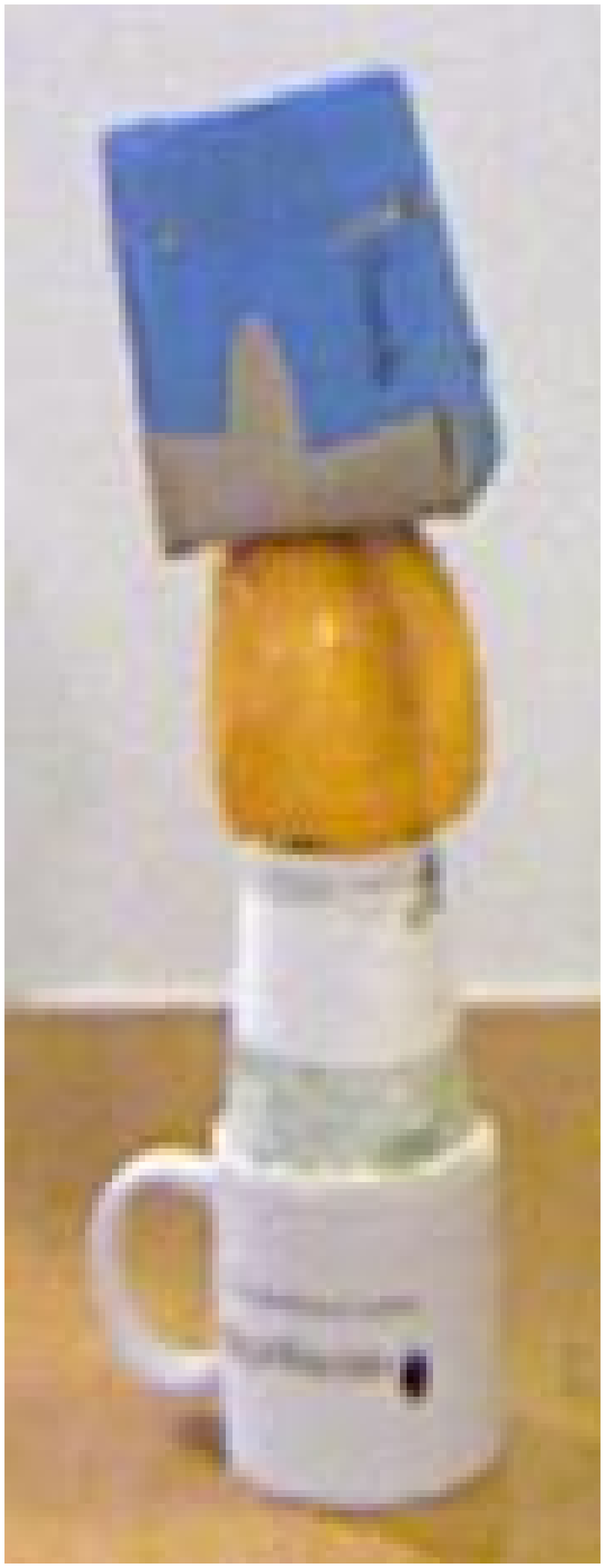}%
  \includegraphics[width=.03\textwidth]{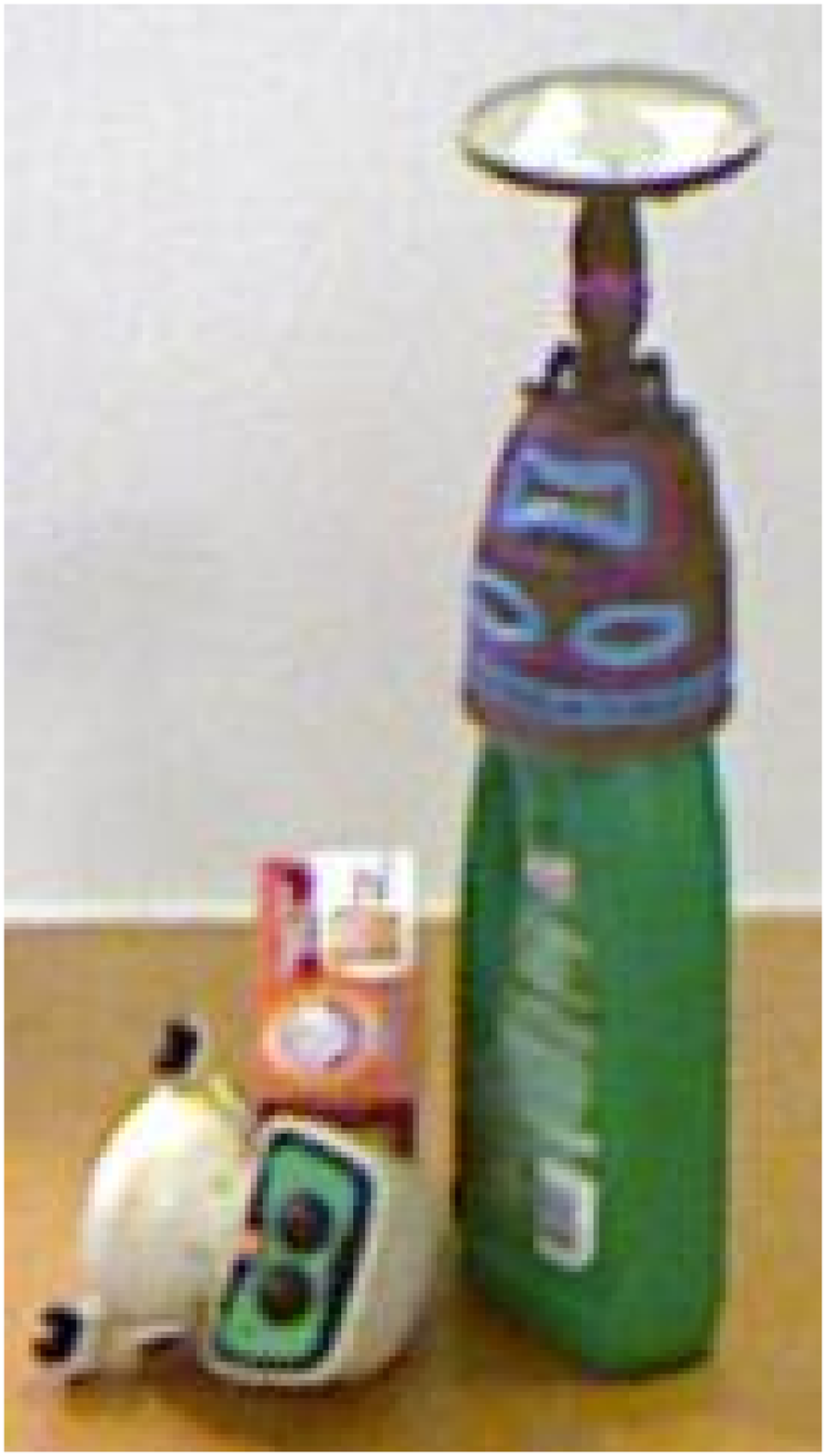}%
  \includegraphics[width=.049\textwidth]{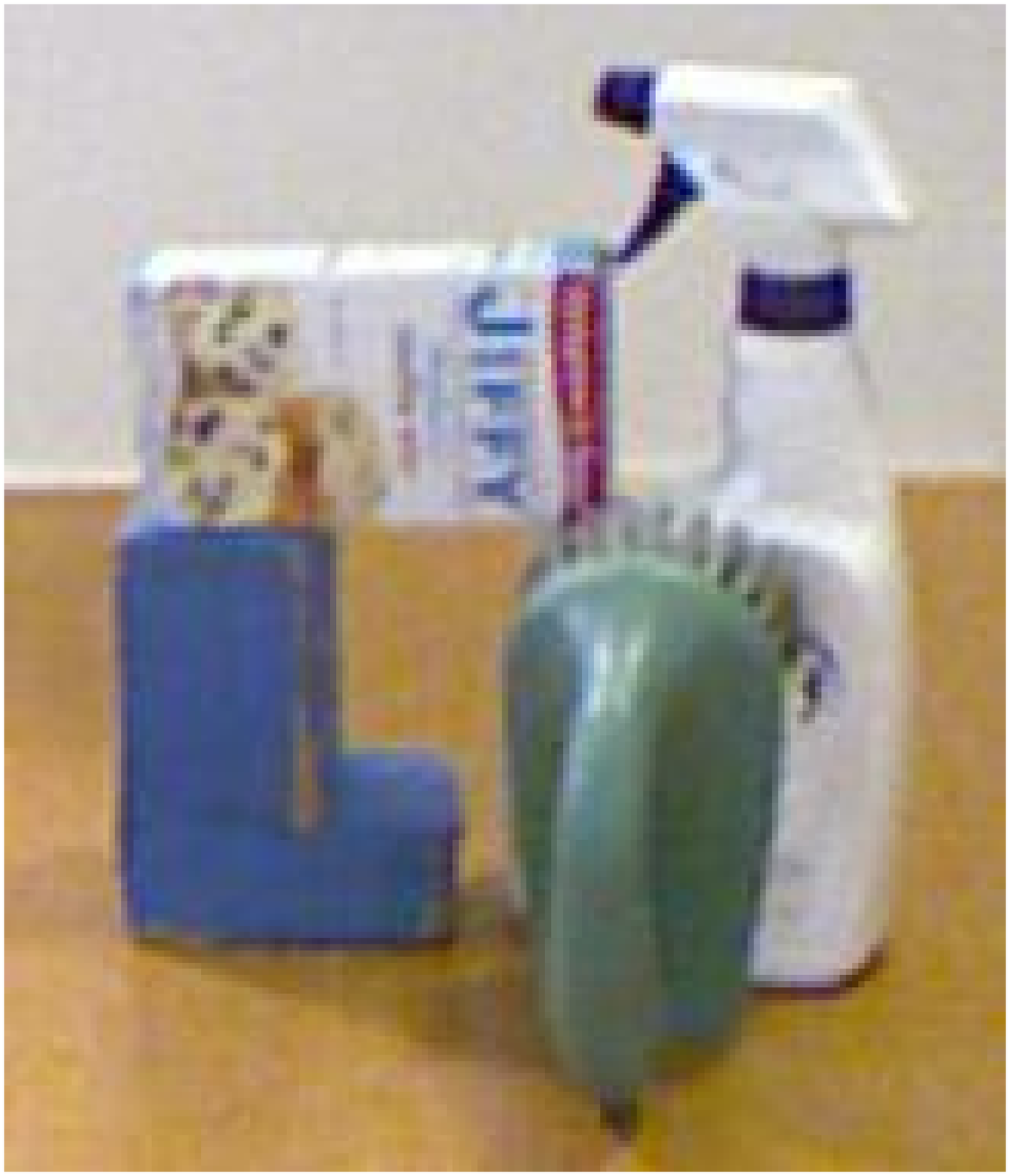}%
  \includegraphics[width=.049\textwidth]{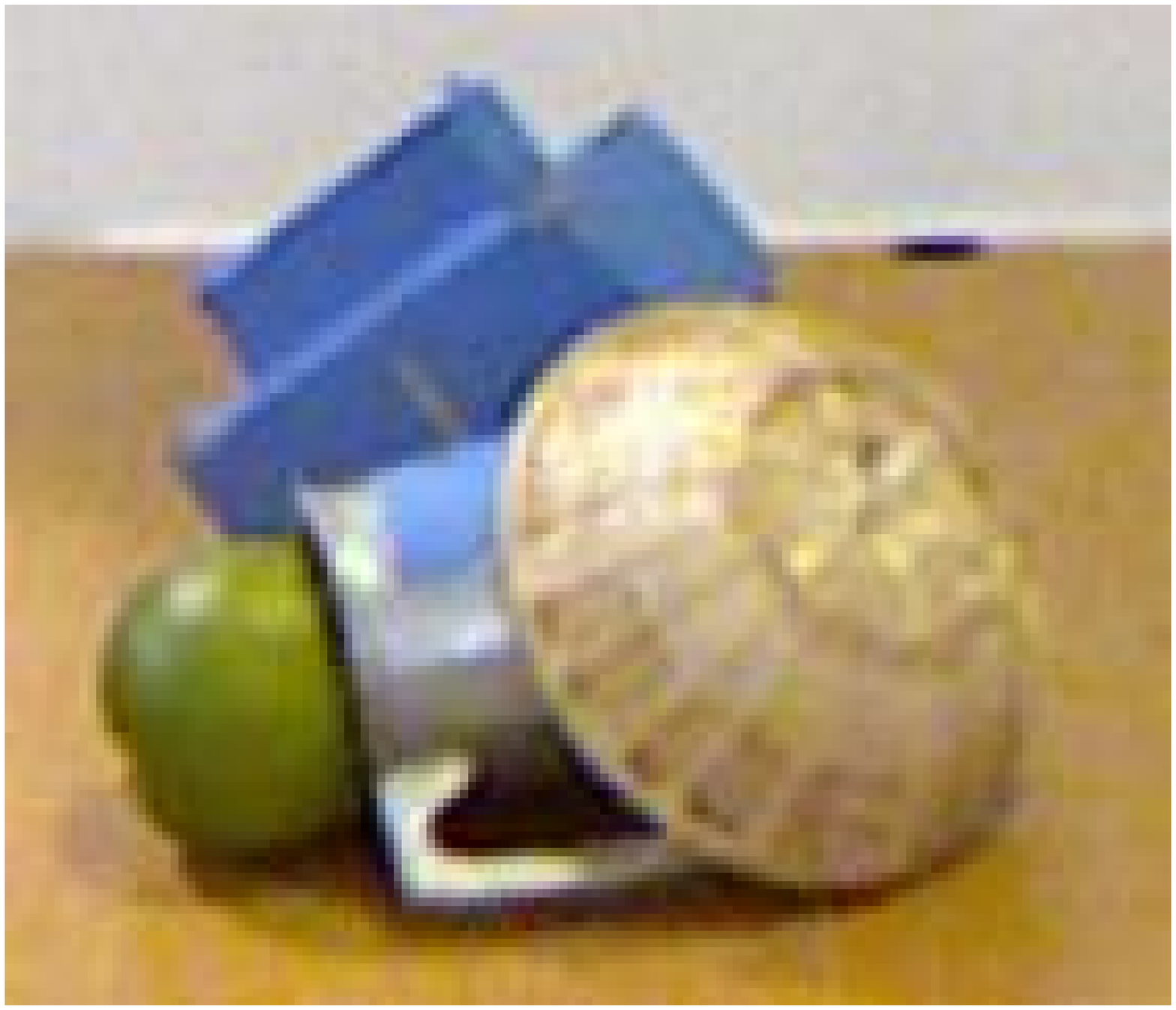}%
  \includegraphics[width=.049\textwidth]{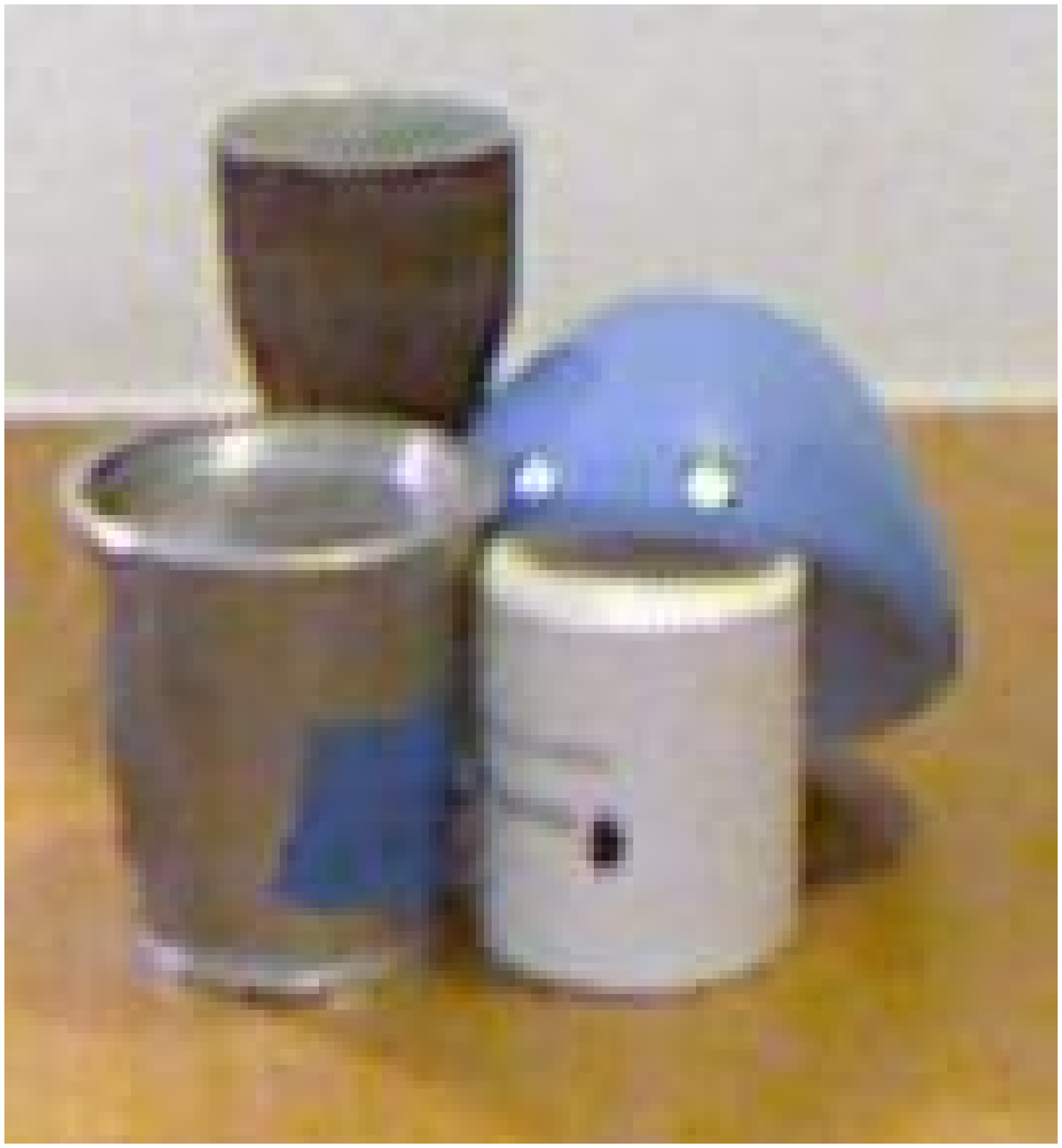}%
  \includegraphics[width=.049\textwidth]{figs/sope_samples/cloud_4_9.ps}%
  \includegraphics[width=.045\textwidth]{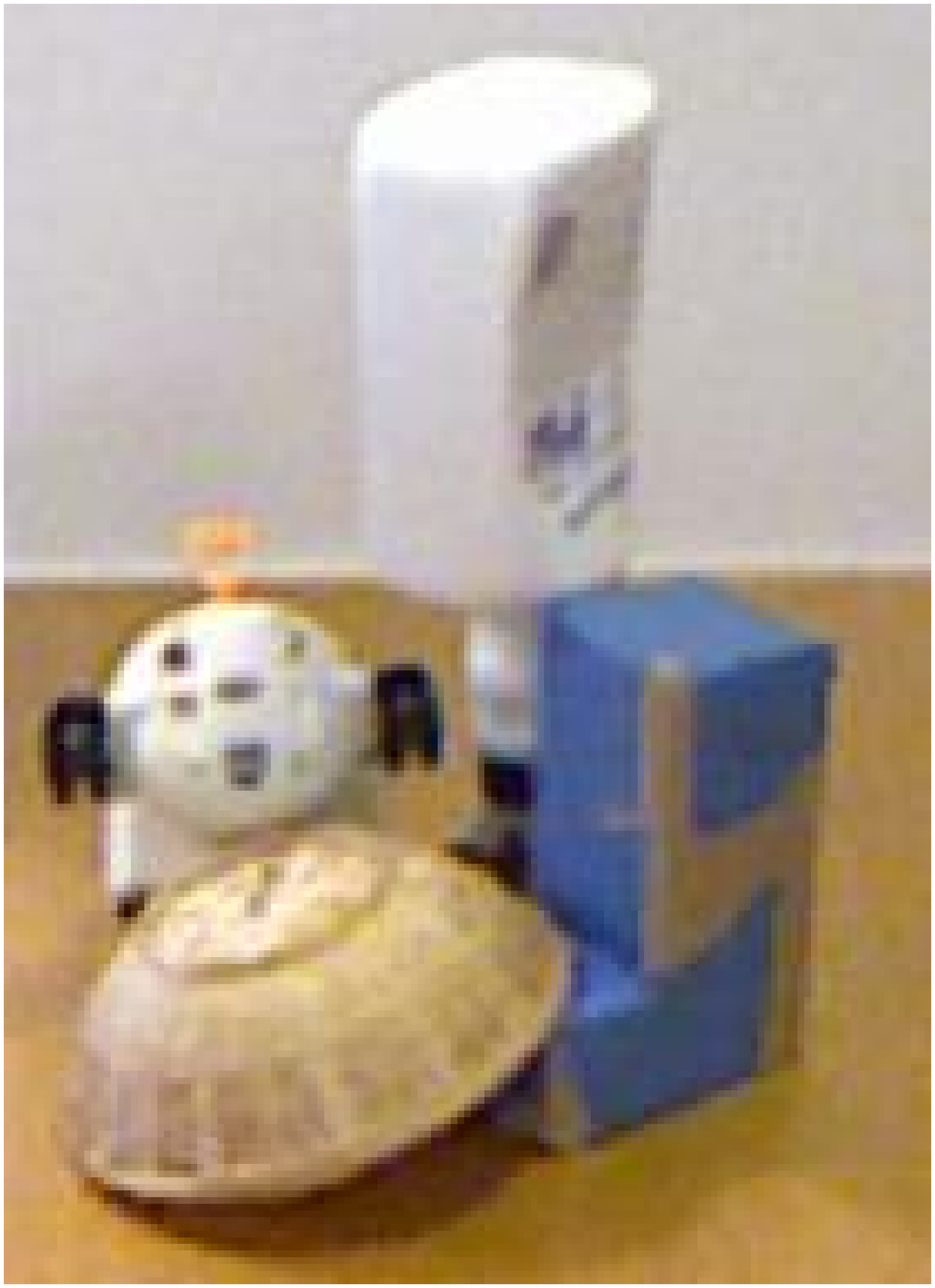}%
  \\
  \includegraphics[width=.049\textwidth]{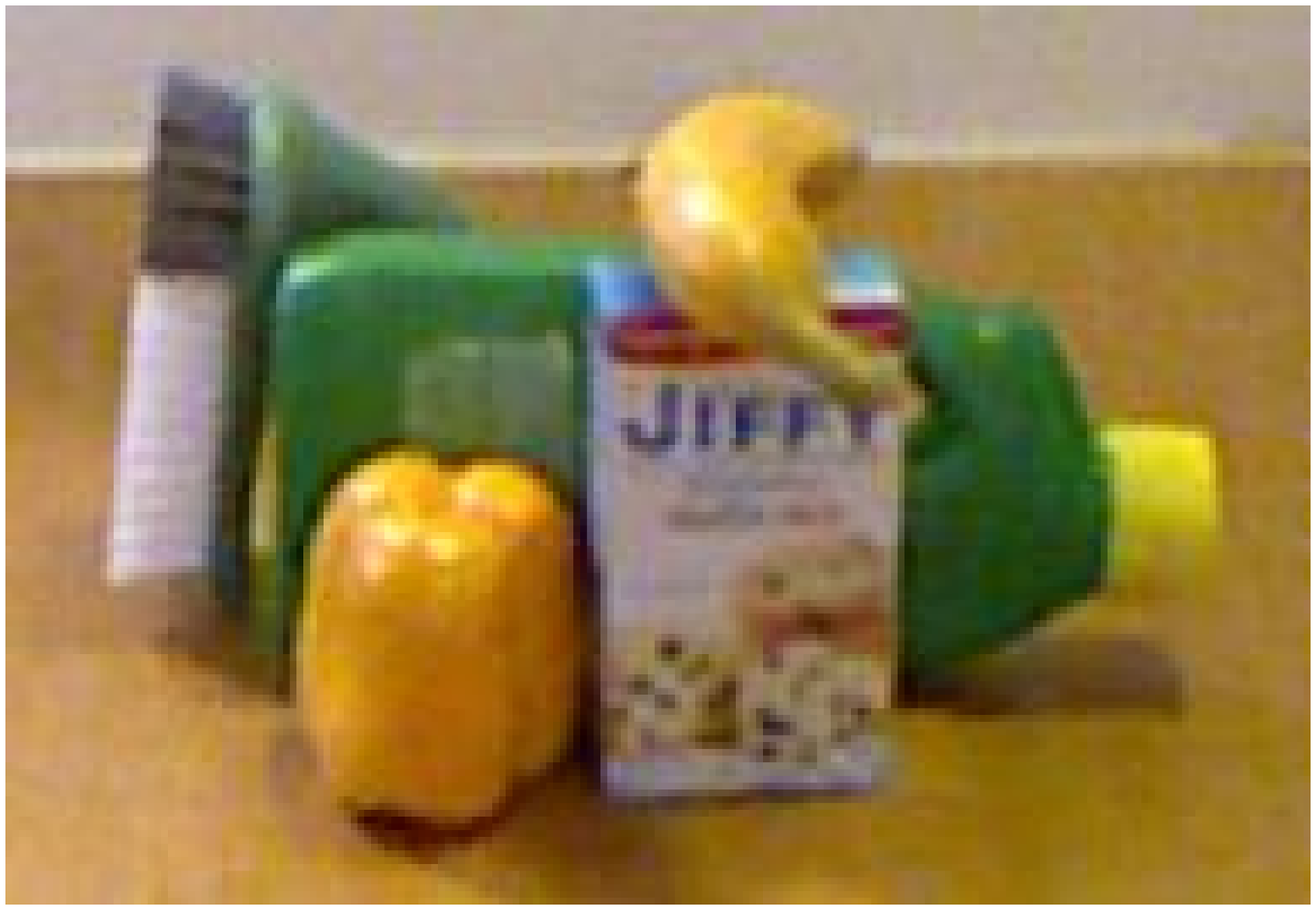}%
  \includegraphics[width=.045\textwidth]{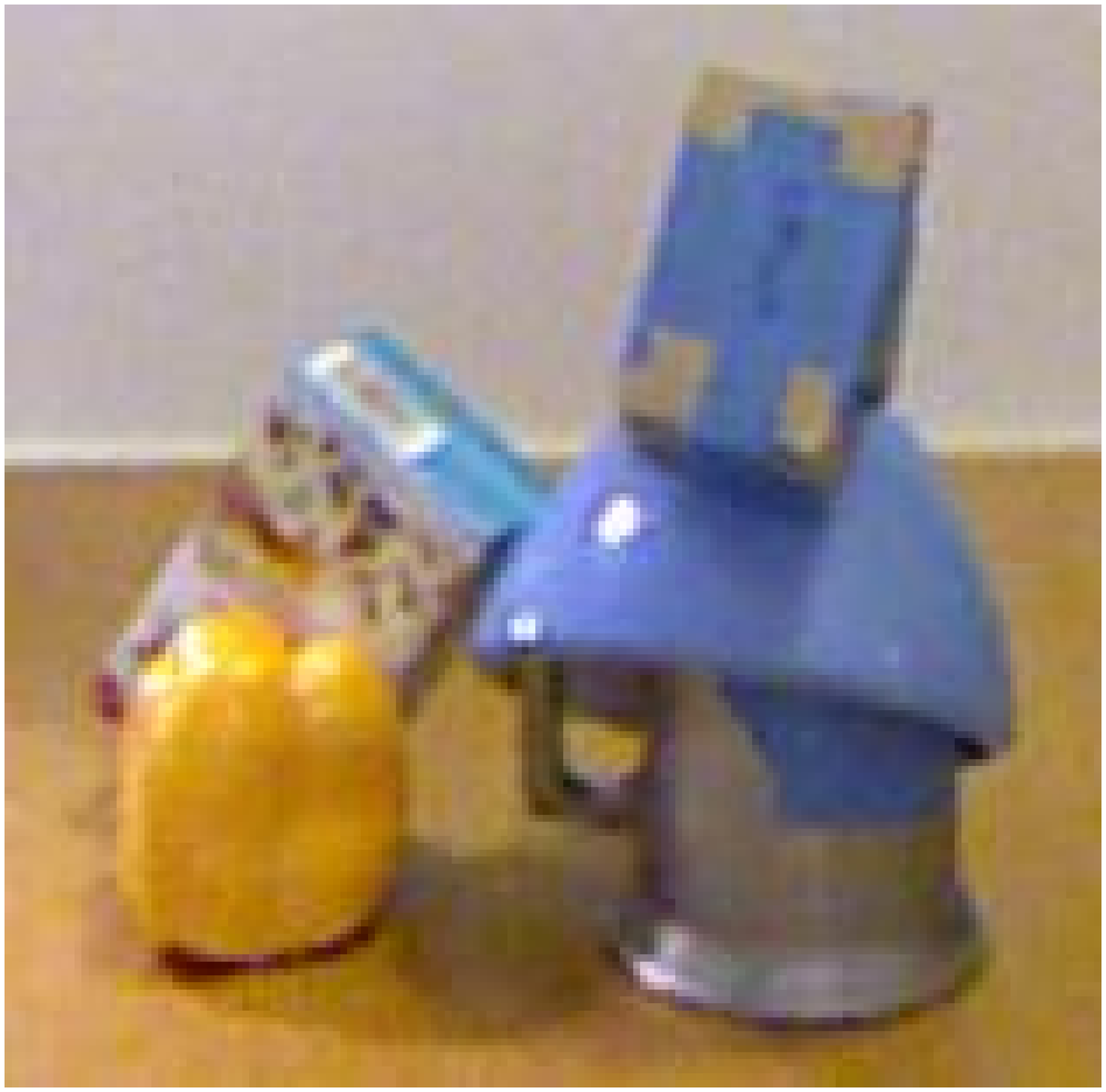}%
  \includegraphics[width=.049\textwidth]{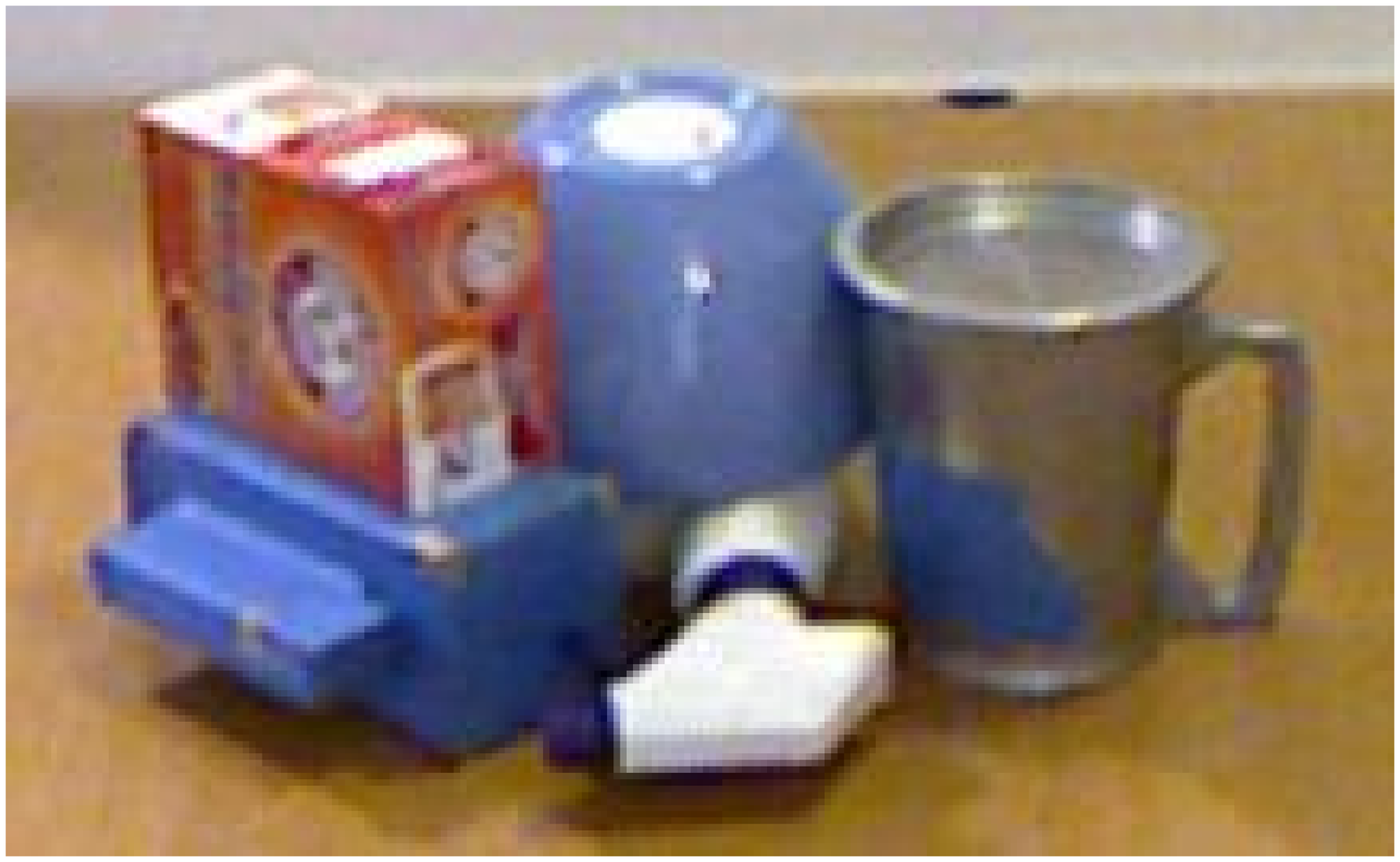}%
  \includegraphics[width=.049\textwidth]{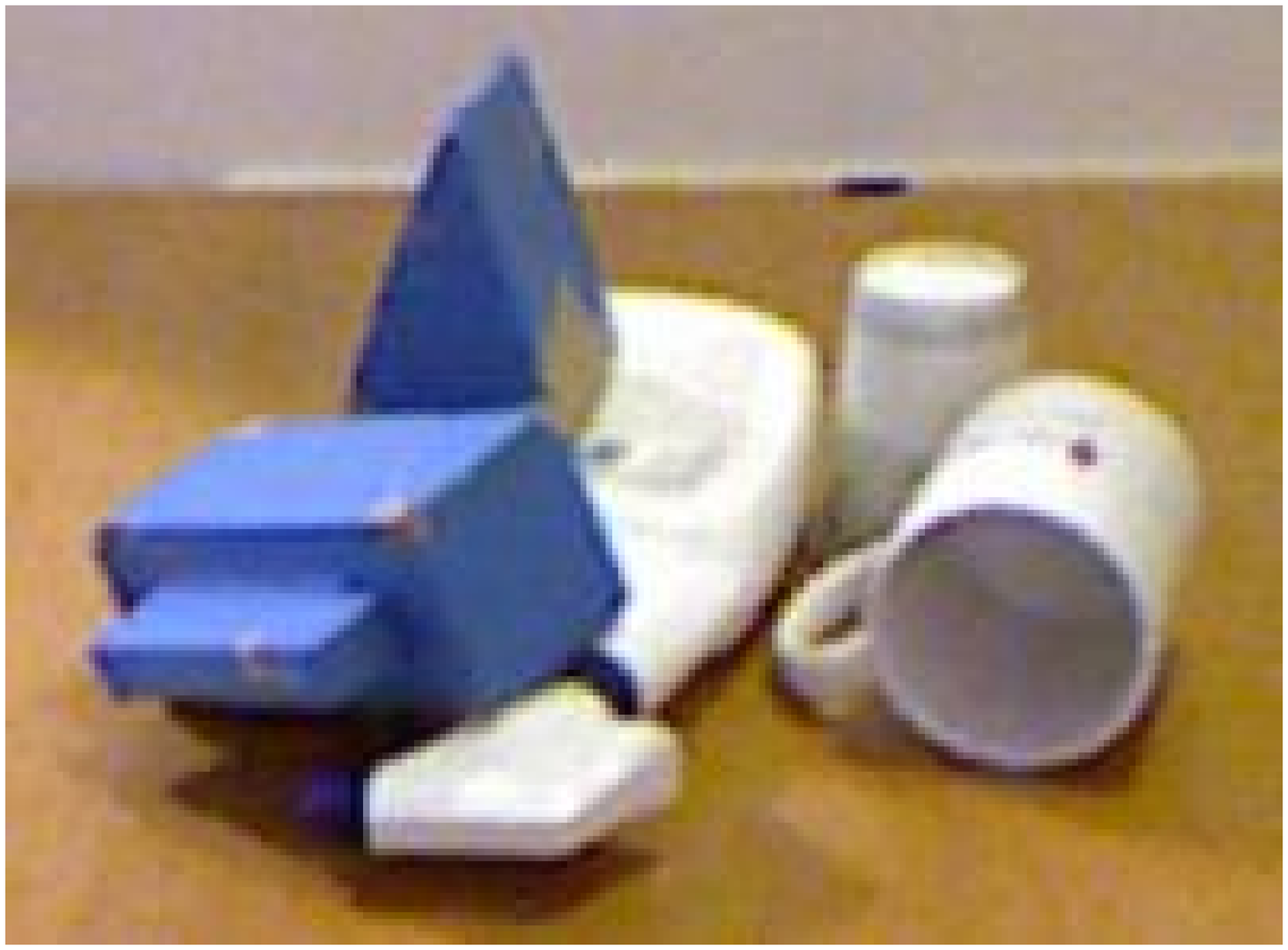}%
  \includegraphics[width=.049\textwidth]{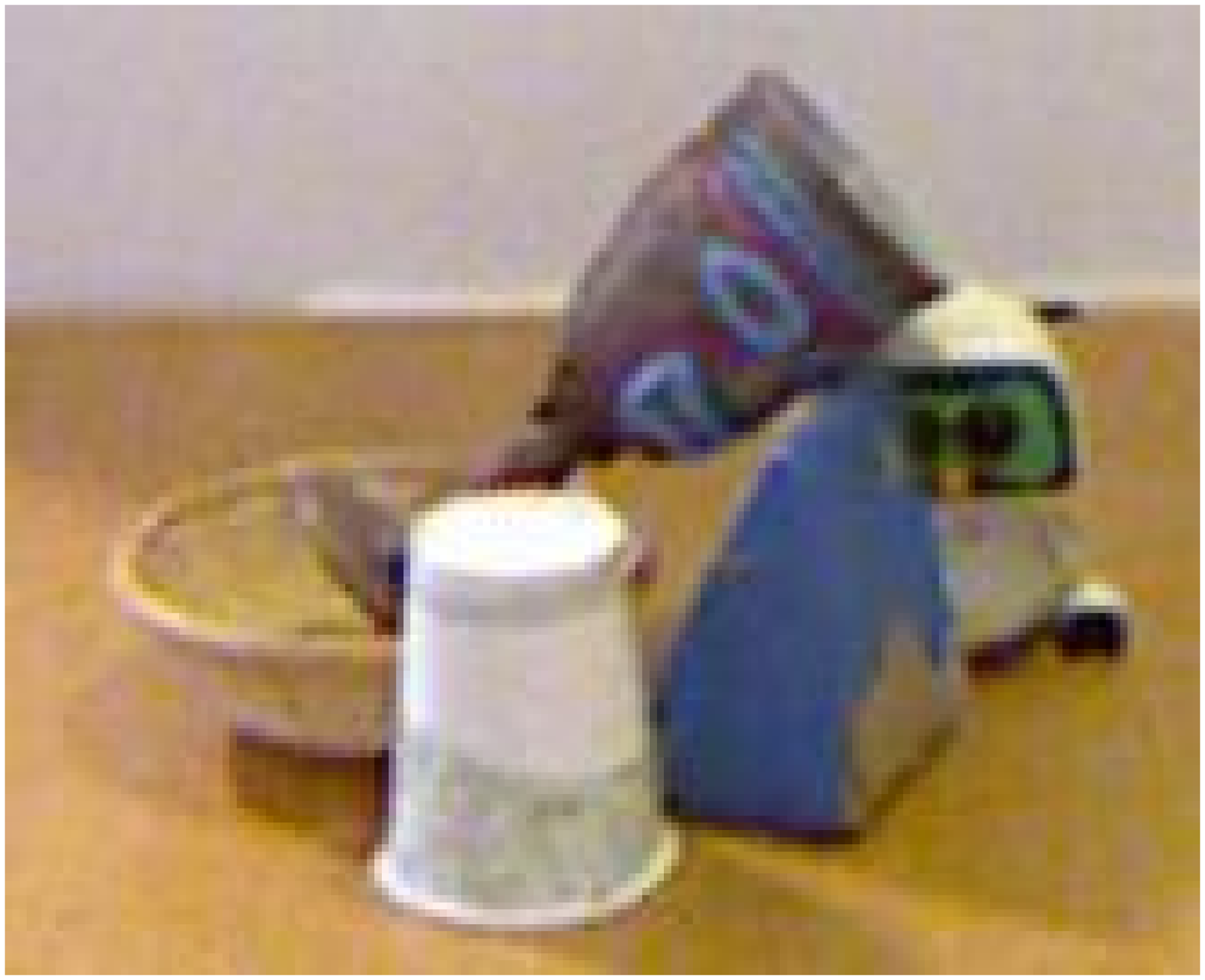}%
  \includegraphics[width=.049\textwidth]{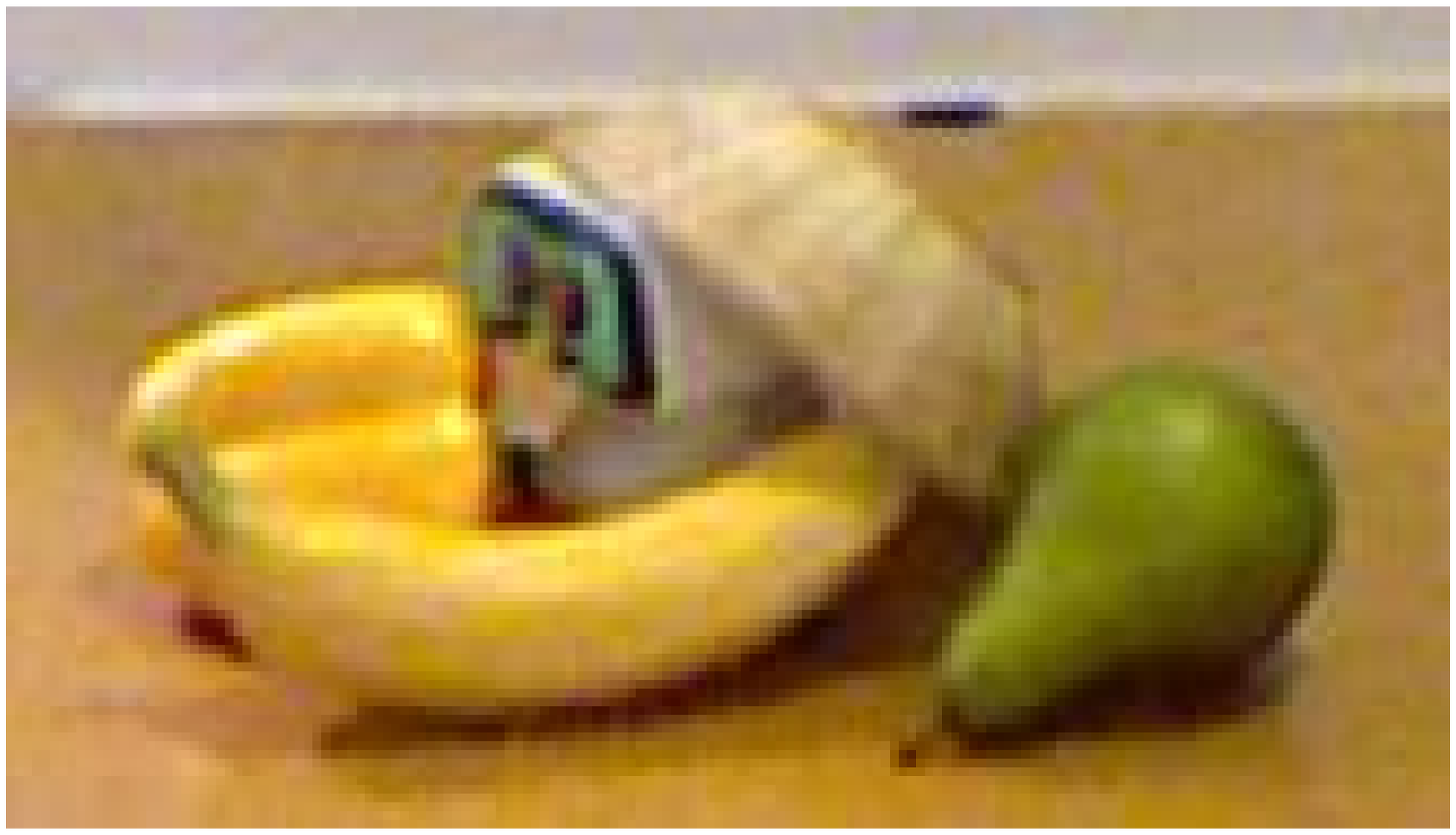}%
  \includegraphics[width=.045\textwidth]{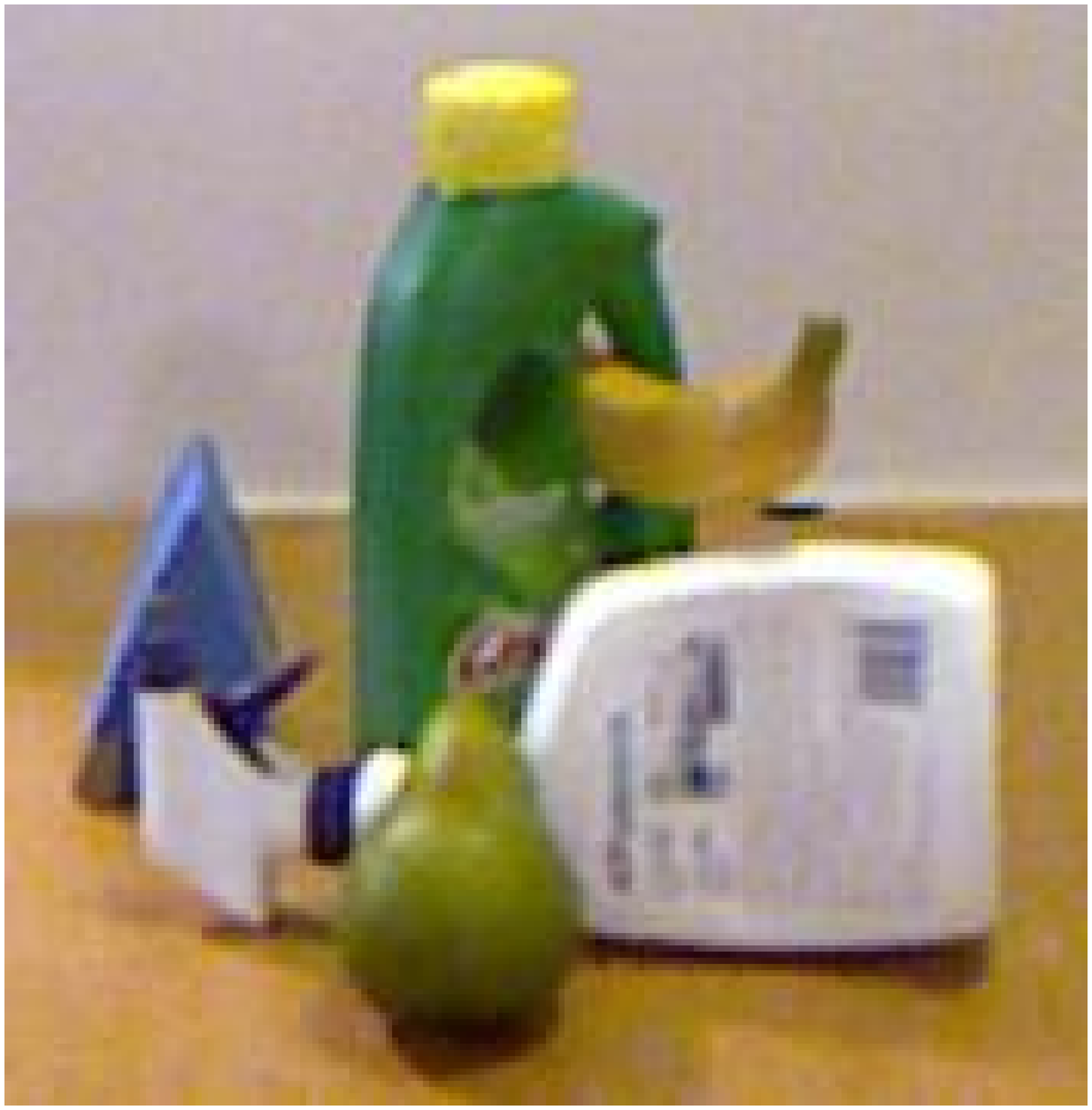}%
  \includegraphics[width=.049\textwidth]{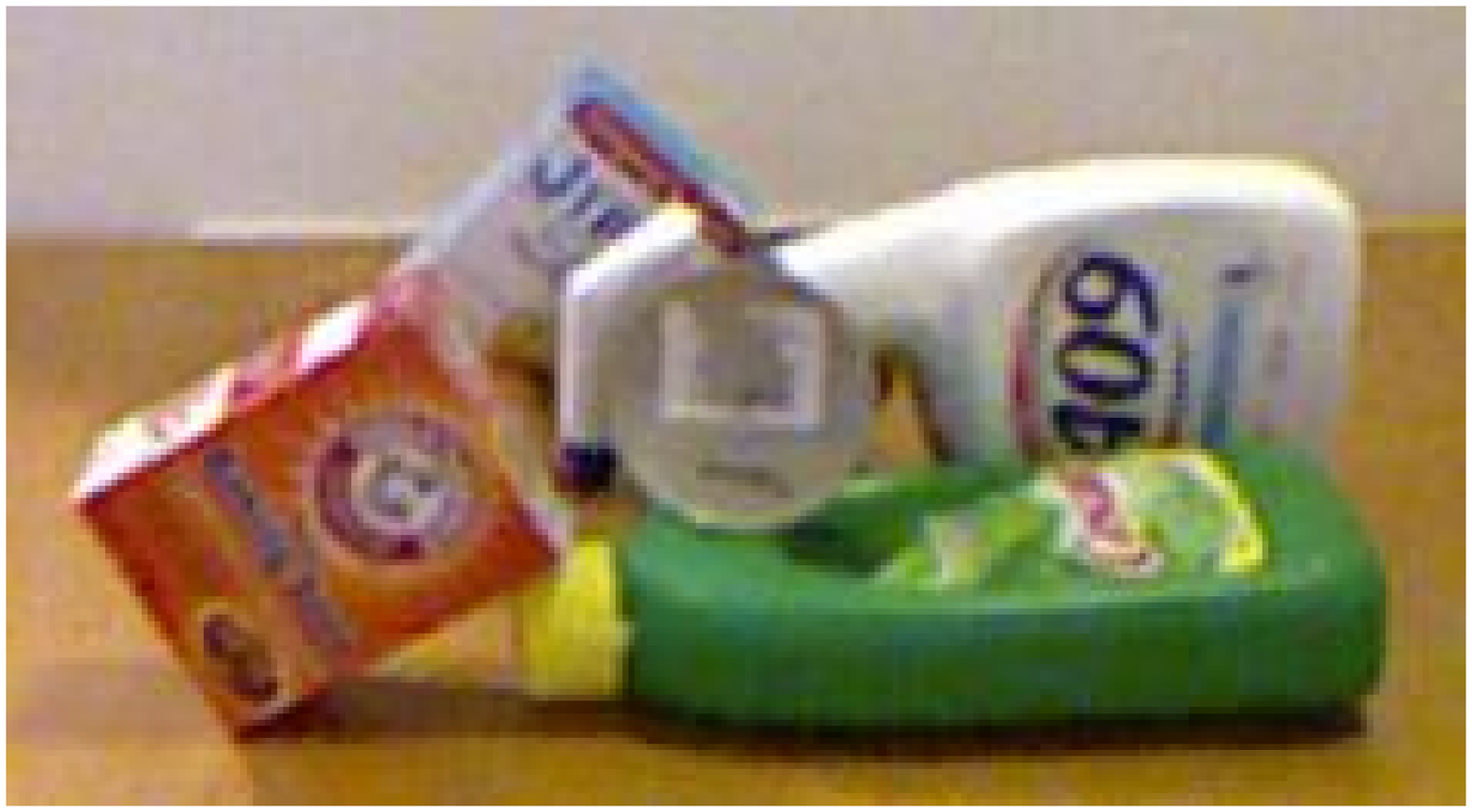}%
  \includegraphics[width=.045\textwidth]{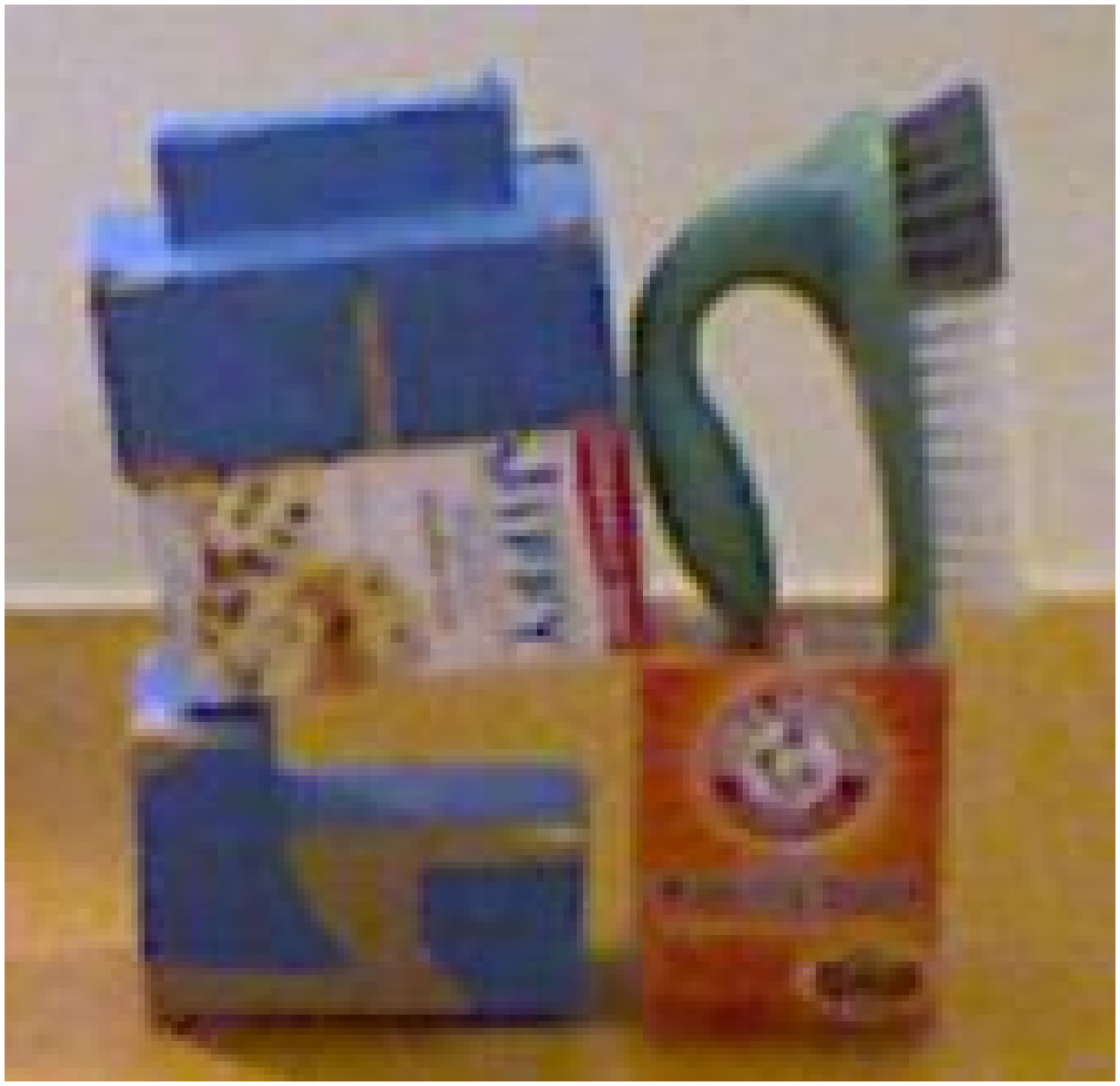}%
  \includegraphics[width=.049\textwidth]{figs/sope_samples/cloud_5_10.ps}%
  \caption{The \textit{Clutter} testing data set.}
  \label{fig:clutter_dataset}
\end{figure}

\section{Experimental Results}

We tested our object detection system on two Kinect-based data sets---the \textit{Kinect} data set from
Aldoma et. al~\cite{aldoma2012global} containing 35 models and 50 scenes, and a new, more difficult data
set with many more occlusions and object pose variations that we collected for this paper which we will
refer to as \textit{Clutter}, which contains 18 models and 30 scenes (Figure~\ref{fig:clutter_dataset}).
We used the same parameters (score component weights, number of samples, etc.) on both data sets.
In table~\ref{table:precision_recall},
we compare the precision and recall of the top scene interpretations (multi-object-placement samples) of our method
against Aldoma et. al on both data sets\footnote{
Since neither ours nor the baseline method uses colors in their object models, we considered a model placement ``correct''
for the \textit{Clutter} data set if it was within a threshold of the correct pose ($2.5cm$, $pi/16$ radians) with respect
to the model's symmetry group.  For example, we don't penalize flipping boxes front-to-back or top-to-bottom, since the
resulting difference in object appearance is purely non-geometric.  For the \textit{Kinect} data set, we used the same correctness
measure as the baseline method (RMSE between model in true pose and estimated pose), with a threshold of $1cm$.}.

\begin{table}[h]
\centering
\begin{tabular}{|c|c|c|c|c|c|c|}
\hline &
\multicolumn{2}{|c|}{\textit{\textbf{this paper (BPA)}}} &
\multicolumn{2}{|c|}{\textit{this paper (ICP)}} &
\multicolumn{2}{|c|}{\textit{Aldoma et. al~\cite{aldoma2012global}}} \\
\cline{2-7}
& precision & recall & precision & recall & precision & recall \\
\hline
\textit{Kinect} & 89.4 & \textbf{86.4} & 71.8 & 71.0 & \textbf{90.9} & 79.5 \\
\textit{Clutter} & \textbf{83.8} & \textbf{73.3} & 73.8 & 63.3 & 82.9 & 64.2 \\
\hline
\end{tabular}
\caption{A comparison of precision and recall.}
\label{table:precision_recall}
\end{table}

\begin{table}[h]
\centering
\begin{tabular}{c|c|c|c|c|c|c|}
\cline{2-7}
\# samples & 1 & 2 & 3 & 5 & 10 & 20 \\
\cline{2-7}
recall & 73.3 & 77.5 & 80.0 & 80.8 & 83.3 & 84.2 \\
\cline{2-7}
\end{tabular}
\caption{Recall on the \textit{Clutter} data set as a function of the number of scene interpretation samples.}
\label{table:multi_mope_recall}
\end{table}

Our algorithm (with BPA) achieves state-of-the art recall performance on both data sets.
When multiple scene interpretations are considered, we achieve even higher recall rates (Table~\ref{table:multi_mope_recall}).
Our precisions are similar to the baseline method (slightly higher on \textit{Clutter}, slightly lower on \textit{Kinect}).
We were unable to train discriminative feature models on the \textit{Kinect} data set, because the original training
scans were not provided.  Training on scenes that are more similar to the cluttered test scenes is also likely to improve
precision on the \textit{Clutter} data set, since each training scan contained only one, fully-visible object.

\subsection{BPA vs. ICP}

We evaluated the benefits of our new alignment method, BPA, in two ways.  First, we compared it to
ICP by replacing the BPA alignment step in round 2 with an ICP alignment step\footnote{Both ICP and BPA
used the same point correspondences; the only difference was that BPA incorporated point feature orientations,
while ICP used only their positions.}.  This resulted in a drop of $10\%$ in both precision and recall on the
\textit{Clutter} data.

For a second test of BPA, we initialized 50 model placements by adding random Gaussian noise to the ground truth poses
for each object in each scene of the \textit{Clutter} data set.  Then, we ran BPA and ICP for 20 iterations on each of
the model placements\footnote{In other words, we repeated the alignment step of round 2 twenty times, regardless of whether the
total score improved.}.
We then computed the average of the minimum pose errors in each alignment trial, where the minimum
at time $t$ in a given trial is computed as the minimum pose error from step $1$ to step $t$.  (The justification for this
measure is that this is approximately what the ``accept if score improves'' step of round 2 is doing.)  As shown in
figure~\ref{fig:bpa_vs_icp}, the pose errors decrease much faster in BPA.

\begin{figure}[h]
  \includegraphics[width=.24\textwidth]{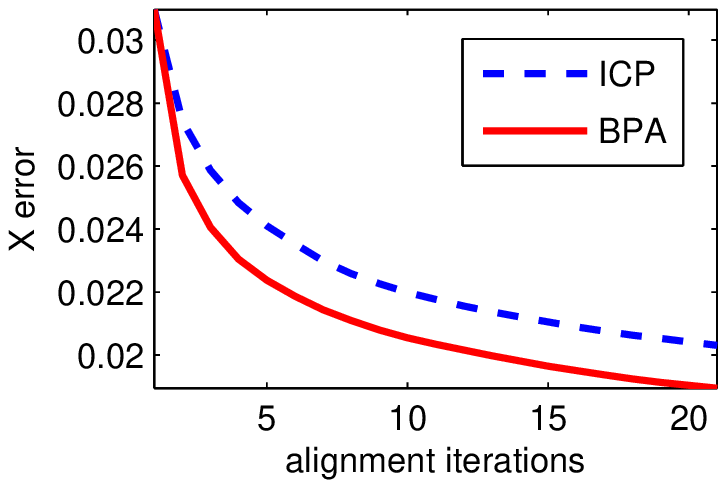}%
  \hfill
  \includegraphics[width=.24\textwidth]{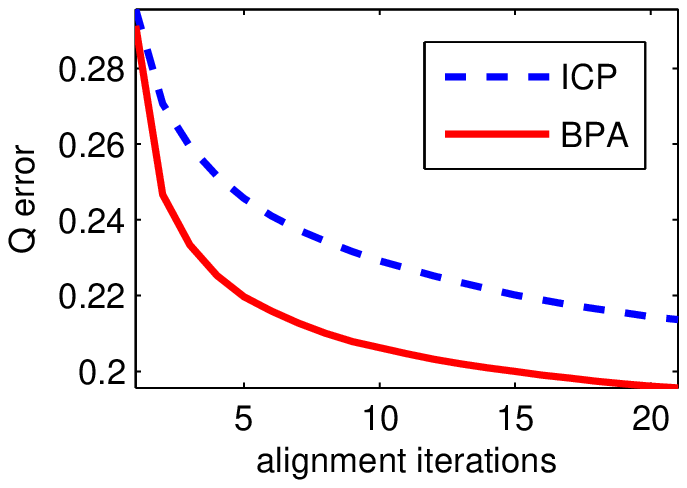}
  \caption{Comparing BPA with ICP.  (Left) The average of the minimum position errors in each alignment trial.
    (Right) The average of the minimum orientation errors in each alignment trial.}
  \label{fig:bpa_vs_icp}
\end{figure}

\section{Related Work}

Since the release of the Kinect in 2010, much progress has been made on 3-D object detection in
cluttered RGB-D scenes.  The two most succesful systems to date are Aldoma et. al~\cite{aldoma2012global}
and Tang et. al~\cite{tang2012textured}.  Aldoma's system is purely geometric, and uses SHOT
features~\cite{tombari2010unique} for model-scene correspondences. It relies heavily on pose
clustering of feature correspondences to suggest model placements\footnote{This is essentially
a sparse version of the Hough transform~\cite{ballard1981generalizing}, which is limited by the
number of visible features on an object, and is why their recall rates tend to be lower than in
our system for objects that are heavily occluded.}.  The main contribution of Aldoma's system is
that they jointly optimize multiple model placements for consistency, which inspired our own
multiple object detection system.

Tang's detection system uses both geometry and image features, and placed first in the
ICRA 2011 Solutions in Perception instance recognition challenge.  Their system relies
heavily on being able to segment objects in the scene from one another, and most of the
effort is spent on combining geometry and image features for classification of
scene segments.  It is unclear how well the system would perform if such segmentations are
not easy to obtain, as is the case in our new \textit{Clutter} data set.

The Bingham distribution was first used for 3-D cluttered object detection in Glover et.
al~\cite{Glover11}.  However, that system was incomplete in that it lacked any alignment step,
and differs greatly from this work because it did not use feature correspondences.

\section{Conclusion and Future Work}

We have presented a system for 3-D cluttered object detection which uses a new
alignment method called Bingham Procrustean Alignment (BPA) to improve detections in highly
cluttered scenes, along with a new RGB-D data set which contains much more clutter and
pose variability than existing data sets.  Our system relies heavily on geometry, and
will clearly benefit from image and color models, such as in Tang et. al~\cite{tang2012textured}.
Our \textit{Clutter} data set, while challenging, contains zero ambiguity, in that a human
could easily detect all of the objects in their correct poses, given enough time to study the models.
An important direction of future work is to handle ambiguous scenes, where the parts of objects that are
visible are insufficient to perform unique alignments, and instead one ought to
return distributions over possible model poses.  In early experiments we have performed
on this problem, the Bingham distribution has been a useful tool for representing orientation ambiguity.




\section*{APPENDIX}

\textbf{The Bingham Distribution.}
The Bingham distribution is commonly used to represent uncertainty on 3-D rotations
(in unit quaternion form)~\cite{antone2001robust, bingham_antipodally_1974, Glover11}.  For quaternions,
its density function (PDF) is given by
\begin{equation} \label{eq:bingham_pdf}
p(\vec{q}; \Lambda,V) = \frac{1}{F} \exp \left\{ \sum_{i=1}^3 \lambda_i (\vec{v_i}^T \vec{q})^2 \right\}
\end{equation}
where $F$ is a normalizing constant so that the distribution integrates to one over the surface
of the unit hypersphere $\S3$, the $\lambda$'s are non-positive ($\leq 0$) concentration parameters,
and the $\vec{v_i}$'s are orthogonal direction vectors.

\textbf{Product of Bingham PDFs.}
The product of two Bingham PDFs is given by adding their exponents:
\begin{equation} \label{eq:bingham_mult}
\begin{split}
f(\vec{q}; &\Lambda_1,V_1) f(\vec{q};\Lambda_2,V_2) \\
&= \frac{1}{F_1 F_2} \exp \left\{ \vec{q}^T ( \sum_{i=1}^3 \lambda_{1i} \vec{v_{1i}} \vec{v_{1i}}^T +
\lambda_{2i} \vec{v_{2i}} \vec{v_{2i}}^T ) \vec{q} \right\} \\
&= \frac{1}{F_1 F_2} \exp \left\{ \vec{q}^T (C_1 + C_2) \vec{q} \right\}
\end{split}
\end{equation}
After computing the sum $C = C_1 + C_2$ in the exponent of
equation~\ref{eq:bingham_mult}, we compute the eigenvectors and eigenvalues of $C$, and then
subtract off the lowest magnitude eigenvalue from each spectral
component, so that only the eigenvectors corresponding to the largest
$3$ eigenvalues (in magnitude) are kept, and $\lambda_i \leq 0 \; \forall i$ (as in
equation~\ref{eq:bingham_pdf}).  We use the open-source Bingham Statistics
Library\footnote{http://code.google.com/p/bingham} to look up the normalization constant.

\textbf{Estimating the Uncertainty on Feature Orientations.}
When we extract surface features from depth images, we estimate their 3-D orientations
from their normals and principal curvature directions by computing the rotation matrix
$R = \left[ \vec{n} \;\; \vec{p} \;\; \vec{p'} \right]$, where $\vec{n}$ is the normal vector,
$\vec{p}$ is the principal curvature vector, and $\vec{p'}$ is the cross product of 
$\vec{n}$ and $\vec{p}$.  We take the quaternion associated with this rotation matrix
to be the feature's estimated orientation.

These orientation estimates may be incredibly noisy, not only due to typical sensing noise, but
because on a flat surface patch the principal curvature direction is undefined and will be chosen completely
at random.  Therefore it is extremely useful to have an estimate of the uncertainty on each feature orientation
that allows for the uncertainty on the normal direction to differ from the uncertainty on the principal curvature
direction.  Luckily, the Bingham distribution is well suited for this task.

To form such a Bingham distribution, we take the quaternion associated with $R$ to be the mode of the distribution,
which is orthogonal to all the $\vec{v_i}$ vectors.  Then, we set $\vec{v_3}$ to be the quaternion associated with
$R' = \left[ \vec{n} \;\; -\vec{p} \;\; -\vec{p'} \right]$, which has the same normal as the mode, but reversed principal
curvature direction.  In quaternion form, reversing the principal curvature is equivalent to the mapping:
\[ (q_1, q_2, q_3, q_4) \;\; \rightarrow \;\; (-q_2, q_1, q_4, -q_3) \;\;.\]
We then take $\vec{v_1}$ and $\vec{v_2}$ to be unit vectors orthogonal to the mode and $\vec{v_3}$ (and each other).
Given these $\vec{v_i}$'s, the concentration parameters $\lambda_1$ and $\lambda_2$ penalize deviations in the normal vector,
while $\lambda_3$ penalizes deviations in the principal curvature direction.  Therefore,
we set $\lambda_1 = \lambda_2 = \kappa$ (we use $\kappa = -100$ in all our experiments in this paper), and we use the heuristic
$\lambda_3 = \max \{ 10(1 - c_1/c_2), \kappa \}$, where $c_1/c_2$ is the ratio of the principal curvature eigenvalues\footnote{
The principal curvature direction is computed with an eigen-decomposition of the covariance of normal vectors in a neighborhood
about the feature.}.  When the surface is completely flat, $c_1 = c_2$ and $\lambda_3 = 0$, so the resulting Bingham
distribution will be completely uniform in the principal curvature direction.  When the surface is highly curved, $c_1 \gg c_2$,
so $\lambda_3$ will equal $\kappa$, and deviations in the principal curvature will be penalized just as much as deviations in
the normal.





\bibliographystyle{plain}
\bibliography{paper}

\end{document}